\documentclass{article} %
\usepackage{iclr2026_conference,times}

\usepackage{amsmath,amsfonts,bm}

\newcommand{\figleft}{{\em (Left)}}

\newcommand{\figright}{{\em (Right)}}

\def\eqref#1{equation~\ref{#1}}

\def\1{\bm{1}}

\DeclareMathAlphabet{\mathsfit}{\encodingdefault}{\sfdefault}{m}{sl}
\SetMathAlphabet{\mathsfit}{bold}{\encodingdefault}{\sfdefault}{bx}{n}

\def\gA{{\mathcal{A}}}

\def\gL{{\mathcal{L}}}

\def\gN{{\mathcal{N}}}

\def\gS{{\mathcal{S}}}
\def\gT{{\mathcal{T}}}

\def\gX{{\mathcal{X}}}

\def\sP{{\mathbb{P}}}

\newcommand{\E}{\mathbb{E}}

\newcommand{\R}{\mathbb{R}}

\DeclareMathOperator*{\argmax}{arg\,max}
\DeclareMathOperator*{\argmin}{arg\,min}

\usepackage[utf8]{inputenc}
\usepackage[T1]{fontenc}
\usepackage[colorlinks]{hyperref}
\usepackage{url}
\usepackage{booktabs}
\usepackage{cleveref}
\usepackage{amsfonts}
\usepackage{nicefrac}
\usepackage{enumitem}
\usepackage{microtype}
\usepackage{xcolor}
\usepackage{bbm}
\usepackage{amssymb}
\usepackage{amsthm}
\usepackage{mathtools}
\usepackage{tablefootnote}
\usepackage{makecell}
\usepackage{graphicx}
\usepackage{subcaption}
\usepackage{wrapfig}
\usepackage{algorithm}
\usepackage[noend]{algpseudocode}
\usepackage{tikz}

\tikzset{
  state/.style={circle,draw,thick,minimum size=18pt,inner sep=1pt},
  >=stealth
}

\newcommand{\aref}[1]{\hyperref[#1]{Appendix~\ref*{#1}}}
\newcommand{\lemmaref}[1]{\hyperref[#1]{Lemma~\ref*{#1}}}
\newcommand{\propref}[1]{\hyperref[#1]{Proposition~\ref*{#1}}}

\makeatletter
\newcommand{\newreptheorem}[2]{\newtheorem*{rep@#1}{\rep@title}\newenvironment{rep#1}[1]{\def\rep@title{#2 \ref*{##1}}\begin{rep@#1}}{\end{rep@#1}}}
\makeatother

\newtheorem{proposition}{Proposition}
\newreptheorem{proposition}{Proposition}
\newtheorem{lemma}{Lemma}
\newreptheorem{lemma}{Lemma}

\DeclarePairedDelimiter\norm{\lVert}{\rVert}

\definecolor{mypurple}{HTML}{7A4988}
\definecolor{mygreen}{HTML}{2ca02c}
\definecolor{myblue}{HTML}{1f77b4}
\hypersetup{urlcolor=myblue, citecolor=myblue, linkcolor=myblue}

\providecommand{\ours}[1][]{{\protect\color{black}{Value Flows\textbf{#1}}}}

\title{Value Flows}

\author{
\begin{minipage}{\textwidth}
\centering
Perry Dong\thanks{Equal contribution. Author order was determined by a coin flip.}$\hspace{0.35em}^{1}$ \quad Chongyi Zheng$^{*2}$ \\
\vspace{0.3em}
Chelsea Finn$^1$ \quad Dorsa Sadigh$^1$ \quad Benjamin Eysenbach$^2$ \\
\vspace{0.3em}
\normalfont $^1$Stanford University \qquad $^2$Princeton University \\
\vspace{0.5em}
\texttt{perryd@stanford.edu \qquad chongyiz@princeton.edu}
\end{minipage}
}

\iclrfinalcopy %

\begin{document}

\maketitle

\begin{abstract}

While most reinforcement learning methods today flatten the distribution of future returns to a single scalar value, distributional RL methods exploit the return distribution to provide stronger learning signals and to enable applications in exploration and safe RL.  While the predominant method for estimating the return distribution is by modeling it as a categorical distribution over discrete bins or estimating a finite number of quantiles, such approaches leave unanswered questions about the fine-grained structure of the return distribution and about how to distinguish states with high return uncertainty for decision-making. The key idea in this paper is to use modern, flexible flow-based models to estimate the full future return distributions and identify those states with high return variance. 
We do so by formulating a new flow-matching objective that generates probability density paths satisfying the distributional Bellman equation. Building upon the learned flow models, we estimate the return uncertainty of distinct states using a new flow derivative ODE. We additionally use this uncertainty information to prioritize learning a more accurate return estimation on certain transitions.
We compare our method (\ours{}) with prior methods in the offline and online-to-online settings. Experiments on $37$ state-based and
$25$ image-based benchmark tasks demonstrate that \ours{} achieves
a $1.3\times$ improvement on average in success rates.

Website: \href{https://pd-perry.github.io/value-flows}{\texttt{https://pd-perry.github.io/value-flows}}

Code: \href{https://github.com/chongyi-zheng/value-flows}{\texttt{https://github.com/chongyi-zheng/value-flows}}

\end{abstract}

\section{Introduction} \label{sec:intro}

\begin{wrapfigure}[19]{R}{0.5\linewidth}
    \vspace{-1em}
    \centering
    \includegraphics[width=\linewidth]{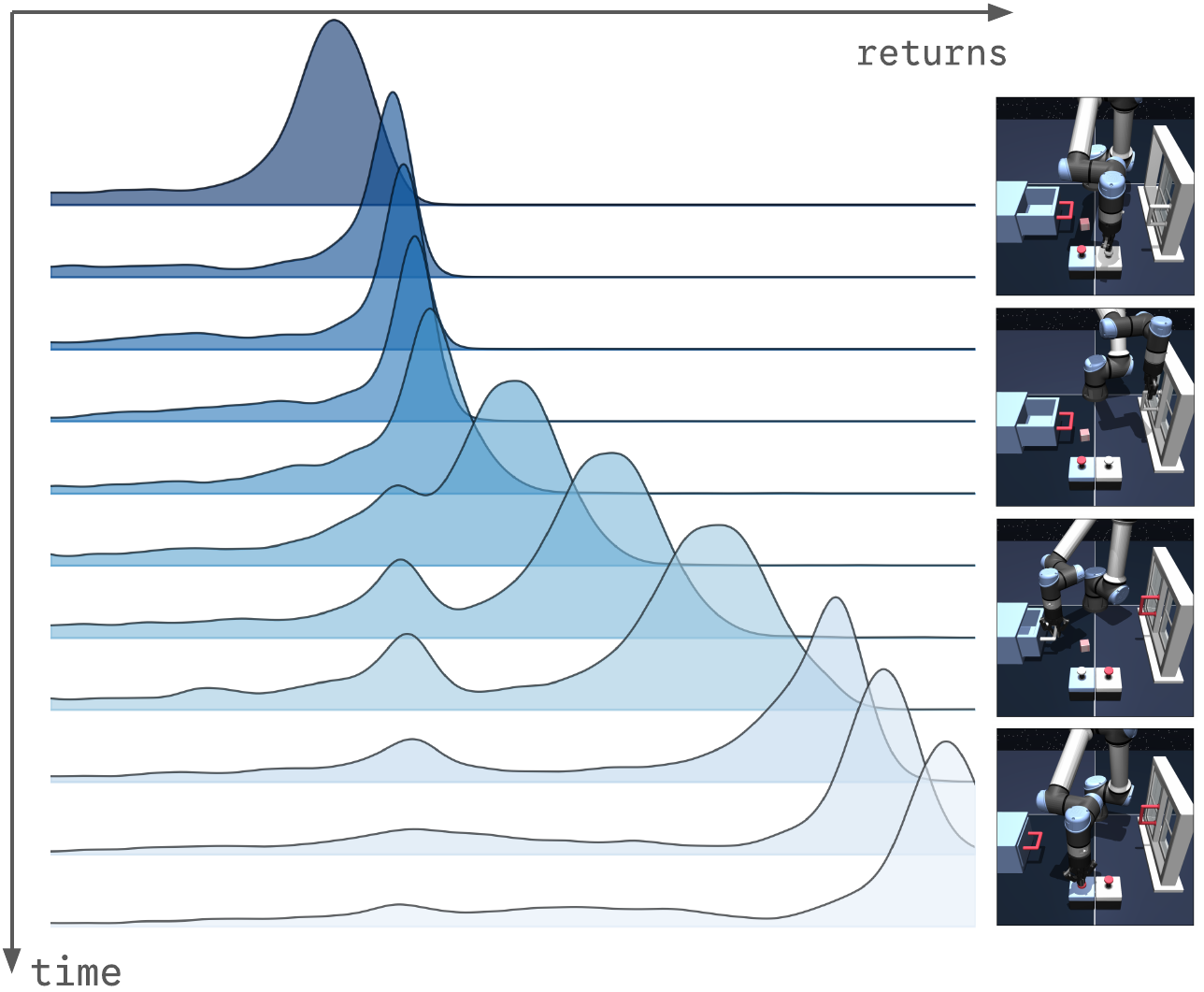}
    \caption{\footnotesize \textbf{Value Flows} models the return distribution at each time step using a flow-matching model that is optimized to obey the Bellman equation at each transition.}
    \label{fig:value-flows}
\end{wrapfigure}

While many of the recent successes in reinforcement learning (RL) have focused on estimating future returns as a single scalar, modeling the entire future return distribution can provide stronger learning signals and indicate bits about uncertainty in decision-making. Distributional RL promises to capture statistics of future returns, achieving both strong convergence guarantees~\citep{bellemare2017distributional, wang2023, wang2024} and good performance on benchmarks such as Atari and D4RL~\citep{bellemare2017distributional, dabney2018implicit, ma2021conservative}.
This paper aims to understand the benefits of using a more flexible representation of the return distribution, both as a critic in actor-critic RL and for estimating the variance in the future returns.

Modern generative models, such as flow matching~\citep{lipman2023flow,lipman2024flow, liu2023flow, albergo2023building} and diffusion models~\citep{sohl2015deep, ho2020denoising, song2021scorebased}, have shown great success in representing complex, multi-modal distributions in continuous spaces. Building upon their successes, we will use expressive flow-based models to fit the return distributions and estimate the return uncertainty of different states. These bits of uncertainty information, in turn, allow us to prioritize learning the return estimation on certain transitions.

The main contribution of our work is~\ours{}, a framework for modeling the return distribution using flow-matching methods (Fig.~\ref{fig:value-flows}). The key idea is to formulate a distributional flow-matching objective that generates probability density paths satisfying the distributional Bellman equation automatically. Using the learned flow-based models, we estimate the return variance of distinct states using a new flow derivative ODE and use it to reweight the flow-matching objective. \ours{} also enables efficient estimation of the return expectation (Q-value), bridging a variety of policy extraction methods. Experiments on $37$ state-based and $25$ image-based benchmark tasks show~\ours{} outperforms alternative offline and offline-to-online methods by $1.3 \times$ improvement on average.

\section{Related Work}

Distributional RL views the RL problem through the lens of the future return distribution instead of a scalar Q-value~\citep{engel2005reinforcement, muller2024distributional, morimura2010nonparametric}. Prior work has shown both strong convergence guarantees~\citep{wang2023, wang2024} and good empirical performance~\citep{farebrother2024stopregressing, bellemare2017distributional, ma2025dsac} of this family of methods, enabling applications in exploration~\citep{mavrin2019distributionalreinforcementlearningefficient} and safe RL~\citep{ma2021conservative,ma2025dsac}. However, those methods typically parameterize the return distribution as discrete bins and minimize the KL divergence to the distributional Bellman target~\citep{bellemare2017distributional} or use a finite number of quantiles to approximate the full return distribution~\citep{dabney2018implicit}. In contrast, \ours{} models the full return distribution directly using state-of-the-art generative models.

Perhaps the most popular modern generative models are autoencoders~\citep{kingma2013auto}, autoregressive models~\citep{vaswani2017attention}, denoising diffusion models~\citep{sohl2015deep, ho2020denoising, song2021scorebased}, and flow-matching models~\citep{lipman2023flow,lipman2024flow, liu2023flow, albergo2023building}. In the context of RL, these generative models have succeeded in modeling the trajectories~\citep{chen2021decision, ajay2023is}, the transitions~\citep{alonso2024diffusionworldmodeling, janner2021generativetemporaldifference, farebrother2025temporaldifferenceflows}, the skills~\citep{ajay2021opal, pertsch2021accelerating, frans2024unsupervised, zheng2025intention}, and the policies~\citep{wang2023diffusion,hansenestruch2023idql,dong2025mattersbatchonlinereinforcement,park2025flow,dong2025expostable}. We employ the state-of-the-art flow-matching objective~\citep{lipman2023flow} to estimate the future return distribution. 

This paper solves offline RL and offline-to-online RL problems. The goal of offline RL is to learn a policy from previously collected datasets, often through conservative behavioral regularization~\citep{peng2019advantage, fujimoto2021minimalistapproach, kostrikov2021, hansenestruch2023idql, park2025flow} or value constraints~\citep{kumar2020conservativeqlearning,kostrikov2021offlinefisher}. After learning the offline policy, offline-to-online RL uses interactions with the environment to fine-tune an online policy, often requiring balancing exploration~\citep{yang2023hybrid,mark2023offline}, calibrating values~\citep{nakamoto2024calqlcalibrated}, or maintaining offline data~\citep{nair2021awacacceleratingonline,ball2023efficient,dong2025reinforcementlearning,dong2025expostable}. Our approach focuses on estimating the full return distribution, resulting in more accurate Q-value estimations that benefit both settings.

Prior work has used confidence weight from Q estimates to reweight the Bellman error~\citep{kumar2020discor, kumar2019stabilizingoffpolicyqlearning, lee2021sunrise, schaul2015prioritized}, aiming to mitigate the issue of instability in Q-learning~\citep{Tsitsiklis1997temporal, Baird1995algorithms, vanhasselt2015deepreinforcement}. These methods construct the uncertainty weight using bootstrapped errors from Q estimations~\citep{kumar2020discor, schaul2015prioritized} or the epistemic uncertainty from an ensemble of Q networks~\citep{lee2021sunrise}. Our work builds on these prior works by using variance to reweight the critic objective (a flow-matching objective in our case), and is most similar to prior methods that do weighting based on aleatoric uncertainty~\citep{kumar2020discor, schaul2015prioritized}.

\section{Preliminaries}

We consider a Markov decision process (MDP)~\citep{sutton1998reinforcement, puterman2014markov} defined by
a state space $\gS$, an action space $\gA \subset \R^d$,
an initial state distribution $\rho \in \Delta(\gS)$, a \emph{bounded} reward function $r: \gS \times \gA \to [r_{\text{min}}, r_{\text{max}}]$,\footnote{While we consider deterministic rewards, our discussions generalize to stochastic rewards used in prior work~\citep{bellemare2017distributional, dabney2018implicit, ma2021conservative}.} a discount factor $\gamma \in [0, 1)$, and a transition distribution $p: \gS \times \gA \to \Delta(\gS)$, where $\Delta(\gX)$ denotes the set of all possible probability distributions over a space $\gX$. We will use $h$ to denote the time step in an MDP, use uppercase $X$ to denote a random variable, use lowercase $x$ to denote the value of $X$, use $F_X$ to denote the cumulative distribution function (CDF) of $X$, and use $p_X$ to denote the probability density function (PDF) of $X$. In~\aref{appendix:ret-and-q}, we include a brief discussion about the expected discounted return and the Q function in RL.

This work considers the offline RL problems~\citep{lange2012batch} (Sec.~\ref{sec:practical-algo}), which aim to maximize the return using a fixed dataset. We will use $D = \{ (s, a, r, s') \}$ to denote the dataset with transitions collected by some behavioral policies. We also extend our discussions to offline-to-online RL problems (Sec.~\ref{sec:practical-algo}), where algorithms first pre-train a policy from the offline dataset $D$ and then fine-tune the policy with online interactions in the exact \emph{same} environment. Those online interactions will also be stored in $D$, enabling off-policy learning.

\paragraph{Distributional RL.} Instead of focusing on the expected discounted return (scalar), distributional RL~\citep{morimura2010nonparametric, bellemare2017distributional} models the entire return distribution. Given a policy $\pi$, we denote the (discounted) return random variable as $Z^{\pi} = \sum_{h = 0}^{\infty} \gamma^{h} r(S_h, A_h) \in \left[ z_{\text{max}} \triangleq \frac{r_{\text{min}}}{1 - \gamma}, z_{\text{min}} \triangleq \frac{r_{\text{max}}}{1 - \gamma} \right]$, %
and denote the conditional return random variable as $Z^{\pi}(s, a) = r(s, a) + \sum_{h = 1}^{\infty} \gamma^h r(S_h, A_h)$.\footnote{
By definition, we can also write $Z^{\pi} = Z^{\pi}(S_0, A_0)$.} We note that the expectation of the conditional return random variable $Z^{\pi}(s, a)$ is equivalent to the Q function: $Q^{\pi}(s, a) = \E_{\pi(S_0 = s, A_0 = a, S_1, A_1, \cdots)} \left[ Z^{\pi}(s, a) \right]$.

Prior work~\citep{bellemare2017distributional} defines the distributional Bellman operator under policy $\pi$ for a conditional random variable $Z(s, a)$ as
\begin{align}
    \gT^{\pi} Z(s, a) \stackrel{d}{=} r(s, a) + \gamma Z(S', A'),
    \label{eq:dist-bellman-op}
\end{align}
where $S'$ and $A'$ are random variables following the joint density $p(s' \mid s, a) \pi(a' \mid s')$ and $\stackrel{d}{=}$ denotes identity in distribution. This definition implies two important relationships for the CDF and the PDF of the random variable $\gT^{\pi}Z(s, a)$ (see~\lemmaref{lemma:cdf-dist-bellman-op} and~\lemmaref{lemma:pdf-dist-bellman-op} in~\aref{appendix:distributional-rl}). We will use these two implications to derive our method in Sec.~\ref{sec:motivations} and Sec.~\ref{sec:flow-distributional-return-estimation}.
Prior work~\citep{bellemare2017distributional} has already shown that the distributional Bellman operator $\gT^{\pi}$ is a $\gamma$-contraction under the $p$-Wasserstein distance, indicating that $\gT^{\pi}$ has a unique fixed point. We include a justification for the fixed point $Z^{\pi}(s, a)$ in~\lemmaref{lemma:fixed-point-justification}, and will use the contraction property of $\gT^{\pi}$ to estimate the return distribution with theoretical guarantees (Sec.~\ref{sec:motivations} \&~\ref{sec:flow-distributional-return-estimation}).

\paragraph{Flow matching.} Flow matching~\citep{lipman2023flow, lipman2024flow, liu2023flow, albergo2023building} refers to a family of generative models based on ordinary differential equations (ODEs). The goal of flow matching methods is to transform a simple noise distribution into a target distribution $p_{\gX}$ over some space $\gX \subset \R^{d}$. We will use $t$ to denote the flow time step and sample the noise $\epsilon$ from a standard Gaussian distribution $\gN(0, I)$ throughout our discussions. Flow matching uses a time-dependent vector field $v: [0, 1] \times \R^d \to \R^d$ to construct a time-dependent diffeomorphic flow $\phi: [0, 1] \times \R^d \to \R^d$ ~\citep{lipman2023flow, lipman2024flow} that realizes the transformation. The resulting probability density path $p: [0, 1] \times \R^d \to \Delta(\R^d)$ of the vector field $v$ satisfies the continuity equation~\citep{lipman2023flow}.~\aref{appendix:flow-matching} includes the flow ODE (Eq.~\ref{eq:flow-ode}) and the continuity equation (Sec.~\ref{eq:continuity-eq}) connecting $\phi$, $v$ and $p$. In Sec.~\ref{sec:method}, we will estimate the return distribution using flow-based models, utilizing the continuity equation (Sec.~\ref{sec:flow-distributional-return-estimation}) and flow ODE (Sec.~\ref{sec:harnessing-uncertainty}).

Prior work has proposed various formulations for learning the vector field~\citep{lipman2023flow, liu2023flow, albergo2023building}. We adopt the simplest flow matching objectives called conditional flow matching (CFM)~\citep{lipman2023flow} to construct loss functions for optimizing the return vector field (Sec.~\ref{sec:flow-distributional-return-estimation}).~\aref{appendix:flow-matching} includes detailed discussions of flow matching methods.

\section{Value flows}
\label{sec:method}

In this section, we introduce our method for efficiently estimating the return distribution using an expressive flow-matching model, resulting in an algorithm called \ours{}. We first formalize the problem setting, providing motivations and desiderata for the flow-based return estimation. We then introduce the update rules for the return vector field, deriving a distributional flow matching loss to fit the discounted return distribution. Our practical algorithm will build upon this distributional flow matching loss and incorporate uncertainty in the return distribution.

\subsection{Motivations and desiderata}
\label{sec:motivations}

While prior actor-critic methods~\citep{fujimoto2018addressing, haarnoja2018soft} typically involve estimating the \emph{scalar} Q function of a policy, we instead consider estimating the entire return distribution using state-of-the-art generative models. Our desiderata are threefold: we want our model capable of \emph{(1)} estimating the full distribution over returns without coarse-graining \emph{(2)} learning probability densities satisfying the distributional Bellman backup (Eq.~\ref{eq:pdf-dist-bellman-backup}), while preserving convergence guarantees, and \emph{(3)} prioritizing learning a more accurate return estimation at transitions with high variance. Prior distributional RL methods either discretize the return distribution~\citep{bellemare2017distributional} or use a finite number of quantiles to represent the return distribution~\citep{dabney2018implicit}, violating these desiderata.

To achieve these goals, we use an expressive flow-based model to represent the conditional return random variable $Z^{\pi}(s, a)$ (desiderata \emph{(1)}). We will learn a time-dependent vector field $v: \R \times [0, 1] \times \gS \times \gA \to \R$ to model the return distribution. The desired vector field $v^{\star}$ will produce a time-dependent probability density path $p^{\star}: \R \times [0, 1] \times \gS \times \gA \to \Delta(\R)$ that satisfies the distributional Bellman equation (Eq.~\ref{eq:pdf-dist-bellman-eq}) for any flow time step $t \in [0, 1]$,
\begin{align}
    p^{\star}(z^t \mid t, s, a) = \gT^{\pi} p^{\star}(z \mid t, s, a) = \frac{1}{\gamma} \E_{p(s' \mid s, a), \pi(a' \mid s')} \left[ p^{\star} \left( \left. \frac{z^t - r(s, a)}{ \gamma } \right\rvert t, s', a' \right) \right].
    \label{eq:opt-prob-density-path}
\end{align}
Thus, $p^{\star}(z^1 \mid 1, s, a)$ converges to the discounted return distribution $p_{Z^{\pi}}(z \mid s, a)$ (desiderata \emph{(2)}). To optimize the vector field $v$ toward $v^{\star}$, Sec.~\ref{sec:flow-distributional-return-estimation} will construct a flow matching loss that resembles temporal difference learning~\citep{sutton1988learning}. We will show that both the expected return (Q function) and variance of the return are easy to compute using the initial vector field $v(\epsilon \mid 0, s, a)$
and the derivative of the vector field $\partial v / \partial z \in \R$ respectively (desiderata \emph{(3)}; Sec.~\ref{sec:harnessing-uncertainty}). Our practical algorithm (Sec.~\ref{sec:practical-algo}) will weight our flow matching loss using the variance estimate.

\subsection{Estimating the return distribution using flow-matching}
\label{sec:flow-distributional-return-estimation}

We start by constructing the update rules for the vector field and the probability density path, and then derive the preliminary flow matching losses to fit the return distribution. We will use these preliminary losses to construct the practical loss used in our algorithm (Sec.~\ref{sec:practical-algo}).

\paragraph{The vector field update rule.} Similar to the standard Bellman operator~\citep{agarwal2019reinforcement, puterman2014markov}, the distributional Bellman operator preserves convergence guarantees \emph{regardless} of initialization~\citep{bellemare2017distributional}. This property allows us to first construct update rules for a vector field $v(z^t \mid t, s, a)$ and a probability density path $p(z^t \mid t, s, a)$ \emph{separately} and then relate them via the continuity equation (Eq.~\ref{eq:continuity-eq}).

Specifically, given a policy $\pi$ and the transition $p(s' \mid s, a)$, we start from a randomly initialized vector field $v_k(z^t \mid t, s, a)$ (iteration $k = 0$) that generates the probability density path $p_k(z^t \mid t, s, a)$. We construct a new vector field and a new probability density path using $v_k$ and $p_k$:
\begin{align}
    p_{k + 1}(z^t \mid t, s, a) &\triangleq \gT^{\pi} p_{k}(z^{t} \mid t, s, a) = \frac{1}{\gamma} \E_{p(s' \mid s, a), \pi(a' \mid s')} \left[ p_k \left( \left. \frac{z^t - r(s, a)}{\gamma} \right\rvert t, s', a' \right) \right], \nonumber \\
    v_{k + 1}(z^t \mid t, s, a) &\triangleq \frac{ \frac{1}{\gamma} \E_{p(s' \mid s, a), \pi(a' \mid s')} \left[ p_k \left( \left. \frac{ z^t - r(s, a) }{\gamma} \right\rvert t, s', a' \right) v_k \left( \left. \frac{ z^t - r(s, a) }{\gamma} \right\rvert t, s', a' \right) \right]  }{ p_{k + 1}(z^t \mid t, s, a) }.
    \label{eq:v-kp1-def}
\end{align}
Importantly, these definitions only instantiate the functional form of the new vector field $v_{k + 1}$ and the new probability density path $p_{k + 1}$, missing a connection between them. Establishing an explicit relationship between the new vector field $v_{k + 1}$ and the new probability density path $p_{k + 1}$ requires using the continuity equation (Eq.~\ref{eq:continuity-eq}) between the vector field $v_k$ and its probability density path $p_k$, which we show in the following proposition.

\begin{proposition}[Informal]
\label{prop:vector-field-prob-density-path-update-rule}
Given the vector field $v_k$ that generates the probability density path $p_{k}$, the new vector field $v_{k + 1}$ generates the new probability density path $p_{k + 1}$.
\end{proposition}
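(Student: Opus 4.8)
By definition, the statement ``$v_{k+1}$ generates $p_{k+1}$'' is the assertion that $(v_{k+1},p_{k+1})$, as functions of the return variable $z^t$ for each fixed $(s,a)$ and each flow time $t\in[0,1]$, satisfy the continuity equation (Eq.~\ref{eq:continuity-eq}); so the whole proof amounts to verifying that PDE, using as the \emph{sole} input the hypothesis that $(v_k,p_k)$ satisfies the continuity equation for every $(s',a')$. The cleanest route is to recognize Eq.~\ref{eq:v-kp1-def} as an instance of the mixture / marginal vector-field construction underlying conditional flow matching~\citep{lipman2023flow}: treat $(s',a')\sim p(s'\mid s,a)\,\pi(a'\mid s')$ as the conditioning variable, and for each $(s',a')$ take the conditional probability path to be the push-forward of $p_k(\cdot\mid t,s',a')$ under the (flow-time-independent) affine map $z\mapsto r(s,a)+\gamma z$, whose density $z^t\mapsto\tfrac1\gamma\,p_k(\tfrac{z^t-r(s,a)}{\gamma}\mid t,s',a')$ is exactly the integrand defining $p_{k+1}$ in Eq.~\ref{eq:v-kp1-def}.

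The argument then runs in three steps. (i) First, identify the conditional vector field generating this conditional path: if $\zeta^t$ solves $\dot\zeta^t=v_k(\zeta^t\mid t,s',a')$, then $z^t=r(s,a)+\gamma\zeta^t$ solves $\dot z^t=\gamma\,v_k(\tfrac{z^t-r(s,a)}{\gamma}\mid t,s',a')$, and a one-line change of variables in the continuity equation confirms this field generates the pushed-forward density. (ii) Then invoke the marginal-vector-field lemma: the mixture $p_{k+1}=\E_{s',a'}[\,\cdot\,]$ is generated by the $p_{k+1}$-normalized, density-weighted average of these conditional vector fields; I would check that the Jacobian factor $1/\gamma$ and the velocity factor $\gamma$ combine so that this average reconciles with the normalized expression for $v_{k+1}$ in Eq.~\ref{eq:v-kp1-def}. (iii) Finally, conclude: ``that average generates $p_{k+1}$'' is precisely the claim. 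An equivalent ``from scratch'' version of (i)--(iii) is to differentiate $p_{k+1}(z^t\mid t,s,a)=\tfrac1\gamma\E_{s',a'}[p_k(\tfrac{z^t-r(s,a)}{\gamma}\mid t,s',a')]$ in $t$, pass $\partial_t$ through the expectation, use the continuity equation for $(v_k,p_k)$ to rewrite $\partial_t p_k$ as $-\partial_\zeta(p_k v_k)$, convert the inner derivative into $\partial_{z^t}$ via the chain rule (where the $1/\gamma$ from $z\mapsto(z-r(s,a))/\gamma$ enters), pull $\partial_{z^t}$ out of the expectation, and match the bracket against $p_{k+1}v_{k+1}$.

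The main obstacle --- and essentially the only delicate point --- is the bookkeeping of the $\gamma$ and $1/\gamma$ factors: the map $z\mapsto\gamma z+r(s,a)$ contributes a Jacobian $1/\gamma$ in the density and a factor $\gamma$ in the velocity, and one must track these so that the density-weighted mixture of conditional vector fields lines up exactly with the normalization used in Eq.~\ref{eq:v-kp1-def}; mishandling any of them throws the continuity equation off by a factor of $\gamma$. The remaining work is routine but worth recording: one needs mild regularity --- enough smoothness of $v_k$ and $p_k$ in $(z,t)$, and positivity of $p_{k+1}$ wherever one divides by it --- to differentiate under the expectation and to read the continuity equation classically, and the bounded-reward assumption is convenient because it confines all the return distributions in play to a compact interval, which supplies the needed integrability. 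Since the proposition is exactly the inductive step and $v_0$ generates $p_0$ by construction, it also gives that $v_k$ generates $p_k$ for every $k\ge0$.
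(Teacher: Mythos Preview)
Your ``from scratch'' route --- differentiate $p_{k+1}$ in $t$, push $\partial_t$ through the expectation, invoke the continuity equation for $(v_k,p_k)$, convert the inner derivative to $\partial_{z^t}$, and match against $p_{k+1}v_{k+1}$ --- is exactly the paper's proof. You are also right that the only nontrivial step is the $\gamma$ bookkeeping, and right to worry about it: the paper writes ``$\mathrm{div}$'' without ever saying with respect to which variable, and in passing from the continuity equation for $p_k$ (where the divergence is in $\zeta=(z^t-r)/\gamma$) to the one for $p_{k+1}$ (where it must be in $z^t$) it silently identifies the two, never performing the chain rule.

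The gap in your proposal is that you promise to ``check that the Jacobian factor $1/\gamma$ and the velocity factor $\gamma$ combine so that this average reconciles with Eq.~\ref{eq:v-kp1-def}'' but do not actually do it --- and in fact it does \emph{not} reconcile. Your own step~(i) gives conditional density $\tilde p=\tfrac{1}{\gamma}\,p_k\bigl(\tfrac{\cdot-r}{\gamma}\bigr)$ and conditional velocity $\tilde v=\gamma\,v_k\bigl(\tfrac{\cdot-r}{\gamma}\bigr)$, so the marginal field from the mixture lemma is $\E[\tilde p\,\tilde v]/p_{k+1}=\E[p_kv_k]/p_{k+1}$; but Eq.~\ref{eq:v-kp1-def} carries an extra $1/\gamma$ in the numerator, i.e.\ the paper's $v_{k+1}$ is $1/\gamma$ times the field that actually generates $p_{k+1}$. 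Equivalently, doing the chain rule honestly yields $\partial_t p_{k+1}=-\tfrac{1}{\gamma}\E\bigl[(p_kv_k)'\bigr]$ while $\partial_{z^t}(p_{k+1}v_{k+1})=\tfrac{1}{\gamma^2}\E\bigl[(p_kv_k)'\bigr]$, so the continuity equation for $(v_{k+1},p_{k+1})$ as defined holds only when $\gamma=1$. A one-line sanity check: take deterministic $(s',a')$ and $p_k(\cdot\mid t)=\gN(t\mu,1)$ with $v_k\equiv\mu$; then $p_{k+1}(\cdot\mid t)=\gN(r+\gamma t\mu,\gamma^2)$, whose generating field is $\gamma\mu$, whereas Eq.~\ref{eq:v-kp1-def} gives $v_{k+1}\equiv\mu$. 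Your strategy is sound, but the step you deferred is precisely where both your argument and the paper's version of it break against the stated definition of $v_{k+1}$.
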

See~\aref{appendix:flow-distributional-return-estimiation} for the detailed statement and a proof. There are two implications of this proposition. First, since the new vector fireld $v_{k + 1}$ generates the new probability density path $p_{k + 1}$, applying the distributional Bellman operator to the probability density path $p_{k}$ is equivalent to learning a vector field $v_{k + 1}$ that satisfies Eq.~\ref{eq:v-kp1-def}. This implication allows us to convert the problem of estimating the full return distribution into the problem of learning a vector field, which naturally fits into the flow-matching framework. Second, since the distributional Bellman operator $\gT^{\pi}$ is a $\gamma$-contraction, repeatly applying $\gT^{\pi}$ to the probability density path $p_{k}$ guarantees convergence to $p^{\star}(z^t \mid t, s, a)$ (Eq.~\ref{eq:opt-prob-density-path}). Therefore, we can construct a flow matching loss to learn the vector field $v_{k + 1}$ with a guarantee to converge toward $v^{\star}$.

\paragraph{The distributional flow matching losses.} We now derive the preliminary flow matching losses for estimating the return distribution only using transition samples from the dataset $D$.
Given the vector field $v_k$, one simple approach to learn a new vector field $v$ satisfying the update rule in Eq.~\ref{eq:v-kp1-def} is to minimize the mean squared error (MSE) between $v$ and $v_{k + 1}$. We call this loss the \emph{distributional flow matching} (DFM) loss:
\begin{align}
    \gL_{\text{DFM}}(v, v_k) &= \E_{ \substack{(s, a, r) \sim D, t \sim \textsc{Unif}([0, 1]) \\ p_{k + 1}(z^t \mid t, s, a) }} \left[ \left( v(z^t \mid t, s, a) - v_{k + 1}(z^t \mid t, s, a) \right)^2 \right].
    \label{eq:dist-fm-loss}
\end{align}
It is easy to verify that $\argmin_v \gL_{\text{DFM}}(v, v_k) = v_{k + 1}$ (see~\lemmaref{lemma:dist-fm-minimizer} in~\autoref{appendix:theoretical-analysis}). Similar to the standard flow matching loss~\citep{lipman2023flow}, this loss function is not practical because of \emph{(1)} the unknown transition probability density $p(s' \mid s, a)$, and \emph{(2)} the intractable integral (the expectation) in the vector field $v_{k + 1}$. To tackle the issue of the intractable vector field $v_{k + 1}$, we optimize the alternative \emph{distributional conditional flow matching} (DCFM) loss:
\begin{align}
    \gL_{\text{DCFM}}(v, v_k) &= \E_{\substack{(s, a, r, s') \sim D, t \sim \textsc{Unif}([0, 1]) \\ a' \sim \pi(a' \mid s'), z^t \sim \frac{1}{\gamma} p_k \left( \left. \frac{z^t - r}{\gamma} \right\rvert t, s', a' \right) }} \left[ \left( v(z^t \mid t, s, a) - v_k \left( \left. \frac{z^t - r}{\gamma} \right\rvert t, s', a' \right) \right)^2 \right].
    \label{eq:dist-cfm-loss}
\end{align}
We can interpret the transformed returns $(z^t - r) / \gamma$ as convolving the probability density path $p_k$ and use a change of variable to simplify this DCFM loss (see~\lemmaref{lemma:dcfm-loss-change-of-variable} in~\aref{appendix:theoretical-analysis}). Unlike the DFM loss $\gL_{\text{DFM}}$, the DCFM loss $\gL_{\text{DCFM}}$ can be easily estimated via samples from the dataset $D$. Like the connection between the flow matching loss and the conditional flow matching loss in~\citet{lipman2023flow}, the DCFM loss has the same gradient as the DFM loss, indicating that they have the same minimizer.
\begin{proposition}[Informal]
\label{prop:flow-matching-losses-connection}
The gradient $\nabla_v \gL_{\text{DFM}}(v, v_k)$ is the same as the gradient $\nabla_v \gL_{\text{CDFM}}(v, v_k)$.
\end{proposition}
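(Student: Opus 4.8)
The plan is to follow the template of \citet{lipman2023flow} that relates the flow matching and conditional flow matching losses: I will show that $\gL_{\text{DFM}}(v, v_k)$ and $\gL_{\text{DCFM}}(v, v_k)$ agree up to an additive term that does not depend on $v$, so that $\nabla_v \gL_{\text{DFM}}(v, v_k) = \nabla_v \gL_{\text{DCFM}}(v, v_k)$. Expanding both squared errors, each loss decomposes — under the respective inner sampling distribution for $z^t$ together with the outer expectation over $(s, a, \dots) \sim D$ and $t \sim \textsc{Unif}([0,1])$ — into a quadratic term $\E[v(z^t \mid t, s, a)^2]$, a cross term linear in $v$, and a remainder carrying no $v$-dependence (the expected squared value of $v_{k+1}$ for $\gL_{\text{DFM}}$, and of $v_k$ evaluated at the shifted-and-scaled argument $(z^t - r)/\gamma$ for $\gL_{\text{DCFM}}$). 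The two remainders are finite because the reward is bounded, hence all returns and all $z^t$ lie in a fixed compact interval and the $p_k$ are honest probability densities; being $v$-free, they vanish under $\nabla_v$. It therefore suffices to match the quadratic and the cross terms across the two losses.

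For the quadratic term I would first identify the marginal law of $z^t$ in the DCFM sampling scheme. Drawing $(s, a, r, s') \sim D$, then $a' \sim \pi(a' \mid s')$, then $z^t \sim \frac{1}{\gamma}\, p_k\!\left(\frac{z^t - r}{\gamma} \,\middle|\, t, s', a'\right)$, and marginalizing over $s', a'$ conditioned on $(s, a)$, reproduces exactly the right-hand side of the definition of $p_{k+1}$ in Eq.~\ref{eq:v-kp1-def}. Consequently $\E_{\text{DCFM}}[v(z^t \mid t, s, a)^2] = \E_{(s,a,r) \sim D,\, t,\, p_{k+1}}[v(z^t \mid t, s, a)^2]$, which is precisely the DFM quadratic term, so these two pieces coincide.

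For the cross term I would substitute the closed form of $v_{k+1}$ from Eq.~\ref{eq:v-kp1-def} into $\E_{p_{k+1}(z^t \mid t, s, a)}\!\left[v(z^t \mid t, s, a)\, v_{k+1}(z^t \mid t, s, a)\right]$. The factor $p_{k+1}(z^t \mid t, s, a)$ supplied by the density of the expectation cancels the denominator of $v_{k+1}$, leaving $\int v(z^t \mid t, s, a)\, \frac{1}{\gamma}\, \E_{p(s' \mid s,a)\pi(a' \mid s')}\!\left[p_k(\cdot)\, v_k(\cdot)\right] dz^t$; interchanging the inner integral with the $(s', a')$-expectation (Fubini--Tonelli, using nonnegativity of the densities and the compact support of $z^t$) turns this into $\E_{p(s' \mid s,a),\, \pi(a' \mid s'),\, z^t \sim \frac{1}{\gamma} p_k(\cdot)}\!\left[v(z^t \mid t, s, a)\, v_k\!\left(\frac{z^t - r}{\gamma} \,\middle|\, t, s', a'\right)\right]$, which, after reinstating the outer $(s, a, r, s') \sim D$ and $t \sim \textsc{Unif}([0,1])$ expectations, is exactly the cross term of $\gL_{\text{DCFM}}$. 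Combining the matched quadratic and cross terms with the $v$-free remainders yields the claimed gradient identity, and hence also that $\gL_{\text{DFM}}$ and $\gL_{\text{DCFM}}$ share the minimizer $v_{k+1}$.

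I expect the main obstacle to be technical rather than conceptual: rigorously justifying the interchanges — Fubini--Tonelli in the cross-term computation, and differentiation under the expectation sign that lets one write $\nabla_v \E[\cdot] = \E[\nabla_v \cdot]$ and discard the $v$-free terms — together with verifying that $p_{k+1}(z^t \mid t, s, a) > 0$ wherever it is divided out, so that the cancellation defining $v_{k+1}$ is legitimate. All of these follow from the paper's standing assumptions (bounded reward, hence compactly supported returns; genuine probability densities $p_k$ arising from the Gaussian-seeded flow; and sufficient regularity of $v$ and $v_k$ in their parameters with locally bounded derivatives), so the formal version in the appendix mainly has to state these hypotheses carefully and invoke them.
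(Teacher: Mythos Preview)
Your proposal is correct and follows essentially the same approach as the paper: both expand the squared losses, match the $\E[v^2]$ terms by recognizing that the DCFM sampling scheme induces the marginal $p_{k+1}$, match the cross terms by substituting the definition of $v_{k+1}$ and cancelling the $p_{k+1}$ denominator, and observe that the residual terms are $v$-free. Your version is slightly more explicit about the technical justifications (Fubini--Tonelli, positivity of $p_{k+1}$), which the paper's proof leaves implicit.
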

See~\aref{appendix:flow-distributional-return-estimiation} for the detailed statement and a proof. Importantly, optimizing $\gL_{\text{DCFM}}$ does not require access to the full transition distribution and prevents taking the intractable integral. $\gL_{\text{DCFM}}$ is a TD loss because it corresponds to applying the distributional Bellman operator $\gT^{\pi}$ to the probability density path $p_k$. Furthermore, our flow-based model produces estimations of the return expectation (Q function; Sec.~\ref{sec:practical-algo}) and the return variance (aleatoric uncertainty; Sec.~\ref{sec:harnessing-uncertainty}) easily, both of which will be incorporated into our algorithm. 

We note that simply optimizing $\gL_{\text{DCFM}}(v, v_k)$ might produce a degenerated vector field, e.g., $v = 0$. In particular, our initial experiments suggest that naively optimizing $\gL_{\text{DCFM}}(v)$ produced a trivial performance on some tasks. To stabilize learning, we use the return predictions at the next state-action pair $(s', a')$ and the flow time $t = 1$ to construct a bootstrapped target return similar to the TD target in Q-learning~\citep{watkins1992q}. The resulting loss function, called \emph{bootstrapped conditional flow matching} (BCFM) loss $\gL_{\text{BCFM}}$, resembles the standard fitted Q-learning~\citep{riedmiller2005neural}. In addition, we use a target vector field $\bar{v}$ to replace the historical vector field $v_k$. Prior work has shown that using a target network prevents the model from collapsing~\citep{grill2020bootstrap, tian2021understanding}. See~\aref{appendix:practical-flow-matching-loss} for further discussions.

\subsection{Harnessing uncertainty in the return estimation}
\label{sec:harnessing-uncertainty}

One benefit of using the flow-based models to estimate the full return distribution is that we can easily compute the return variance at every state-action pair $(s, a)$. These return variances measure the aleatoric uncertainty in the MDP, indicating the noise that stems from the environmental transitions. We incorporate the return uncertainty information into our method and use it to prioritize learning a more accurate return distribution at transitions with higher return variance. We first present an estimation of return expectation using the initial vector field and an approximation of return variance using the Taylor expansion. We then introduce a new ODE that relates the derivative of the learned diffeomorphic flow, allowing us to estimate the return variance in practice. Using this variance estimation, we define the confidence weight for reweighting the distributional flow matching losses.

Prior work~\citep{frans2025one} has shown that the learned vector field $v$ (from Sec.~\ref{sec:flow-distributional-return-estimation}) at the flow time $t = 0$ points toward the dataset mean. We adopt the same idea to estimate the return expectation ($\E[Z^{\pi}(s, a)]$, i.e., the Q value) using Gaussian noise $\epsilon \sim \gN(0, 1)$. Additionally, we estimate the return variance $\text{Var}(Z^{\pi}(s, a))$ using a first-order Taylor approximation on the corresponding (diffeomorphic) flow $\phi: \R \times [0, 1] \times \gS \times \gA \to \R$ of the vector field $v$. Specifically, the flow $\phi(z \mid t, s, a)$ transforms another noise $\epsilon_0 \sim \gN(0, 1)$ into a return prediction $\phi(\epsilon_0 \mid 1, s, a)$. We linearize the return prediction around noise $\epsilon$ using a Taylor expansion and then compute the variance of this linearization.
\begin{proposition}[Informal]
\label{prop:variance-taylor-approximation}
The initial vector field $v(\epsilon \mid 0, s, a)$ produces an estimate for the return expectation, while the first-order Taylor approximation of the flow $\phi(\epsilon_0 \mid 1, s, a)$ around $\epsilon$ produces an estimate for the return variance, 
{\footnotesize \begin{align}
    \widehat{\E} \left[Z^{\pi}(s, a) \right] = \E_{\epsilon \sim \gN(0, 1)}[v(\epsilon \mid 0, s, a)], \quad \widehat{\text{Var}}(Z^{\pi}(s, a)) = \E_{\epsilon \sim \gN(0, 1)} \left[ \left( \frac{\partial \phi}{\partial \epsilon} (\epsilon \mid 1, s, a) \right)^2 \right].
    \label{eq:ret-expectation-var-est}
\end{align}}
\end{proposition}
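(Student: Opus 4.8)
The plan is to treat the two claims separately: the expectation identity will fall out of the conditional flow-matching construction of Sec.~\ref{sec:flow-distributional-return-estimation} evaluated at flow time $t=0$, and the variance formula will be a first-order delta-method argument applied to the terminal flow $\phi(\cdot\mid 1,s,a)$, averaged over the linearization point. First I would recall the CFM structure: the optimal vector field is a conditional expectation of the straight-line conditional field. With source noise $\epsilon\sim\gN(0,1)$, a data sample $Z_1\sim p_{Z^\pi}(\cdot\mid s,a)$, interpolant $Z^t=(1-t)\epsilon+tZ_1$, and conditional field $u_t(z\mid Z_1)=(Z_1-z)/(1-t)$, one has $v^\star(z\mid t,s,a)=\E[u_t(z\mid Z_1)\mid Z^t=z]=(\E[Z_1\mid Z^t=z]-z)/(1-t)$. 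The associated flow solves $\partial_t\phi(z\mid t,s,a)=v^\star(\phi(z\mid t,s,a)\mid t,s,a)$ with $\phi(z\mid 0,s,a)=z$, and --- by the continuity equation together with the convergence guaranteed by \propref{prop:vector-field-prob-density-path-update-rule} and the $\gamma$-contraction of $\gT^\pi$ --- $\phi(\cdot\mid 1,s,a)$ pushes $\gN(0,1)$ forward to $p_{Z^\pi}(\cdot\mid s,a)$.

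\textbf{Expectation.} At $t=0$ the interpolant is $Z^0=\epsilon$, which is independent of $Z_1$, so $\E[Z_1\mid Z^0=\epsilon]=\E[Z_1]=\E[Z^\pi(s,a)]$ and hence $v^\star(\epsilon\mid 0,s,a)=\E[Z^\pi(s,a)]-\epsilon$ (the ``points toward the mean'' observation of \citet{frans2025one}). Taking $\E_{\epsilon\sim\gN(0,1)}$ and using $\E[\epsilon]=0$ gives $\E_\epsilon[v^\star(\epsilon\mid 0,s,a)]=\E[Z^\pi(s,a)]=Q^\pi(s,a)$; replacing $v^\star$ by the learned $v$ turns this into the stated estimator $\widehat{\E}$.

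\textbf{Variance.} Write $Z^\pi(s,a)\stackrel{d}{\approx}\phi(\epsilon_0\mid 1,s,a)$ with $\epsilon_0\sim\gN(0,1)$. I would fix a base point $\epsilon$ and Taylor-expand in the initial condition: $\phi(\epsilon_0\mid 1,s,a)=\phi(\epsilon\mid 1,s,a)+\frac{\partial\phi}{\partial\epsilon}(\epsilon\mid 1,s,a)\,(\epsilon_0-\epsilon)+O((\epsilon_0-\epsilon)^2)$. Treating the base point (hence $\phi(\epsilon\mid 1,s,a)$ and $\frac{\partial\phi}{\partial\epsilon}(\epsilon\mid 1,s,a)$) as deterministic and $\epsilon_0$ as the only random quantity, dropping the remainder yields $\text{Var}(Z^\pi(s,a))\approx\left(\frac{\partial\phi}{\partial\epsilon}(\epsilon\mid 1,s,a)\right)^2\text{Var}(\epsilon_0)=\left(\frac{\partial\phi}{\partial\epsilon}(\epsilon\mid 1,s,a)\right)^2$, and averaging over $\epsilon\sim\gN(0,1)$ to remove the dependence on the base point gives the stated $\widehat{\text{Var}}$. (As a sanity check: when $\phi(\cdot\mid 1,s,a)$ is affine the remainder vanishes, $\partial\phi/\partial\epsilon$ is constant, and the approximation is exact.) On the practical side, $\partial\phi/\partial\epsilon$ is obtained by differentiating the flow ODE in its initial condition, giving the linear sensitivity ODE $\frac{d}{dt}\frac{\partial\phi}{\partial\epsilon}=\frac{\partial v}{\partial z}(\phi\mid t,s,a)\,\frac{\partial\phi}{\partial\epsilon}$ with $\frac{\partial\phi}{\partial\epsilon}(\epsilon\mid 0,s,a)=1$, integrated jointly with $\phi$.

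\textbf{Main obstacle.} The computations are routine; the care goes into two places. First, pinning down the sense in which $\phi(\cdot\mid 1,s,a)$ realizes $p_{Z^\pi}(\cdot\mid s,a)$: this rests on the continuity equation and on convergence of the $\gT^\pi$-iteration, so the clean statements are about $v^\star$, with the learned $v$ only approximating it --- which is why both formulas are written as estimators rather than identities. Second, the first-order truncation in the variance expansion is genuinely nonzero whenever the terminal flow is nonlinear in $\epsilon$, so $\widehat{\text{Var}}$ cannot be an exact identity; averaging the base point over $\gN(0,1)$ is the natural device to make the estimator well-defined, and that is the step I would make explicit in the formal statement.
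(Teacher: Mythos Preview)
Your proposal is correct and follows essentially the same approach as the paper: for the expectation you compute the CFM minimizer $v^\star$ at $t=0$ and use independence of $\epsilon$ and $Z_1$ to get $v^\star(\epsilon\mid 0,s,a)=\E[Z^\pi(s,a)]-\epsilon$, and for the variance you do a first-order Taylor (delta-method) expansion of $\phi(\epsilon_0\mid 1,s,a)$ around a base point $\epsilon$, use $\text{Var}(\epsilon_0)=1$, and average over the base point---exactly Lemmas~\ref{lemma:flow-expectation-est} and~\ref{lemma:flow-variance-est} specialized to the scalar return. The only cosmetic difference is that you write the conditional field as $(Z_1-z)/(1-t)$ whereas the paper uses $x-\epsilon$; these coincide at $t=0$ and yield the same $v^\star$ in general, so the arguments are equivalent.
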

See~\aref{appendix:harnessing-uncertainty} for the detailed statement and a proof. In practice, computing the flow derivative $\partial \phi / \partial \epsilon$ requires backpropagating the gradients through the ODE solver, which is unstable and computationally expensive~\citep{park2025flow}. One key observation is that the flow derivative $\partial \phi / \partial \epsilon$ follows another \emph{flow derivative ODE}, drawing a connection with the vector field derivative $\partial v / \partial z$. 
We will use this flow derivative ODE to compute the flow derivative $\partial \phi / \partial \epsilon$ efficiently.
\begin{proposition}[Informal]
\label{prop:flow-derivative-ode}
The flow derivative $\partial \phi / \partial \epsilon$ and the vector field derivative $\partial v / \partial z$ satisfy a flow derivative ODE.
\end{proposition}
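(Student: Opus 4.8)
The plan is to recognize this proposition as an instance of the classical \emph{first-variation} (sensitivity) equation for ordinary differential equations, specialized to the flow-matching setting. By the flow-matching construction recalled in the preliminaries and Appendix~\ref{appendix:flow-matching}, for each fixed pair $(s, a)$ the diffeomorphic flow $\phi$ generated by the vector field $v$ is the unique solution of the initial value problem
\[
\frac{\partial}{\partial t} \phi(z \mid t, s, a) = v\bigl( \phi(z \mid t, s, a) \mid t, s, a \bigr), \qquad \phi(z \mid 0, s, a) = z .
\]
First I would fix $(s, a)$ once and for all, set the initial condition $z = \epsilon$, and regard this as a scalar ODE in the flow time $t$ with the single remaining free variable $\epsilon$.

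Next I would differentiate this identity with respect to $\epsilon$. Under the mild assumption that $v$ is continuously differentiable in its return argument $z$ (which holds for the smooth network parameterizations we use), standard results on smooth dependence of ODE solutions on initial conditions guarantee that $\phi(\epsilon \mid t, s, a)$ is itself $C^1$ in $\epsilon$ and license interchanging $\partial / \partial t$ with $\partial / \partial \epsilon$. Writing $\psi(\epsilon \mid t, s, a) \triangleq \frac{\partial \phi}{\partial \epsilon}(\epsilon \mid t, s, a)$ and applying the chain rule to the right-hand side then yields the \emph{flow derivative ODE}
\[
\frac{\partial}{\partial t} \psi(\epsilon \mid t, s, a) = \frac{\partial v}{\partial z}\bigl( \phi(\epsilon \mid t, s, a) \mid t, s, a \bigr)\, \psi(\epsilon \mid t, s, a), \qquad \psi(\epsilon \mid 0, s, a) = 1 ,
\]
where the initial condition comes from differentiating $\phi(\epsilon \mid 0, s, a) = \epsilon$. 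This is precisely the asserted relationship: the flow derivative $\partial \phi / \partial \epsilon$ evolves according to a linear ODE whose coefficient is the vector field derivative $\partial v / \partial z$ evaluated along the flow trajectory.

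To turn this into something directly usable I would then integrate the scalar linear ODE, obtaining the closed form $\frac{\partial \phi}{\partial \epsilon}(\epsilon \mid 1, s, a) = \exp\!\left( \int_0^1 \frac{\partial v}{\partial z}\bigl( \phi(\epsilon \mid \tau, s, a) \mid \tau, s, a \bigr)\, d\tau \right)$, which is automatically positive (consistent with $\phi(\cdot \mid 1, s, a)$ being a diffeomorphism) and can be computed by simply augmenting the sampling ODE with one extra scalar state, rather than backpropagating through the solver. Together with Proposition~\ref{prop:variance-taylor-approximation} this delivers the practical variance estimate.

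The main obstacle is the regularity bookkeeping: I must state explicitly that $v(\cdot \mid t, s, a)$ is $C^1$ (with, say, locally Lipschitz derivative) so that the flow is differentiable in its initial condition and the two partial derivatives commute — this is the content of the standard variational-equation argument and is satisfied by the architectures used, but it should not be glossed over. A secondary, purely notational nuisance is keeping the roles of the four arguments of $\phi$ and $v$ straight (return coordinate versus flow time versus the conditioning pair $(s, a)$); fixing $(s, a)$ at the outset, as above, reduces everything to a one-dimensional non-autonomous ODE and removes this clutter.
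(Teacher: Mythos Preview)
Your proposal is correct and follows essentially the same route as the paper: differentiate the flow ODE with respect to the initial condition $\epsilon$, swap the order of the $t$- and $\epsilon$-derivatives, and apply the chain rule to obtain the linear variational equation with initial condition $1$. The paper organizes this by first stating a $d$-dimensional Jacobian version (Lemma~\ref{lemma:flow-jac-ode}) and then specializing to the scalar return case, whereas you work directly in one dimension and additionally supply the explicit exponential closed form; these are cosmetic differences, not a different argument.
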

See~\aref{appendix:harnessing-uncertainty} for the detailed statement and a proof. This flow derivative ODE, together with the flow ODE (Eq.~\ref{eq:flow-ode}), enables computing both the return prediction and the return variance estimation (Eq.~\ref{eq:ret-expectation-var-est}) using a numerical solver (Alg.~\ref{alg:euler-method}) and an efficient vector-Jacobian product (VJP) implementation~\citep{jax_autodiff_cookbook}.

We will use the estimated variance to reweight our flow matching losses (\aref{appendix:practical-flow-matching-loss}). Since the distributional RL algorithms typically model the aleatoric uncertainty, a high return variance indicates high stochasticity in the environment, requiring fine-grained predictions. Therefore, the goal of our confidence weight is to prioritize optimizing the vector field at state-action pairs with high return uncertainty. Adapting the confidence weight in prior work~\citep{lee2021sunrise}, we define the confidence weight for a state-action pair $(s, a)$ and a noise $\epsilon$ as 
\begin{align}
    w(s, a, \epsilon) = \sigma \left( - \tau \left/ \left\lvert \frac{\partial \phi}{\partial \epsilon} (\epsilon \mid 1, s, a) \right\rvert \right. \right) + 0.5,
    \label{eq:confidence-weight}
\end{align}
where $\sigma(\cdot)$ denotes the sigmoid function, $\tau > 0$ is a temperature, and $w(s, a, \epsilon) \in [0.5, 1]$. For simplicity, we can use \emph{one} Gaussian noise $\epsilon$ as a Monte Carlo estimator for both the return expectation and the return variance (Eq.~\ref{eq:ret-expectation-var-est}). We include a visualization of the confidence weight for different temperatures in~\aref{appendix:confidence-weight-vis}. Our confidence weight is an increasing function with respect to the return variance estimate, indicating that a higher uncertainty results in a higher weight. In~\aref{appendix:practical-flow-matching-loss}, we discuss the complete flow matching losses for fitting the return distribution.

\subsection{The complete algorithm}
\label{sec:practical-algo}

\begin{algorithm}[t]
\caption{\textbf{\ours{}} is an RL algorithm using flow matching to model the return distribution. }
\label{alg:fdrl}
\begin{algorithmic}[1]
    \State{\textbf{Input}} The return vector field $v_\theta$, the target return vector field $v_{ \bar{\theta}}$, the BC flow policy $\pi_{\omega}$, the one-step flow policy $\pi_{\eta}$, and the dataset $D$.
    \For{each iteration}
    \State Sample a batch of transitions $\{(s, a, s', r) \} \sim \mathcal{D}$ and a batch of noises $\{\epsilon\} \sim \gN(0, 1)$
    \State Compute the confidence weight $w(s, a, \epsilon)$ using the Euler method and VJP \Comment{{\color{myblue} Sec.~\ref{sec:harnessing-uncertainty}}}
    \State Train the return vector field $v_\theta$ by minimizing $\gL_{\text{Value Flow}}(\theta)$ \Comment{{\color{myblue}~\aref{appendix:practical-flow-matching-loss}}}
    \Statex\hspace{\algorithmicindent}$\triangledown$ {\color{myblue} Offline RL}
    \State Train the BC flow policy $\pi_{\omega}$ by minimizing $\gL_{\text{BC Flow}}(\omega)$ \Comment{{\color{myblue}~\aref{appendix:policy-extraction-strategies}}}
    \Statex\hspace{\algorithmicindent}$\triangledown$ {\color{myblue} Online fine-tuning in offline-to-online RL}
    \State Train the one-step flow policy $\pi_{\eta}$ by minimizing $\gL_{\text{One-step Flow}}(\eta)$ \Comment{{\color{myblue}~\aref{appendix:policy-extraction-strategies}}}
    \State Update the target return vector field $v_{\bar{\theta}}$ using Polyak averages
    \EndFor
    \State{\textbf{Return}} $v_{\theta}$, $\pi_{\omega}$, and $\pi_{\eta}$.
\end{algorithmic}
\end{algorithm}

We now discuss the policy extraction strategies based on the flow-based return models and summarize the complete algorithm of~\ours{}.
We consider two different behavioral-regularized policy extraction strategies for offline RL and offline-to-online RL. First, for offline RL, following prior work~\citep{li2025reinforcement, chen2022offline}, we use rejection sampling to maximize Q estimates (Eq.~\ref{eq:ret-expectation-var-est}) while implicitly imposing a KL constraint~\citep{hilton_kl_maxofn_2023} toward a fixed behavioral cloning (BC) policy. Second, for online fine-tuning in offline-to-online RL, following prior work~\citep{park2025flow}, we learn a stochastic one-step policy to maximize the Q estimates while distilling it toward the fixed BC flow policy. See~\aref{appendix:policy-extraction-strategies} for detailed discussions.

Our algorithm, \ours{}, consists of three main components: \emph{(1)} the vector field estimating the return distribution, \emph{(2)} the confidence weight prioritizing learning the return distribution at uncertain transition, and \emph{(3)} the flow policy selecting actions. Using neural networks to parameterize the return vector field $v_{\theta}$, the BC flow policy $\pi_{\omega}$, and the one-step flow policy $\pi_\eta$, we optimize them using stochastic gradient descent. 
Alg.~\ref{alg:fdrl} summarizes the pseudocode of~\ours{}, and our open-source implementation is available online.\footnote{\href{https://github.com/chongyi-zheng/value-flows}{\texttt{https://github.com/chongyi-zheng/value-flows}}}

\section{Experiments}

\begin{figure}[t]
    \centering
   \begin{tikzpicture}
     \node[anchor=south west, inner sep=0] (img) 
      {\includegraphics[width=\linewidth]{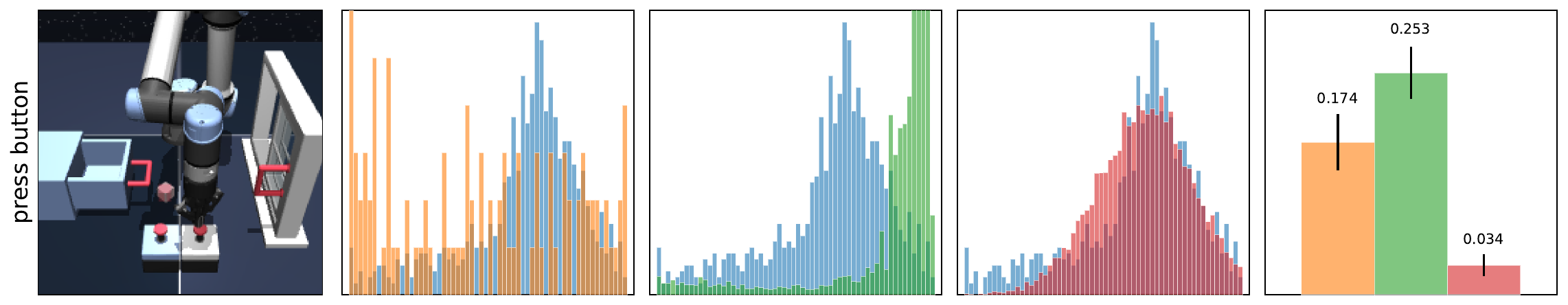}};
     \node[anchor=south west, yshift=-1.5mm, xshift=40mm] at (img.north west) {\scriptsize C51};
     \node[anchor=south west, yshift=-1.5mm, xshift=65.5mm] at (img.north west) {\scriptsize CODAC};
     \node[anchor=south west, yshift=-1.5mm, xshift=91mm] at (img.north west) {\scriptsize Value Flows};
     \node[anchor=south west, yshift=-1.5mm, xshift=113mm] at (img.north west) {\scriptsize $1$-Wasserstein distance};
    \end{tikzpicture}
    \vfill
    \begin{subfigure}[t]{\textwidth}
        \centering
        \includegraphics[width=\linewidth]{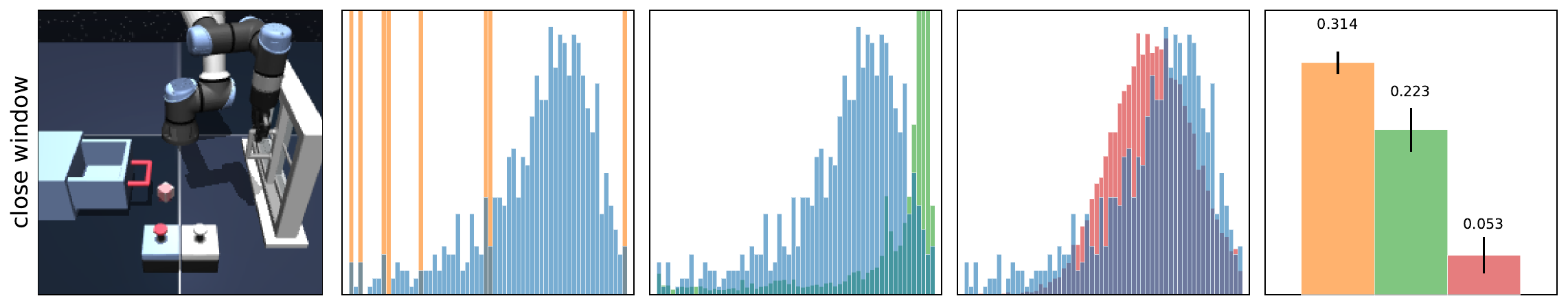}
    \end{subfigure}
    \vfill
    \begin{subfigure}[t]{\textwidth}
        \centering
        \includegraphics[width=\linewidth]{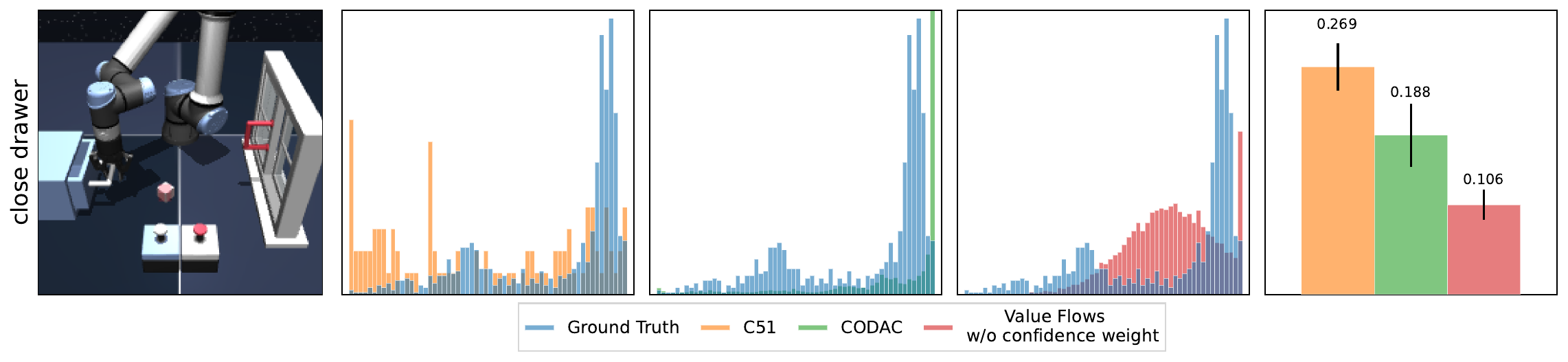}
    \end{subfigure}
    \caption{\footnotesize \textbf{Visualizing the return distribution.} 
    \emph{(Column 1)}\, The policy completes the task of closing the window and closing the drawer using the buttons to lock and unlock them. \emph{(Column 2)}\,C51 predicts a noisy multi-modal distribution and \emph{(Column 3)}\,CODAC collapses to a single return mode. \emph{(Column 4)}\,\ours{} infers a smooth return histogram resembling the ground-truth return distribution. \emph{(Column 5)}\,Quantitatively, \ours{} achieves $3 \times$ lower $1$-Wasserstein distance than alternative methods. See Sec.~\ref{subsec:vis-return-distribution} for details.
    }
    \label{fig:vis-histograms}
    \vspace{-0.5cm} 
\end{figure}

The goal of our experiments is to answer the following key questions:

\begin{enumerate}[start=1,label={(\bfseries Q\arabic*)}]
    \item Does \ours{} predict returns that align with the underlying return distribution?
    \item Can \ours{} effectively learn a policy from a static offline dataset?
    \item How sample efficient is \ours{} in offline-to-online learning compared to prior methods?
    \item What are the key components of \ours{}? 
\end{enumerate}

\paragraph{Experiment Setup.} The most relevant prior work is distributional RL methods, which model the return distribution using a categorical distribution (C51~\citep{bellemare2017distributional}) or a finite number of quantiles (IQN~\citep{dabney2018implicit} and CODAC~\citep{ma2021conservative}). We will compare \ours{} against these prior methods by visualizing the return predictions and measuring the success rates in offline and offline-to-online settings. We also compare \ours{} with prior RL methods that learn the scalar Q values with a Gaussian policy (BC, IQL~\citep{kostrikov2021}, and ReBRAC~\citep{tarasov2023revisitingminimalist}) or a flow policy (FBRAC, IFQL, FQL~\citep{park2025flow}) for completeness. Our experiments will use standard benchmarks introduced by prior work on offline RL. We choose a set of 36 state-based tasks from OGBench~\citep{park2025ogbenchbenchmark} and D4RL~\citep{fu2021d4rldataset} and choose a set of 25 image-based tasks from OGBench to evaluate our algorithm in the offline setting. Following prior work~\citep{park2025flow}, we report means and standard deviations over 8 random seeds for state-based tasks (4 seeds for image-based tasks). We defer the detailed discussions about environments and datasets to~\aref{appendix:env} and baselines to~\aref{appendix-baseline}. We present the hyperparameters in~\aref{appendix-hyperparameters}. 

\subsection{Visualizing return distributions of Value Flows}
\label{subsec:vis-return-distribution}

One of the motivations of designing \ours{} is to use flow-based models to estimate the entire return distribution (Sec.~\ref{sec:motivations}). We hypothesize that the proposed method fits the underlying return distributions with lower distributional discrepancy. To investigate this hypothesis, we study the return predictions from \ours{} and compare them against two alternative distributional RL methods: C51~\citep{bellemare2017distributional} and CODAC~\citep{ma2021conservative}. On \texttt{scene-play-singletask-task2-v0} from OGBench, we use a learned FQL agent to collect $5000$ optimal and suboptimal trajectories as our evaluation dataset to visualize the return distribution for different methods. These trajectories are out-of-distribution because \emph{(1)} they are collected by a learned agent instead of the behavioral policy, and (2) they yield some combinatorial generalization behaviors (see Appendix~\ref{appendix:env} for explanations). We visualize the 1D return histograms inferred by different algorithms, including the ground-truth return distribution for reference. For fair comparison, we use $5000$ return samples and $60$ bins for each histogram. 
For quantitative evaluations, we measure the $1$-Wasserstein distances~\citep{panaretos2019statistical} between the histogram of each method and the ground-truth histogram.

Fig.~\ref{fig:vis-histograms} shows the resulting histograms, superimposing the ground-truth return distribution, along with the $1$-Wasserstein distances for different methods. Observe that \ours{} predicts a smooth return histogram resembling the ground-truth return distribution, while baselines either infer a noisy multi-modal distribution (C51) or collapse to a single return mode (CODAC). 
Numerically, our method achieves a $1$-Wasserstein distance $3 \times$ lower than the best-performing baseline. These results suggest that \ours{} can effectively estimate the multimodal return distribution.

\subsection{Comparing to prior offline and offline-to-online methods}

\begin{table}[t]
\caption{\footnotesize \textbf{Offline evaluation on OGBench and D4RL benchmarks.} \ours{} achieves the best or near-best performance on 9 out of 11 domains. Following prior work~\citep{park2025flow}, we average results over 8 seeds (4 seeds for image-based tasks) and bold values within $95\%$ of the best performance for each domain. See Table~\ref{tab:offline-eval} for full results. }
\label{tab:offline-eval-agg}
\begin{center}
\setlength{\tabcolsep}{5pt}
\scalebox{0.58}
{
\begin{tabular}{lcccccccccc}
\toprule
& \multicolumn{3}{c}{Gaussian Policies} & \multicolumn{7}{c}{Flow Policies} \\
\cmidrule(lr){2-4} \cmidrule(lr){5-11}
Domain & BC & IQL & ReBRAC & FBRAC & IFQL & FQL & C51 & IQN & CODAC & \ours{} \\
\midrule
\texttt{cube-double-play (5 tasks)}  & $2 \pm 1$ & $6 \pm 2$ & $12 \pm 3$ & $15 \pm 6$ & $14 \pm 5$ & $29 \pm 6$ & $2 \pm 0$ & $42 \pm 8$ & $61 \pm 6$ & $\mathbf{69} \pm 4$\\
\texttt{cube-triple-play (5 tasks)}  & $0 \pm 0$ & $1 \pm 1$ & $0 \pm 0$ & $0 \pm 0$ & $0 \pm 0$ & $4 \pm 2$ & $0 \pm 0$ & $6 \pm 0$ & $2 \pm 1$ & $\mathbf{14} \pm 3$\\
\texttt{puzzle-3x3-play (5 tasks)}  & $2 \pm 1$ & $9 \pm 3$ & $22 \pm 2$ & $14 \pm 5$ & $19 \pm 1$ & $30 \pm 4$ & $1 \pm 0$ & $15 \pm 1$ & $20 \pm 5$ & $\mathbf{87} \pm 13$\\
\texttt{puzzle-4x4-play (5 tasks)}  & $0 \pm 0$ & $7 \pm 2$ & $14 \pm 3$ & $13 \pm 5$ & $\mathbf{25} \pm 8$ & $17 \pm 5$ & $0 \pm 0$ & $\mathbf{27} \pm 4$ & $20 \pm 18$ & $\mathbf{27} \pm 4$\\
\texttt{scene-play (5 tasks)}  & $5 \pm 2$ & $28 \pm 3$ & $41 \pm 7$ & $45 \pm 5$ & $30 \pm 4$ & $\mathbf{56} \pm 2$ & $4 \pm 1$ & $40 \pm 1$ & $\mathbf{55} \pm 1$ & $\mathbf{59} \pm 4$\\
\midrule
\texttt{visual-antmaze-medium-navigate (5 tasks)}  & - & $\mathbf{84} \pm 5$ & $\mathbf{87} \pm 4$ & $30 \pm 3$ & $\mathbf{87} \pm 2$ & $38 \pm 4$ & - & $74 \pm 4$ & - & $75 \pm 10$\\
\texttt{visual-antmaze-teleport-navigate (5 tasks)}  & - & $6 \pm 4$ & $4 \pm 1$ & $6 \pm 3$ & $\mathbf{10} \pm 4$ & $5 \pm 2$ & - & $4 \pm 2$ & - & $\mathbf{13} \pm 4$\\
\texttt{visual-cube-double-play (5 tasks)}  & - & $\mathbf{11} \pm 6$ & $1 \pm 1$ & $2 \pm 1$ & $2 \pm 2$ & $6 \pm 1$ & - & $1 \pm 0$ & - & $\mathbf{13} \pm 2$\\
\texttt{visual-puzzle-3x3-play (5 tasks)}  & - & $2 \pm 3$ & $20 \pm 1$ & $1 \pm 1$ & $21 \pm 0$ & $20 \pm 1$ & - & $19 \pm 1$ & - & $\mathbf{23} \pm 2$\\
\texttt{visual-scene-play (5 tasks)}  & - & $26 \pm 5$ & $28 \pm 5$ & $11 \pm 1$ & $21 \pm 2$ & $\mathbf{41} \pm 4$ & - & $\mathbf{41} \pm 6$ & - & $\mathbf{43} \pm 7$\\
\midrule
\texttt{D4RL adroit (12 tasks)}  & $48$ & $53$ & $\mathbf{59}$ & $50 \pm 3$ & $52 \pm 4$ & $52 \pm 3$ & $48 \pm 2$ & $50 \pm 3$ & $52 \pm 1$ & $50 \pm 2$\\
\bottomrule
\end{tabular}
}
\end{center}
\vspace{-0.3cm}
\end{table}

\paragraph{Offline RL.} Our next experiments compare \ours{} to prior offline RL methods, including those estimating the return distribution using alternative mechanisms. We compare our method against baselines on both state-based tasks and image-based tasks. Results in Table~\ref{tab:offline-eval-agg} aggregate over 5 tasks in each domain of OGBench and 12 tasks from D4RL, and we defer the full results to Appendix Table~\ref{tab:offline-eval}. These results show that \ours{} matches or surpasses all baselines on 9 out of 11 domains. On state-based OGBench tasks, most baselines performed similarly on the easier domains (\texttt{cube-double-play}, \texttt{scene-play}, and \texttt{D4RL adroit}), while \ours{} is able to obtain $40\%$ mean improvement. On those more challenging state-based tasks, \ours{} achieves $1.6 \times$ higher success rates than the best performing baseline. In addition, \ours{} is able to outperform the
best baseline by $1.24 \times$ using RGB images as input directly (visual tasks).

\begin{wrapfigure}[17]{r}{0.5\linewidth}
    \vspace{-2.25em}
    \centering
    \includegraphics[width=\linewidth]{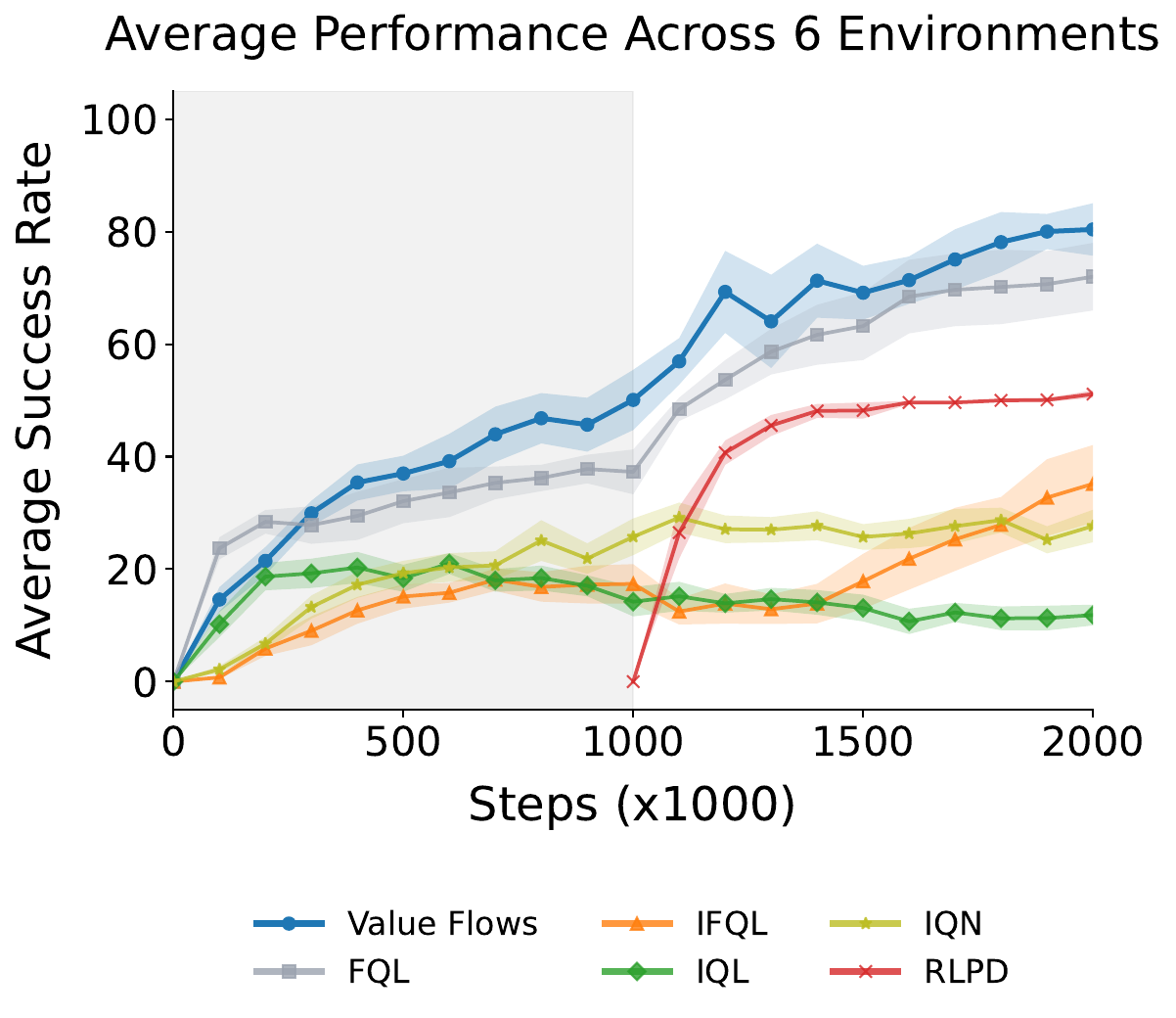}
    \vspace{-2.25em}
    \caption{
        \footnotesize\textbf{Offline-to-online evaluation.} Using the same distributional flow-matching objective, \ours{} achieves higher average success rates. See Fig.~\ref{fig:offline-to-online} for the full results. 
    }
    \label{fig:offline-to-online-agg}
\end{wrapfigure}

\paragraph{Offline-to-Online RL.} We next study whether the proposed method can be used for online fine-tuning. We hypothesize that \ours{} continues outperforming prior state-of-the-art RL and distributional RL methods using online interactions, without any modifications to the distributional flow-matching losses. Results in Fig.~\ref{fig:offline-to-online-agg} suggest that \ours{} achieves strong fine-tune performance with high sample efficiency in the online fine-tuning phase. On \texttt{puzzle-4x4-play}, \ours{} achieves $15\%$ higher performance than all of the prior offline-to-online RL algorithms. See Appendix Table~\ref{tab:offline2online-eval} for the full results and Fig.~\ref{fig:offline-to-online} for the complete learning curves. Taken together, \ours{} can be widely applied in both offline and offline-to-online settings.

\subsection{The key components of Value Flows}
\label{sec:ablation}

Our final set of experiments studies the key components of \ours{}: the regularization loss (Eq.~\ref{eq:bcfm-reg}), and the confidence weight (Eq.~\ref{eq:confidence-weight}). To study the effects of these two components, we conduct ablation experiments on two representative state-based OGBench tasks, varying the regularization coefficient $\lambda$ for the regularization loss and the temperature $\tau$ for the confidence weight.

\begin{figure}[t]
    \centering
    \begin{minipage}[t]{0.49\textwidth}
        \centering
        \includegraphics[width=\textwidth]{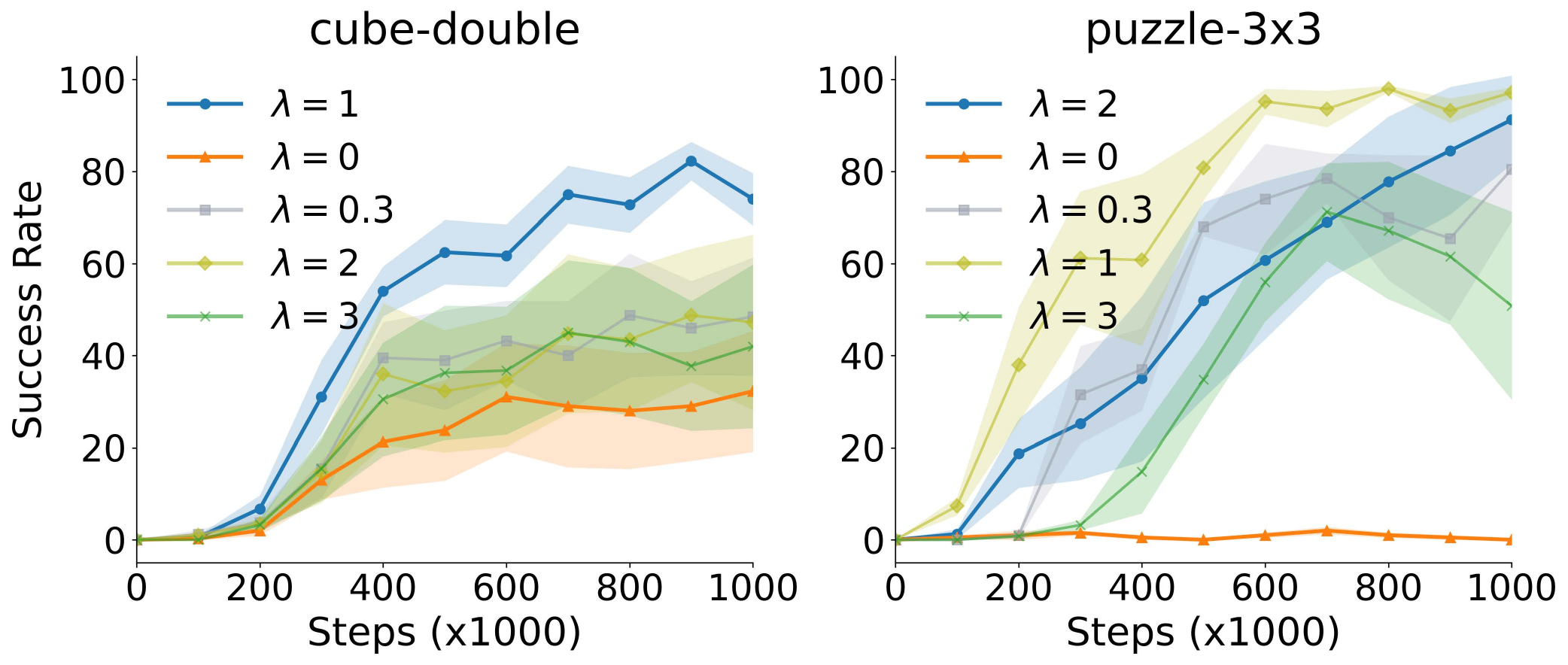}
        \caption{\footnotesize  \textbf{Regularizing the flow-matching loss is important.} The regularization coefficient $\lambda$ needs to be tuned for better performance. }
        \label{fig:lambda}
    \end{minipage}
    \hfill
    \begin{minipage}[t]{0.49\textwidth}
        \centering
        \includegraphics[width=\textwidth]{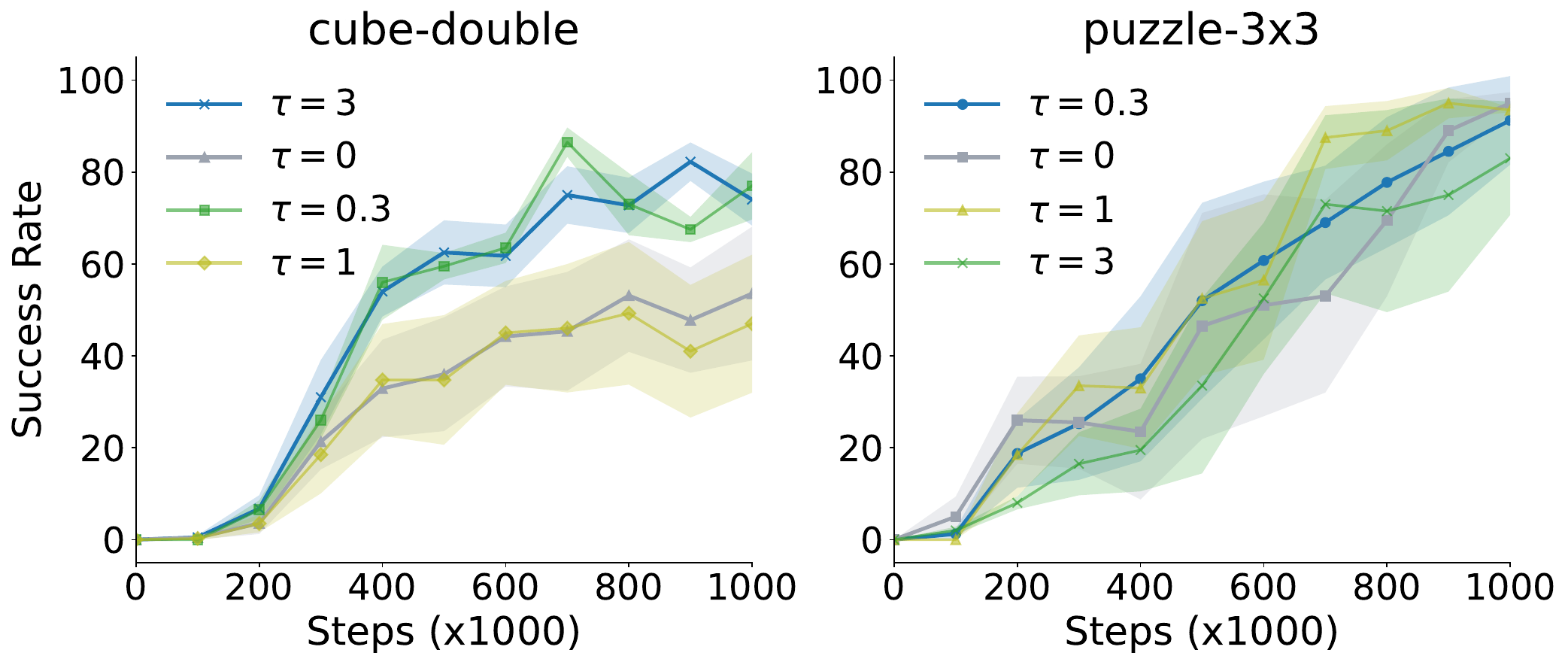}
        \caption{\footnotesize  \textbf{Reweighting the flow-matching objective boosts success rates.} Choosing the correct confidence weight boosts the performance of \ours{}. }
        \label{fig:tau}
    \end{minipage}
\end{figure}

First, results in Fig.~\ref{fig:lambda} suggest that adding the BCFM regularization loss into the distributional flow matching objective is important for making progress on the \texttt{puzzle-3x3} task, while choosing the correct regularization coefficient $\lambda$ boosts the performance of \ours{} by $2.6 \times$. We also observe that the best $\lambda$ value increases the sample efficiency of our algorithm by $2\times$ compared to the unregularized variant. We conjecture that the BCFM regularization serves as an efficient anchor for learning the full return distribution. Second, Fig.~\ref{fig:tau} demonstrates that using the confidence weight to reweight the distributional flow matching loss increases the success rate, where an appropriate temperature $\tau$ results in a $60\%$ higher success rate on average. In~\aref{appendix:additional-ablations}, we include additional experiments studying the robustness of \ours{} against the number of flow steps (Fig.~\ref{fig:step}) in the Euler method and the number of candidates in rejection sampling (Fig.~\ref{fig:rejection}) for action selections.

\paragraph{Additional experiments.} In~\aref{appendix:wall-time-and-gpu-mem}, we investigate the computational time and GPU memory consumption of \ours{}.~\aref{appendix:risk-sensitive-mdp} demonstrates the capability of \ours{} in an MDP requiring exploration and risk-sensitive control. In~\aref{appendix:d4rl-adroit-gaussian-policy}, we include an ablation study showing that using Gaussian policies improves the performance of \ours{} on \texttt{D4RL adroit} tasks. The ablations in~\aref{appendix:role-of-BCFM} study the role of the $\gL_{\text{BCFM}}$ loss, which mainly serves as a regularization.

\section{Conclusion}

We present \ours{}, an RL algorithm that uses modern, flexible flow-based models to estimate the full future return distributions. Theoretically, we show that our objective generates a probability path satisfying the distributional Bellman equation. Our experiments demonstrate that \ours{} outperforms state-of-the-art offline RL and offline-to-online RL methods in complex continuous control tasks. 

\paragraph{Limitations.} Although we focus on learning the full return distribution using flow-matching and reweighting our flow-matching objective using uncertainty estimation, it remains unclear how to disentangle the epistemic uncertainty from the aleatoric uncertainty with the current method. In addition, our discussion is orthogonal to the policy extraction mechanisms. Finding the appropriate policy extraction method within the distributional RL framework might reveal the true benefits of estimating the full return distribution.

\section*{Reproducibility statment}

We include additional implementation details for hyperparameters, datasets,
and evaluation protocols in~\aref{appendix:experiment-detail}. Our open-source implementations, including baselines, can be found at~\href{https://github.com/chongyi-zheng/value-flows}{\texttt{https://github.com/chongyi-zheng/value-flows}}. We run experiments for state-based tasks on A6000 GPUs for up to 6 hours, and run experiments for image-based tasks on the same type of GPUs for up to 16 hours.

\section*{Acknowledgements}

We thank Catherine Ji, Bhavya Agrawalla, and Seohong Park for their helpful discussions. We thank Seohong Park for sharing the data for some baselines in our experiments. This work used the Della computational cluster provided by Princeton Research Computing, as well as the Ionic and Neuronic computing clusters maintained by the Department of Computer Science at Princeton University. This work was in part supported by an NSF CAREER award, AFOSR YIP, ONR grant N00014-22-1-2293. This material is based upon work supported by the National Science Foundation under Award Numbers 2441665, 2237693, and 1941722. Any opinions, findings, conclusions, or recommendations expressed in this material are those of the authors and do not necessarily reflect the views of the National Science Foundation.

\clearpage

\newpage
\appendix

\begin{table}[H]
\caption{\footnotesize Summary of notations and their meanings.}
\label{tab:notations}
\begin{center}
\scalebox{0.88}{
\centering
\begin{tabular}{ll}
\toprule
\textbf{Notation} & \textbf{Meaning} \\
\midrule
   $\gS$  & the state space \\
   $\gA$  & the action space \\
   $\rho$ & the initial state distribution \\
   $\Delta(\gX)$ & set of all possible probability distributions over a space $\gX$ \\
   $r(s, a)$ & the reward function  \\
   $\gamma$ & the discount factor \\
   $p(s' \mid s, a)$ & the environmental transition distribution \\
   $h$ & RL timestep \\
   $F_X$, $p_X$ & CDF and PDF of the random variable $X$ \\
   $D = \{ (s, a, r, s') \}$ & the offline dataset \\
   $Z^{\pi}, Z^{\pi}(s, a)$ & return random variable and conditional return random variable of policy $\pi$ \\
   $Q^{\pi}$ & the Q function of policy $\pi$ \\
   $\gT^{\pi}$ & the distributional Bellman operator \\
   $\epsilon \sim \gN(0, I)$ & noise sampled from the standard Gaussian distribution \\
   $\phi(\epsilon \mid t)$, $\phi(\epsilon \mid t, s, a)$ & time-dependent diffeomorphic flow \\
   $v(x^t \mid t)$, $v(z^t \mid t, s, a)$ & time-dependent vector field \\
   $p(x^t \mid t)$, $z^t \mid t, s, a$ & time-dependent probability density path \\
   $v_k(z^t | t, s, a)$, $p_k(x^t \mid t, s, a)$ & return vector field and return probability density path at iteration $k$ \\
   $\gL_{\text{DFM}}$ & the distributional flow matching loss \\
   $\gL_{\text{DCFM}}$ & the distributional conditional flow matching loss \\
   $\widehat{\E}[X]$, $\widehat{\text{Var}}(X)$ & expectation and variance estimation of random variable $X$ \\
   $w(s, a, \epsilon)$ & the confidence weight \\
   $\tau$ & the confidence weight temperature \\
   $\mathbb{P}(Y)$ & probability of event $Y$ \\
   $\text{div}(\cdot)$ & divergence operator \\
   $\gL_{\text{CFM}}$ & the conditional flow matching loss \\
   $J_v$, $J_\phi$ & Jacobians of vector field $v$ and flow $\phi$ \\
   $\widehat{\text{Cov}}(X)$ & covariance estimation of random variable $X$ \\
   $O(\cdot)$ & polynomial higher order terms \\
   $\gL_{\text{BCFM}}$ & the bootstrapped conditional flow matching loss \\
   $\gL_{\text{wDCFM}}$, $\gL_{\text{wBCFM}}$, $\gL_{\text{Value Flow}}$ & the weighted DCFM loss, the weighted BCFM loss, and the Value Flow loss \\
   $\lambda$ & the regularization coefficient of the weighted BCFM loss \\
   $\pi^{\beta}$, $\pi$ & the flow BC policy and flow one-step policy \\
   $\hat{Q}(s, a)$ & Q estimation at state-action pair $(s, a)$ \\
   $\gL_{\text{One-step Flow}}$ & the flow one-step policy loss \\
   $\alpha$ & the flow BC distillation coefficient in the flow one-step policy loss \\
\bottomrule
\end{tabular}
}
\end{center}
\end{table}

\section{Preliminaries}

\subsection{The return and the Q function}
\label{appendix:ret-and-q}

The goal of RL is to learn a policy $\pi: \gS \to \Delta(\gA)$ that maximizes the expected discounted return $\E_{\pi(S_0, A_0, S_1, A_1, \cdots)} \left[ \sum_{h = 0}^{\infty} \gamma^h r(S_h, A_h) \right]$ over trajectories $\left(S_0, A_0, S_1, A_1, \cdots \right)$. Starting from a state-action pair $(s, a)$, we denote the conditional trajectories as $\left(S_0 = s, A_0 = a, S_1, A_1, \cdots \right)$. This notation allows us to define the Q function of the policy $\pi$ as $Q^{\pi}(s, a) = \E_{\pi(S_0 = s, A_0 = a, S_1, A_1, \cdots)} \left[ \sum_{h = 0}^{\infty} \gamma^h r(S_h, A_h) \right]$. Prior actor-critic methods~\citep{fujimoto2018addressing, haarnoja2018soft} typically estimate the Q function by minimizing the temporal difference (TD) error~\citep{ernst2005tree, fu2019diagnosing, fujimoto2022should}. We will construct TD errors to learn the entire return distribution in Sec.~\ref{sec:flow-distributional-return-estimation}.

\subsection{Distributional reinforcement learning}
\label{appendix:distributional-rl}

\begin{lemma}[Proposition 1 of~\citet{morimura2010nonparametric}]
\label{lemma:cdf-dist-bellman-op}
    For a policy $\pi$, return value $z \in \left[z_{\text{min}}, z_{\text{max}} \right]$, state $s \in \gS$, and action $a \in \gA$, the cumulative distribution function of the random variable $\gT^{\pi} Z(s, a)$, i.e., $F_{\gT^{\pi} Z}(\cdot \mid s, a)$, satisfies
    \begin{align}
        F_{\gT^{\pi} Z}(z \mid s, a) = \E_{p(s' \mid s, a), \pi(a' \mid s')} \left[ F_Z \left( \left. \frac{z - r(s, a)}{\gamma} \right\rvert s', a' \right) \right].
        \label{eq:cdf-dist-bellman-op}
    \end{align}
\end{lemma}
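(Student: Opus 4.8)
The plan is to compute the CDF of $\gT^{\pi}Z(s,a)$ directly from its defining distributional identity (Eq.~\ref{eq:dist-bellman-op}), namely $\gT^{\pi}Z(s,a) \stackrel{d}{=} r(s,a) + \gamma Z(S', A')$, where $(S',A')$ is drawn from $p(s'\mid s,a)\pi(a'\mid s')$ independently of the family $\{Z(\cdot,\cdot)\}$, and where $Z(s',a')$, given $(S',A')=(s',a')$, carries the return law with CDF $F_Z(\cdot\mid s',a')$. Since $F_{\gT^{\pi}Z}(\cdot\mid s,a)$ depends only on the law of the right-hand side, it is enough to evaluate $\Pr\!\left[r(s,a) + \gamma Z(S',A') \le z\right]$.

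First I would condition on $(S',A')$. By the tower rule,
\begin{align}
F_{\gT^{\pi} Z}(z \mid s, a) = \E_{p(s'\mid s,a),\,\pi(a'\mid s')}\Big[\Pr\big( r(s,a) + \gamma Z(s',a') \le z \mid S'=s', A'=a' \big)\Big].
\end{align}
Conditioned on $(S',A')=(s',a')$, the reward $r(s,a)$ is a fixed constant, so — using $\gamma>0$, which makes $x\mapsto\gamma x$ strictly increasing — the event $\{r(s,a)+\gamma Z(s',a')\le z\}$ coincides with $\{Z(s',a')\le (z-r(s,a))/\gamma\}$, whose conditional probability is $F_Z\!\left((z-r(s,a))/\gamma \mid s',a'\right)$. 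Substituting this into the displayed expectation gives Eq.~\ref{eq:cdf-dist-bellman-op}.

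The argument is essentially bookkeeping, and I expect the only real care to go into making the conditioning rigorous: one must pin down the probability space on which $\gT^{\pi}Z(s,a)$ is realized — sampling $(S',A')$ first and then an independent draw from the return law at $(S',A')$ — so that the regular conditional distribution of $Z(S',A')$ given $(S',A')=(s',a')$ is indeed $F_Z(\cdot\mid s',a')$; this is what licenses pushing the CDF inside the expectation. A secondary, minor point is the degenerate case $\gamma=0$, for which the right-hand side of Eq.~\ref{eq:cdf-dist-bellman-op} is ill-posed and the identity should be read for $\gamma\in(0,1)$ (for $\gamma=0$ it reduces to $\gT^{\pi}Z(s,a)\stackrel{d}{=}r(s,a)$). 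Finally, the stochastic-reward generalization noted in the footnote follows from the same computation after additionally conditioning on the sampled reward inside the expectation.
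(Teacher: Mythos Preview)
Your proposal is correct and follows essentially the same approach as the paper's proof: both invoke the distributional identity, condition on $(S',A')$ via the law of total probability, rearrange the event $\{r(s,a)+\gamma Z(s',a')\le z\}$ to $\{Z(s',a')\le (z-r(s,a))/\gamma\}$, and identify the conditional probability with $F_Z$. Your discussion of the probability-space construction, the $\gamma>0$ caveat, and the stochastic-reward extension are refinements the paper leaves implicit, but the core argument is identical.
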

For completeness, we include a full proof.
\begin{proof}
By the definition of identity in distribution, we have
\begin{align*}
    F_{\gT^{\pi} Z}(z \mid s, a) = F_{r(s, a) + \gamma Z(S', A')}(z \mid s, a).
\end{align*}
Expanding the definition of the CDF $F_{r(s, a) + \gamma Z(S', A')}$ gives us
\begin{align*}
    F_{r(s, a) + \gamma Z(S', A')}(z \mid s, a) &= \sP(r(s, a) + \gamma Z(S', A') \leq z \mid s, a) \\
    &\overset{(a)}{=} \E_{p(s' \mid s, a), \pi(a' \mid s')} \left[ \sP \left( r(s, a) + \gamma Z(s', a') \leq z \mid s', a' \right) \right] \\
    &= \E_{p(s' \mid s, a), \pi(a' \mid s')} \left[ \sP \left( \left. Z(s', a') \leq \frac{z - r(s, a)}{\gamma} \right\rvert s', a' \right) \right] \\
    &\overset{(b)}{=} \E_{p(s' \mid s, a), \pi(a' \mid s')} \left[ F_{Z}\left( \left. \frac{z - r(s, a)}{\gamma} \right\rvert s', a' \right) \right]
\end{align*}
in \emph{(a)} we use the law of total probability and in \emph{(b)} we use the definition of the CDF $F_Z$. Thus, we conclude that $F_{\gT^{\pi} Z}(z \mid s, a) = \E_{p(s' \mid s, a), \pi(a' \mid s')} \left[ F_{Z}\left( \left. \frac{z - r(s, a)}{\gamma} \right\rvert s', a' \right) \right]$.
\end{proof}

The connection between the CDFs of $\gT^{\pi} Z$ and $Z$ also allows us to derive an identity between their PDFs, suggesting an alternative definition of the distributional Bellman operator:
\begin{lemma}[Chapter 4 of~\citet{bellemare2023distributional}]
\label{lemma:pdf-dist-bellman-op}
    For a policy $\pi$, return value $z \in \left[z_{\text{min}}, z_{\text{max}} \right]$, state $s \in \gS$, and action $a \in \gA$, the probability density function of the random variable $\gT^{\pi} Z(s, a)$, i.e., $p_{\gT^{\pi} Z}(\cdot \mid s, a)$, satisfies
    \begin{align*}
        p_{\gT^{\pi} Z}(z \mid s, a) = \frac{1}{\gamma} \E_{p(s' \mid s, a), \pi(a' \mid s')} \left[ p_Z \left( \left. \frac{z - r(s, a)}{\gamma} \right\rvert s', a' \right) \right].
    \end{align*}
    Alternatively, the distributional Bellman operator can operate on the density function directly, 
    \begin{align}
        \gT^{\pi} p_Z(z \mid s, a) \triangleq p_{\gT^{\pi}Z}(z \mid s, a).
        \label{eq:pdf-dist-bellman-backup}
    \end{align}
\end{lemma}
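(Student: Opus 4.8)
The plan is to derive the PDF identity by differentiating the CDF identity of \lemmaref{lemma:cdf-dist-bellman-op} with respect to the return value $z$. Starting from \eqref{eq:cdf-dist-bellman-op},
\[
F_{\gT^{\pi} Z}(z \mid s, a) = \E_{p(s' \mid s, a), \pi(a' \mid s')} \left[ F_Z \left( \left. \frac{z - r(s, a)}{\gamma} \right\rvert s', a' \right) \right],
\]
I would apply $\partial / \partial z$ to both sides. On the left-hand side, $\partial F_{\gT^{\pi}Z}(z \mid s, a)/\partial z = p_{\gT^{\pi}Z}(z \mid s, a)$ by definition of the PDF. On the right-hand side, after exchanging the derivative with the expectation, the chain rule applied to the inner term with $u(z) = (z - r(s,a))/\gamma$ (so that $u'(z) = 1/\gamma$) gives $\partial/\partial z\, F_Z(u(z) \mid s', a') = \frac{1}{\gamma} p_Z\left( \left. \frac{z - r(s,a)}{\gamma} \right\rvert s', a' \right)$. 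Collecting the two sides yields the claimed density identity, and the operator-on-density form in \eqref{eq:pdf-dist-bellman-backup} is then just a change of notation, $\gT^{\pi} p_Z \triangleq p_{\gT^{\pi}Z}$.

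The main obstacle is justifying the interchange of differentiation and expectation on the right-hand side. Since the reward is bounded, $Z(s',a')$ is supported on the compact interval $[z_{\text{min}}, z_{\text{max}}]$, and the lemma already presumes that $Z(s',a')$ admits a conditional density $p_Z(\cdot \mid s', a')$. Under the mild regularity that these conditional densities are (essentially) bounded uniformly over $(s',a')$ --- or, more weakly, that the family of difference quotients of $F_Z$ is dominated by an integrable function --- the dominated convergence theorem applied to the difference quotient $\bigl(F_{\gT^{\pi}Z}(z+\delta \mid s,a) - F_{\gT^{\pi}Z}(z \mid s,a)\bigr)/\delta$ lets us pass the limit inside the expectation, producing exactly the differentiated identity. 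If one prefers to avoid pointwise assumptions, the identity still holds as an equality of measures (in the sense of distributions), which is all that the flow-matching development in Sec.~\ref{sec:flow-distributional-return-estimation} actually requires.

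An alternative, equivalent route avoids differentiation entirely and mirrors the structure of the proof of \lemmaref{lemma:cdf-dist-bellman-op} at the level of densities: write $\gT^{\pi}Z(s,a) \stackrel{d}{=} r(s,a) + \gamma Z(S',A')$, note that for fixed $(s',a')$ the affine map $x \mapsto r(s,a) + \gamma x$ with $\gamma > 0$ pushes the density $p_Z(\cdot \mid s', a')$ forward to $\frac{1}{\gamma} p_Z\left( \left. \frac{z - r(s,a)}{\gamma} \right\rvert s', a' \right)$, and then mix over the joint law $p(s'\mid s,a)\pi(a'\mid s')$ by the law of total probability (as in step $(a)$ of that proof). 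This reproduces the same expression. I expect the differentiation argument to be the shorter one to write down given that \lemmaref{lemma:cdf-dist-bellman-op} is already available, so that is the route I would take, flagging the dominated-convergence step as the only place where any care is needed.
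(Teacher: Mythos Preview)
Your proposal is correct and follows exactly the same route as the paper's proof: differentiate the CDF identity of \lemmaref{lemma:cdf-dist-bellman-op} in $z$, apply the chain rule to pick up the $1/\gamma$ factor, and read off the PDF relation. If anything, you are more careful than the paper, which simply swaps the derivative and the expectation without comment; your dominated-convergence justification and the alternative pushforward argument are both sound but unnecessary for matching the paper's level of rigor.
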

For completeness, we include a full proof. 
\begin{proof}
    Since the PDF of a continuous random variable can be obtained by taking the derivative of its CDF, we can easily show the desired identity by taking the derivative on both sides of Eq.~\ref{eq:cdf-dist-bellman-op}.
    \begin{align*}
        \frac{d}{d z} F_{\gT^{\pi}Z}(z \mid s, a) &= \frac{d}{d z} \E_{p(s' \mid s, a), \pi(a' \mid s')} \left[ F_{Z}\left( \left. \frac{z - r(s, a)}{\gamma} \right\rvert s', a' \right) \right], \\
        p_{\gT^{\pi} Z}(z \mid s, a ) &\overset{(a)}{=} \frac{1}{\gamma} \E_{p(s' \mid s, a), \pi(a' \mid s')} \left[ p_Z \left( \left. \frac{z - r(s, a)}{ \gamma } \right\rvert s', a' \right) \right],
    \end{align*}
    where we use the chain rule of derivatives in \emph{(a)}.
\end{proof}
The stochasticity of the distributional Bellman operator $\gT^{\pi}$ mainly comes from \emph{(1)} the environmental transition $p(s' \mid s, a)$ and \emph{(2)} the stochastic policy $\pi(a \mid s)$. We next justify the unique fixed point $Z^{\pi}$ of the distributional Bellman operator $\gT^{\pi}$.

\begin{lemma}
\label{lemma:fixed-point-justification}
    The distributional Bellman operator $\gT^{\pi}$ admits the unique fixed point $Z^{\pi}$. Specifically, we have
    \begin{align}
        \gT^{\pi} Z^{\pi}(s, a) &\overset{d}{=} Z^{\pi}(s, a), \nonumber \\
        \gT^{\pi} p_{Z^{\pi}}(z \mid s, a) &= p_{Z^{\pi}}(z \mid s, a).
        \label{eq:pdf-dist-bellman-eq}
    \end{align}

    \begin{proof}
        By definition of $Z^{\pi}(s, a)$ and $\gT^{\pi}$, we have
        \begin{align*}
            Z^{\pi}(s, a) &= r(S_0 = s, A_0 = a) + \gamma \sum_{t = 1}^{\infty} \gamma^{h - 1} r(S_h, A_h) \\
            &\overset{d}{\underset{(a)}{=}}
            r(s, a) + \gamma Z^{\pi} \\
            &\underset{(b)}{=} r(s, a) + \gamma Z^{\pi}(S', A') \\
            &\overset{d}{=} \gT^{\pi} Z^{\pi}(s, a),
        \end{align*}
        with $S'$ and $A'$ being random variables of the next state-action pair, in \emph{(a)} we use the stationary property of MDP, and in \emph{(b)} we plug in the definition of $Z^{\pi}$. Thus, we conclude that $\gT^{\pi} Z^{\pi}(s, a) \overset{d}{=} Z^{\pi}(s, a)$. By~\lemmaref{lemma:cdf-dist-bellman-op}, the CDF of $Z^{\pi}(s, a)$ satisfies
        \begin{align*}
            F_{Z^{\pi}}(z \mid s, a) = F_{ \gT^{\pi} Z^{\pi} }(z \mid s, a) = \E_{p(s' \mid s, a), \pi(a' \mid s')} \left[ F_{ Z^{\pi}} \left( \left. \frac{z - r(s, a)}{\gamma} \right \rvert s', a' \right) \right].
        \end{align*}
        Taking derivatives with respect to $z$ on both sides, the PDF of $Z^{\pi}(s, a)$ satisfies
        \begin{align*}
            p_{Z^{\pi}}(z \mid s, a) = \frac{1}{\gamma} \E_{p(s' \mid s, a), \pi(a' \mid s')} \left[ p_{Z^{\pi}} \left( \left. \frac{z - r(s, a)}{\gamma} \right \rvert s', a' \right) \right].
        \end{align*}
        We conclude that $\gT^{\pi} p_{Z^{\pi}}(z \mid s, a) = p_{Z^{\pi}} (z \mid s, a)$ using the definition of the distributional Bellman operator $\gT^{\pi}$ on PDFs (\lemmaref{lemma:pdf-dist-bellman-op}).
    \end{proof}
\end{lemma}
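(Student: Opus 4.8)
The plan is to verify directly that the conditional return $Z^{\pi}(s,a)$ is a fixed point of $\gT^{\pi}$ --- first in the random-variable form ($\overset{d}{=}$), then in the density form --- and to obtain uniqueness by appealing to the $\gamma$-contraction property of $\gT^{\pi}$ under the supremal $p$-Wasserstein metric that was already recorded in the preliminaries~\citep{bellemare2017distributional}.

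For the distributional identity I would start from the definition $Z^{\pi}(s,a) = r(s,a) + \sum_{h=1}^{\infty}\gamma^{h} r(S_h,A_h)$, where $(S_1,A_1,S_2,A_2,\dots)$ is the trajectory generated by rolling out $\pi$ from $(s,a)$; since $r$ is bounded and $\gamma<1$ the series converges almost surely and takes values in $[z_{\text{min}},z_{\text{max}}]$. Pulling out one discount factor gives $Z^{\pi}(s,a)=r(s,a)+\gamma\sum_{h=1}^{\infty}\gamma^{h-1}r(S_h,A_h)$. The crucial step is to condition on the first transition $(S_1,A_1)$: by the time-homogeneous Markov property of the MDP, conditioned on $S_1=s'$ and $A_1=a'$ the shifted tail $\sum_{h=1}^{\infty}\gamma^{h-1}r(S_h,A_h)$ has exactly the law of $Z^{\pi}(s',a')$. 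Since $(S_1,A_1)$ has joint density $p(s'\mid s,a)\pi(a'\mid s')$, this shows $\sum_{h=1}^{\infty}\gamma^{h-1}r(S_h,A_h)\overset{d}{=}Z^{\pi}(S',A')$ for $(S',A')$ drawn from that joint density, and hence $Z^{\pi}(s,a)\overset{d}{=}r(s,a)+\gamma Z^{\pi}(S',A')$, which is exactly $\gT^{\pi}Z^{\pi}(s,a)$ by \Eqref{eq:dist-bellman-op}.

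The density identity then follows mechanically: applying \lemmaref{lemma:cdf-dist-bellman-op} with $Z=Z^{\pi}$ and using $F_{Z^{\pi}}=F_{\gT^{\pi}Z^{\pi}}$ gives $F_{Z^{\pi}}(z\mid s,a)=\E_{p(s'\mid s,a),\pi(a'\mid s')}[F_{Z^{\pi}}((z-r(s,a))/\gamma\mid s',a')]$, and differentiating in $z$ with the chain rule, exactly as in the proof of \lemmaref{lemma:pdf-dist-bellman-op}, yields $p_{Z^{\pi}}(z\mid s,a)=\frac{1}{\gamma}\E_{p(s'\mid s,a),\pi(a'\mid s')}[p_{Z^{\pi}}((z-r(s,a))/\gamma\mid s',a')]$, i.e.\ $\gT^{\pi}p_{Z^{\pi}}(z\mid s,a)=p_{Z^{\pi}}(z\mid s,a)$ by the density-form definition of the operator (this uses the standing assumption that $Z^{\pi}$ admits a density). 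Uniqueness needs no fresh argument: $\gT^{\pi}$ is a $\gamma$-contraction on a complete space of return-distribution functions, so Banach's fixed-point theorem gives a unique fixed point, and the two identities above identify it with the law of $Z^{\pi}$.

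The step I expect to be the only real obstacle is the conditioning argument in the second paragraph --- rigorously justifying that the shifted tail of the reward process, conditioned on the first state-action pair, has precisely the law of $Z^{\pi}$ at that pair. This is intuitively immediate from time-homogeneity and is where one would invoke the Markov property and the law of total probability carefully (the same device as step ``(a)'' in the proof of \lemmaref{lemma:cdf-dist-bellman-op}); everything after it is routine differentiation and a citation.
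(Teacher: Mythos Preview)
Your proposal is correct and follows essentially the same route as the paper: decompose $Z^{\pi}(s,a)$ by peeling off the first reward, invoke the Markov/stationarity property to identify the discounted tail with $Z^{\pi}(S',A')$, then pass to CDFs via \lemmaref{lemma:cdf-dist-bellman-op} and differentiate as in \lemmaref{lemma:pdf-dist-bellman-op}. The only addition is that you spell out uniqueness via Banach's fixed-point theorem, whereas the paper's proof leaves uniqueness implicit in the contraction property already cited in the preliminaries.
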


\subsection{Flow matching}
\label{appendix:flow-matching}

\begin{wrapfigure}{R}{0.5\textwidth}
    \vspace{-2.25em}
    \begin{minipage}{0.5\textwidth}
        \begin{algorithm}[H]
            \caption{Euler method for solving the flow ODE (Eq.~\ref{eq:flow-ode}).} 
            \label{alg:euler-method}
            \begin{algorithmic}[1]
                \State{\textbf{Input}} The vector field $v$, the noise $\epsilon$, the initial flow time $t_\text{init}$, the final flow time $t_\text{final}$, and the number of steps $T$.
                \State Initialize $t = t_{\text{init}}$, $\Delta t = \frac{t_{\text{final}} - t_{\text{init}}}{T}$, $x^t = \epsilon$.
                \For{each step $t = t_{\text{init}}, \, t_{\text{init}} + \Delta t, \, \cdots, \, t_{\text{final}} $}
                    \State $x^{t + \Delta t} \leftarrow x^t + v(x^t \mid t) \cdot \Delta t$.
                \EndFor
                \State{\textbf{Return}} $x^{t_\text{final}}$.
            \end{algorithmic}
        \end{algorithm}
    \end{minipage}
\end{wrapfigure}

Flow matching~\citep{lipman2023flow, lipman2024flow, liu2023flow, albergo2023building} refers to a family of generative models based on ordinary differential equations (ODEs), which are close cousins of denoising diffusion models~\citep{sohl2015deep, song2021scorebased, ho2020denoising} based on stochastic differential equations (SDEs). The deterministic nature of ODEs equips flow-matching methods with simpler learning objectives and faster inference speed than denoising diffusion models~\citep{lipman2023flow, lipman2024flow}. The goal of flow matching methods is to transform a simple noise distribution into a target distribution over some space $\gX \subset \R^{d}$. We will use $t$ to denote the flow time step and sample the noise $\epsilon$ from a standard Gaussian distribution $\gN(0, I_d)$ throughout our discussions. We will use $X$ to denote the $d$-dimensional random variable that generates the target data samples using the distribution $p_X$.

Specifically, flow matching uses a time-dependent vector field $v: [0, 1] \times \R^d \to \R^d$ to construct a time-dependent diffeomorphic flow $\phi: [0, 1] \times \R^d \to \R^d$ ~\citep{lipman2023flow, lipman2024flow} that realizes the transformation from a single noise $\epsilon$ to a generative sample $\hat{x}$ by following the ODE
\begin{align}
    \frac{d}{d t} \phi(\epsilon \mid t) = v(\phi(\epsilon \mid t) \mid t),\quad \phi(\epsilon \mid 0) = \epsilon,\quad \phi(\epsilon \mid 1) = \hat{x}.
    \label{eq:flow-ode}
\end{align}
The vector field $v$ \emph{generates} a time-dependent probability density path $p: [0, 1] \times \R^d \to \Delta(\R^d)$ if it satisfies the continuity equation~\citep{lipman2023flow}
\begin{align}
    \frac{\partial}{\partial t} p(x^t \mid t) + \text{div}( p(x^t \mid t) v(x^t \mid t)) = 0,
    \label{eq:continuity-eq}
\end{align}
where $\text{div}(\cdot)$ denotes the divergence operator. Prior work has proposed various formulations for learning the vector field~\citep{lipman2023flow, liu2023flow, albergo2023building}. We adopt the simplest flow-matching objective, based on optimal transport~\citep{liu2023flow}, called conditional flow matching (CFM)~\citep{lipman2023flow}. Using $\textsc{Unif}([0, 1])$ to denote the uniform distribution over the unit interval and $x^t = tx +  (1 - t) \epsilon $ to denote a linear interpolation between the ground truth sample $x$ and the Gaussian noise $\epsilon$, the CFM loss can be written as
\begin{align}
    \gL_{\text{CFM}}(v) = \E_{ \substack{ t \sim \textsc{Unif}([0, 1]), \\ x \sim p_X(x), \epsilon \sim \gN(0, I)} } \left[ \norm{v(x^t \mid t) - (x - \epsilon) }_2^2 \right].
    \label{eq:cfm-loss}
\end{align}
Practically, we can generate a sample from the vector field $v$ by numerically solving the ODE (Eq.~\ref{eq:flow-ode}). 
We will use the Euler method (Alg.~\ref{alg:euler-method}) as our ODE solver following prior practice~\citep{liu2023flow, park2025flow}.

\section{Theoretical Analysis}
\label{appendix:theoretical-analysis}

\subsection{Estimating the return distribution using flow-matching}
\label{appendix:flow-distributional-return-estimiation}

\begin{repproposition}{prop:vector-field-prob-density-path-update-rule}
    For a policy $\pi$, an iteration $k \in \mathbb{N}$, a return value $z^t \in [z_{\text{min}}, z_{\text{max}}]$, a flow time $t \in [0, 1]$, a state $s \in \gS$, a action $a \in \gA$, and a vector field $v_k(z^t \mid t, s, a)$ that generates the probability density path $p_{k}(z^t \mid t, s, a)$, the new vector field $v_{k + 1}(z^t \mid t, s, a)$ generates the new probability density path $p_{k + 1}(z^t \mid t, s, a)$.
\end{repproposition}
\begin{proof}
    By definition, a vector field generates a probability density path, meaning that they both satisfy the continuity equation (Eq.~\ref{eq:continuity-eq})~\citep{lipman2023flow}. We will check the continuity equation for $v_{k + 1}(z^t \mid t, s, a)$ and $p_{k + 1}(z^t \mid t, s, a)$. On one hand, for $p_{k + 1}$, we have
    \begin{align*}
        &\frac{\partial }{\partial t} p_{k + 1}(z^t \mid t, s, a) \\
        &\overset{(a)}{=} \frac{\partial}{\partial t} \frac{1}{\gamma} \E_{p(s' \mid s, a), \pi(a' \mid s')} \left[ p_k \left( \left. \frac{z^t - r(s, a)}{\gamma} \right\rvert t, s', a' \right) \right] \\
        &\overset{(b)}{=} \frac{1}{\gamma} \E_{p(s' \mid s, a), \pi(a' \mid s')} \left[ \frac{\partial}{\partial t} p_k \left( \left. \frac{z^t - r(s, a)}{\gamma} \right\rvert t, s', a' \right) \right] \\
        &\overset{(c)}{=} -\frac{1}{\gamma} \E_{p(s' \mid s, a), \pi(a' \mid s')} \left[ \text{div} \left( p_k \left( \left. \frac{z^t - r(s, a)}{\gamma} \right\rvert t, s', a' \right) v_k \left(\left. \frac{z^t - r(s, a)}{\gamma} \right\rvert t, s', a' \right) \right) \right] \\
        &\overset{(d)}{=} -\text{div} \left( \frac{1}{\gamma} \E_{p(s' \mid s, a), \pi(a' \mid s')} \left[ p_k \left( \left. \frac{z^t - r(s, a)}{\gamma} \right\rvert t, s', a' \right) v_k \left(\left. \frac{z^t - r(s, a)}{\gamma} \right\rvert t, s', a' \right) \right]  \right)
    \end{align*}
    in \emph{(a)}, we plug in the definition of $p_{k + 1}(z^t \mid t, s, a)$, in \emph{(b)}, we swap the partial differentiation with the expectation because the integral does not depend on time $t$, in \emph{(c)}, we apply the continuity equation for $p_k$, and in \emph{(d)}, we swap the divergence $\text{div}(\cdot)$ with the expectation since the divergence is a linear operator~\citep{kaplunovsky_gradient_2016} and the expectation does not depend on the return value $z$. On the other hand, by the definition of $v_{k + 1}$, we have
    \begin{align*}
        &p_{k + 1}(z^t \mid t, s, a) v_{k + 1}(z^t \mid t, s, a) \\
        &= \frac{1}{\gamma} \E_{p(s' \mid s, a), \pi(a' \mid s')} \left[ p_k \left( \left. \frac{ z^t - r(s, a) }{\gamma} \right\rvert t, s, a \right) v_k \left( \left. \frac{ z^t - r(s, a) }{\gamma} \right\rvert t, s, a \right) \right].
    \end{align*}
    Substituting this identity into Eq. \emph{(d)} above results in
    \begin{align}
        \frac{\partial }{\partial t} p_{k + 1}(z^t \mid t, s, a) = -\text{div} \left( p_{k + 1}(z^t \mid t, s, a) v_{k + 1}(z^t \mid t, s, a) \right),
        \label{eq:continuity-equation-update-rule}
    \end{align}
    which means $v_{k + 1}$ and $p_{k + 1}$ follow the continuity equation exactly.
\end{proof}

\begin{lemma}
\label{lemma:dist-fm-minimizer} Given a vector field $v_k$, the distributional flow matching loss (Eq.~\ref{eq:dist-fm-loss}) has the minimizer
$\argmin_v \gL_{\text{DFM}}(v, v_k) = v_{k + 1}$.
\end{lemma}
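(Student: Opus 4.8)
The plan is to read $\gL_{\text{DFM}}(\cdot,v_k)$ as a weighted least-squares objective whose integrand is a perfect square, and to minimize it \emph{pointwise}. Writing the expectation in Eq.~\ref{eq:dist-fm-loss} as an integral of $\big(v(z^t\mid t,s,a)-v_{k+1}(z^t\mid t,s,a)\big)^2$ against the joint sampling density induced by $(s,a)\sim D$, $t\sim\textsc{Unif}([0,1])$, and $z^t\sim p_{k+1}(\cdot\mid t,s,a)$, the integrand is nonnegative everywhere and vanishes when $v=v_{k+1}$. Hence $\gL_{\text{DFM}}(v,v_k)\ge 0$ for every $v$ while $\gL_{\text{DFM}}(v_{k+1},v_k)=0$, so $v_{k+1}$ attains the infimum and is a minimizer.

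For the reverse direction I would argue that any minimizer $v^{\star}$ has $\gL_{\text{DFM}}(v^{\star},v_k)=0$, and since a nonnegative function integrated against a probability measure can have zero integral only if it vanishes almost everywhere, $v^{\star}(z^t\mid t,s,a)=v_{k+1}(z^t\mid t,s,a)$ for every $(s,a)$ in the support of $D$, every $t\in[0,1]$, and every $z^t$ with $p_{k+1}(z^t\mid t,s,a)>0$. This is precisely the set on which the right-hand side of Eq.~\ref{eq:v-kp1-def} is well-defined (its denominator $p_{k+1}(z^t\mid t,s,a)$ is positive there) and on which the DFM sampling distribution is supported, so $\argmin_v\gL_{\text{DFM}}(v,v_k)=v_{k+1}$ holds on the natural domain. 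Equivalently, one can take the Gateaux derivative $\tfrac{d}{d\epsilon}\gL_{\text{DFM}}(v+\epsilon\delta,v_k)\big|_{\epsilon=0}=2\,\E[(v(z^t\mid t,s,a)-v_{k+1}(z^t\mid t,s,a))\,\delta(z^t\mid t,s,a)]$ and set it to zero for all admissible perturbations $\delta$; the fundamental lemma of the calculus of variations then forces $v=v_{k+1}$ on the support.

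The only genuine technical point — and the main obstacle, such as it is — is ensuring $\gL_{\text{DFM}}(v_{k+1},v_k)$ is finite, i.e.\ that $v_{k+1}$ lies in $L^2$ of the sampling measure, so that ``attains the infimum $0$'' is meaningful, and that the derivative/expectation interchange in the variational argument is legitimate. Since the reward is bounded, the conditional return lives in the compact interval $[z_{\text{min}},z_{\text{max}}]$, and an inductive use of Proposition~\ref{prop:vector-field-prob-density-path-update-rule} keeps $p_k$ and $v_k$ well-behaved on a compact support, so $v_{k+1}$ from Eq.~\ref{eq:v-kp1-def} is bounded on $\{p_{k+1}>0\}$; the interchange is then justified by dominated convergence. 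Everything else is routine.
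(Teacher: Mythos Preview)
Your proposal is correct and follows essentially the same approach as the paper: both identify $\gL_{\text{DFM}}$ as a mean-squared error and conclude that its minimizer is $v_{k+1}$, the paper via the functional derivative (exactly your Gateaux-derivative alternative) and you via the equivalent but slightly more elementary observation that the integrand is a nonnegative perfect square vanishing at $v_{k+1}$. Your additional care about the a.e.\ uniqueness on the support $\{p_{k+1}>0\}$ and the $L^2$ integrability of $v_{k+1}$ goes beyond what the paper spells out, but is consistent with it.
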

\begin{proof}
    Since $\gL_{\text{DFM}}$ is an MSE loss, intuitively, the minimizer of it will be $v_{k + 1}$. Formally, we compute the minimizer of $\gL_{\text{DFM}}$ by using the calculus of variations~\citep{gelfand2000calculus} and deriving the functional derivative of $\gL_{\text{DFM}}$ with respect to $v$, i.e. $\delta \gL_{\text{DFM}}(v, v_k) / \delta v$:
    \begin{align*}
        \frac{\delta \gL_{\text{DFM}}(v, v_k)}{\delta v} = 2 D(s, a, r) p_{k + 1}(z^t \mid t, s, a) \left( v(z^t \mid t, s, a) - v_{k + 1}(z^t \mid t, s, a) \right),
    \end{align*}
    Setting this functional derivative to zero, we have $\argmin_v \gL_{\text{DFM}}(v, v_k) = v_{k + 1}$.
\end{proof}

\begin{lemma}
\label{lemma:dcfm-loss-change-of-variable}
Given a vector field $v_k$, the distributional conditional flow matching loss (Eq.~\ref{eq:dist-cfm-loss}) can be rewritten as
\begin{align*}
    \gL_{\text{DCFM}}(v, v_k) &= \E_{\substack{(s, a, r, s') \sim D, t \sim \textsc{Unif}([0, 1]) \\ a' \sim \pi(a' \mid s'), z^t \sim p_k \left( \left. z^t \right\rvert t, s', a' \right) }} \left[ \left( v(r + \gamma z^t \mid t, s, a) - v_k \left( \left. z^t \right\rvert t, s', a' \right) \right)^2 \right]
\end{align*}
\end{lemma}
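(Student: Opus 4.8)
The plan is to recognize the claimed identity as nothing more than a change of variables carried out inside the innermost expectation of $\gL_{\text{DCFM}}$, with everything else left untouched. Throughout I treat $(s, a, r, s', t, a')$ as held fixed — they are drawn by the outer expectations — and I use that $\gamma \in (0, 1)$, so that dividing by $\gamma$ is legitimate and the map $u \mapsto r + \gamma u$ is a diffeomorphism of $\R$.

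First I would identify the law of the variable $z^t$ appearing in Eq.~\ref{eq:dist-cfm-loss}: its density is $q(z) \triangleq \frac{1}{\gamma} p_k\!\left( \frac{z - r}{\gamma} \mid t, s', a' \right)$. By the standard formula for the density of an affine transformation of a continuous random variable — or, equivalently, by writing $F_{r + \gamma U}(z) = F_U\!\left( \frac{z - r}{\gamma} \right)$ and differentiating in $z$, exactly as in the proof of \lemmaref{lemma:cdf-dist-bellman-op} — the density $q$ is precisely the law of $r + \gamma U$, where $U \sim p_k(\cdot \mid t, s', a')$.

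Next I would apply the change-of-variables identity $\E_{z \sim q}\!\left[ g(z) \right] = \E_{u \sim p_k(\cdot \mid t, s', a')}\!\left[ g(r + \gamma u) \right]$ to the squared residual $g(z) = \left( v(z \mid t, s, a) - v_k\!\left( \frac{z - r}{\gamma} \mid t, s', a' \right) \right)^2$. Substituting $z = r + \gamma u$ collapses the inner argument of $v_k$ to $u$, since $\frac{(r + \gamma u) - r}{\gamma} = u$, so $g(r + \gamma u) = \left( v(r + \gamma u \mid t, s, a) - v_k(u \mid t, s', a') \right)^2$. Renaming the dummy variable $u$ back to $z^t$ and re-attaching the outer expectations over $(s, a, r, s') \sim D$, $t \sim \textsc{Unif}([0, 1])$, and $a' \sim \pi(a' \mid s')$ — none of which changed — yields exactly the stated expression.

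I do not expect a genuine obstacle here; the only care needed is bookkeeping. Specifically: the substitution is conditional (with $r$, $\gamma$, and the conditioning variables frozen), it requires $\gamma > 0$ for well-definedness, and the outer expectations must be seen to pass through the inner substitution unchanged. If one wants to justify that interchange rigorously, I would invoke Tonelli's theorem, which applies since the squared integrand is nonnegative; the bounded-reward and bounded-return-support assumptions further guarantee finiteness throughout.
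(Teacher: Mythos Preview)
Your proposal is correct and follows essentially the same approach as the paper: both arguments perform the affine change of variables $\tilde{z}^t = r + \gamma z^t$ (equivalently, set $(\tilde{z}^t - r)/\gamma = z^t$) inside the innermost expectation while holding the outer variables fixed. The paper's proof is terser, but your additional remarks on the density of $r + \gamma U$, the requirement $\gamma > 0$, and the Tonelli justification are sound elaborations of the same idea.
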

\begin{proof}
    With slight abuse of notations, we use $\tilde{z}^t$ to denote the samples from the convolved probability density path $p_k( (\tilde{z}^t - r) / \gamma \mid t, s, a) / \gamma$. Setting $(\tilde{z}^t - r) / \gamma = z^t$, we have
    \begin{align}
        \gL_{\text{DCFM}}(v, v_k) &= \E_{\substack{(s, a, r, s') \sim D, t \sim \textsc{Unif}([0, 1]) \\ a' \sim \pi(a' \mid s'), \tilde{z}^t \sim \frac{1}{\gamma} p_k \left( \left. \frac{\tilde{z}^t - r}{\gamma} \right\rvert t, s', a' \right) }} \left[ \left( v(\tilde{z}^t \mid t, s, a) - v_k \left( \left. \frac{\tilde{z}^t - r}{\gamma} \right\rvert t, s', a' \right) \right)^2 \right] \nonumber \\
        &= \E_{\substack{(s, a, r, s') \sim D, t \sim \textsc{Unif}([0, 1]) \\ a' \sim \pi(a' \mid s'), z^t \sim p_k \left( \left. z^t \right\rvert t, s', a' \right) }} \left[ \left( v(r + \gamma z^t \mid t, s, a) - v_k \left( \left. z^t \right\rvert t, s', a' \right) \right)^2 \right].
        \label{eq:dist-cfm-loss-cfv}
    \end{align}
\end{proof}

\begin{repproposition}{prop:flow-matching-losses-connection}
For a policy $\pi$, a vector field $v_k$ that generates the probability density path $p_k$ at iteration $k \in \mathbb{N}$, and a candidate vector field $v$, we have $\gL_{\text{DFM}}(v, v_k) = \gL_{\text{DCFM}}(v, v_k) + \text{const.}$, where the constant is independent of $v$. Therefore, the gradient $\nabla_v \gL_{\text{DFM}}(v, v_k)$ is the same as the gradient $\nabla_v \gL_{\text{CDFM}}(v, v_k)$.
\end{repproposition}
\begin{proof} 
    We first expand the quadratic terms in $\gL_{\text{DFM}}$,
    \begin{align*}
        &\left( v(t, z^t \mid s, a) - v_{k + 1} \left( \left. t, \frac{z^t - r}{\gamma} \right\rvert s', a' \right) \right)^2 \\
        &= v(t, z^t \mid s, a)^2 - 2 v(t, z^t \mid s, a) v_{k + 1} \left( \left. t, \frac{z^t - r}{\gamma} \right\rvert s', a' \right) + v_{k + 1} \left( \left. t, \frac{z^t - r}{\gamma} \right\rvert s', a' \right)^2 \\
    \end{align*}
    Since only the first two terms depend on the vector field $v$, we next examine the expectation of them respectively.
    \begin{align*}
        &\E_{ \substack{(s, a, r) \sim D, t \sim \textsc{Unif}([0, 1]), \\ z^t \sim p_{k + 1}(z^t \mid t, s, a) }} \left[ v(z^t \mid t, s, a) \right] \overset{\emph{(a)}}{=} \E_{\substack{(s, a, r, s') \sim D, t \sim \textsc{Unif}([0, 1]), \\ a' \sim \pi(a' \mid s'), z^t \sim \frac{1}{\gamma} p_{k} \left( \left. \frac{z^t - r}{\gamma} \right\rvert t, s', a' \right) }} \left[ v(z^t \mid t, s', a') \right], \\
        &\E_{ \substack{(s, a, r) \sim D, t \sim \textsc{Unif}([0, 1]), \\ z^t \sim p_{k + 1}(z^t \mid t, s, a) } } \left[ v(z^t \mid t, s, a) v_{k + 1}(z^t \mid t, s, a) \right] \\
        &\overset{(b)}{=} \E_{ \substack{(s, a, r) \sim D, t \sim \textsc{Unif}([0, 1]), \\ z^t \sim p_{k + 1}(z^t \mid t, s, a) } } \left[ v(z^t \mid t, s, a) \cdot \frac{ \frac{1}{\gamma} \E_{p(s' \mid s, a), \pi(a' \mid s')} \left[ p_k \left( \left. \frac{ z^t - r }{\gamma} \right\rvert t, s', a' \right) v_k \left( \left. \frac{ z^t - r }{\gamma} \right\rvert t, s', a' \right) \right]  }{ p_{k + 1}(z^t \mid t, s, a) } \right] \\
        &\overset{(c)}{=} \E_{ 
        \substack{(s, a, r, s') \sim D, t \sim \textsc{Unif}([0, 1]), \\ a' \sim \pi(a' \mid s'), z^t \sim \frac{1}{\gamma} p_k \left( \left. \frac{z^t - r}{\gamma} \right\rvert t, s', a' \right) } } \left[ v(z^t \mid t, s, a) v_k \left( \left. \frac{z^t - r}{\gamma} \right\rvert t, s', a' \right) \right],
    \end{align*}
    in \emph{(a)}, we plug in the definition of $p_{k + 1}$ and use the next state $s'$ in the transition from the dataset as the sample from the transition probability, in \emph{(b)}, we plug in the definition of $v_{k + 1}$, and, in \emph{(c)}, we use the linearity of expectation. Thus, we conclude that $\gL_{ \text{DFM} }(v, v_k) = \gL_\text{DCFM}(v, v_k)$ + \text{const.}, where the constant is independent of $v$.
\end{proof}

\subsection{Harnessing uncertainty in the return estimation }
\label{appendix:harnessing-uncertainty}

With slight abuse of notations, we prove the following Lemmas as a generalization of~\propref{prop:variance-taylor-approximation} and~\propref{prop:flow-derivative-ode} to the standard flow-matching problem. We will use the notations introduced in~\aref{appendix:flow-matching}.

\begin{lemma}{(Formal statement of conclusions from~\citet{frans2025one})}
    \label{lemma:flow-expectation-est}
    For a vector field $v$ learned by the conditional flow matching loss $\gL_{\text{CFM}}$ (Eq.~\ref{eq:cfm-loss}) for fitting data from the random variable $X$, noises $\epsilon$ sampled from $\gN(0, I_d)$, we have the vector field at flow time $t = 0$ produces an estimate for the expectation $\E \left[ X \right]$:
    \begin{align*}
        \widehat{\E}[X] = \E_{\epsilon \sim \gN(0, I_d)} \left[ v(\epsilon \mid 0) \right].
    \end{align*}
\end{lemma}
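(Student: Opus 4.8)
The plan is to use the fact that $\gL_{\text{CFM}}$ is a quadratic (least-squares) objective, so its unique minimizer is a conditional expectation, and then to evaluate that conditional expectation at the boundary flow time $t = 0$, where the interpolation degenerates.

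First I would recall (assuming $X$ is square-integrable so the loss is finite) that $\gL_{\text{CFM}}(v)$ from Eq.~\ref{eq:cfm-loss} is an expectation of $\norm{v(x^t \mid t) - (x - \epsilon)}_2^2$ over independent draws $x \sim p_X$, $\epsilon \sim \gN(0, I_d)$, $t \sim \textsc{Unif}([0,1])$, with $x^t = tx + (1-t)\epsilon$. By the standard characterization of the $L^2$-optimal predictor --- the same argument underlying \citet{lipman2023flow} and analogous to \lemmaref{lemma:dist-fm-minimizer} --- the minimizer $v^\star$ satisfies, for (almost) every $t$ and every $y$ in the support of $x^t$,
\begin{align*}
    v^\star(y \mid t) = \E\!\left[\, x - \epsilon \,\middle|\, x^t = y,\ t \,\right].
\end{align*}

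Next I would specialize to $t = 0$. There the interpolation collapses to $x^0 = \epsilon$, so conditioning on $\{x^0 = \epsilon\}$ is just conditioning on the noise variable, giving $v^\star(\epsilon \mid 0) = \E[x \mid \epsilon] - \epsilon$. Since $x$ and $\epsilon$ are drawn independently, $\E[x \mid \epsilon] = \E[x] = \E[X]$, hence $v^\star(\epsilon \mid 0) = \E[X] - \epsilon$. Taking the expectation over $\epsilon \sim \gN(0, I_d)$ and using $\E_{\epsilon}[\epsilon] = 0$ then yields
\begin{align*}
    \E_{\epsilon \sim \gN(0, I_d)}\!\left[ v^\star(\epsilon \mid 0) \right] = \E[X] - \E_{\epsilon \sim \gN(0, I_d)}[\epsilon] = \E[X],
\end{align*}
which is the claimed identity once the learned $v$ is identified with $v^\star$.

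The main obstacle I anticipate is making the minimizer argument rigorous precisely at the boundary time $t = 0$: the push-forward of $x^t$ at $t = 0$ is exactly $\gN(0, I_d)$ (not a genuine mixture over $x$), so the conditioning event is ``degenerate'' and one cannot simply quote the interior-$t$ formula. The clean fix is to bypass the general formula and minimize directly: for fixed $\epsilon$, the map $c \mapsto \E_{x}\!\left[ \norm{c - (x - \epsilon)}_2^2 \right]$ on $c \in \R^d$ is a plain least-squares problem minimized at $c = \E[x] - \epsilon$, and since the CFM loss decouples across $(\epsilon, t)$ this pointwise minimizer is the global one at $t = 0$. The remaining ingredients --- independence of $x$ and $\epsilon$, and $\E[\epsilon] = 0$ --- are routine.
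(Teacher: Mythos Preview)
Your proposal is correct and follows essentially the same route as the paper: identify the CFM minimizer as the conditional expectation $v^\star(x^t\mid t)=\E[x-\epsilon\mid x^t]$ (the paper phrases this via the functional derivative, you via the $L^2$-projection characterization), specialize to $t=0$ where $x^0=\epsilon$, use independence of $x$ and $\epsilon$, and average out $\epsilon$. Your extra remark about the degeneracy at the boundary $t=0$ and the direct pointwise-minimization fix is more careful than the paper's own argument, which simply plugs $t=0$ into the interior formula without comment.
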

\begin{proof}
    We note that $\gL_{\text{CFM}}$ is also an MSE loss, and will use the same idea as in~\lemmaref{lemma:dist-fm-minimizer} to find its minimizer. Specifically, we use the calculus of variations and derive the functional derivative of $\gL_{\text{CFM}}$ with respect to $v$, i.e., $\delta \gL_{\text{CFM}}(v) / \delta v$:
    \begin{align*}
        \frac{\delta \gL_{\text{CFM}}(v)}{\delta v} &= 2 \E_{\substack{x \sim p_X(x), \epsilon \sim \gN(0, I_d) \\ t \sim \textsc{UNIF}([0, 1]), x^t = t x + (1 - t)\epsilon }} \left[ v(x^t \mid t) - (x - \epsilon) \right], \\
        &= 2 v(x^t \mid t) - 2\E_{ (x, \epsilon) \sim p(x, \epsilon \mid x^t, t) } \left[ x - \epsilon \right] \\
    \end{align*}
    Setting this functional derivative to zero, we have 
    \begin{align*}
        v^{\star}(x^t \mid t) = \argmin \gL_{\text{CFM}}(v) = \E_{(x, \epsilon) \sim p(x, \epsilon \mid x^t, t) } \left[ x - \epsilon \right] = \E \left[ x - \epsilon \mid  x^t, t \right].
    \end{align*}
    At flow time $t = 0$, we have in $x^0 = (1 - 1) \cdot x + 1 \cdot \epsilon = \epsilon$. Since $x \sim p_X(x)$ and $\epsilon \sim \gN(0, I_d)$ are sampled independently, the conditional distribution $p(x, \epsilon \mid x^0, 0)$ reduces to $p(x, \epsilon \mid x^0, 0) = p_X(x)$. Plugging this specific conditional distribution into the minimizer $v^{\star}$, we have
    \begin{align*}
        v^{\star}(\epsilon \mid 0) &= \E_{x \sim p_X(x)} [ x - \epsilon] = \E_{x \sim p_X(x)} \left[ x \right] - \epsilon.
    \end{align*}
    Taking expectation over $\epsilon \sim \gN(0, I_d)$ gives us 
    \begin{align*}
        \E_{\epsilon \sim \gN(0, I_d)} \left[ v^{\star}(\epsilon \mid 0) \right] = \E_{x \sim p_X(x)} [x] = \E[X].
    \end{align*}
    Thus, we conclude that, for a learned vector field $v$, $\E_{\epsilon \sim \gN(0, I_d)} \left[ v(\epsilon \mid 0) \right]$ is an estimate for the expectation $\E[X]$.
\end{proof}

We next discuss the variance estimation using a learned flow. We will use $J_v \in \R^{d \times d}$ to denote the Jacobian of a vector field $v(x^t \mid t)$ with respect to the input $x^t$ and $J_\phi \in \R^{d \times d}$ to denote the Jacobian of the corresponding diffeomorphic flow $\phi(\epsilon \mid t)$ with respect to the input $\epsilon$:
\begin{align*}
    J_v = \begin{bmatrix}
        \frac{\partial v_1}{\partial x^t_1} & \cdots &\frac{\partial v_1}{\partial x^t_d} \\
        \vdots & \ddots & \vdots \\
        \frac{\partial v_d}{\partial x^t_1} & \cdots &\frac{\partial v_d}{\partial x^t_d}
    \end{bmatrix}, \quad J_{\phi} = \begin{bmatrix}
        \frac{\partial \phi_1}{\partial \epsilon^t_1} & \cdots &\frac{\partial \phi_1}{\partial \epsilon^t_d} \\
        \vdots & \ddots & \vdots \\
        \frac{\partial \phi_d}{\partial \epsilon^t_1} & \cdots &\frac{\partial \phi_d}{\partial \epsilon^t_d}
    \end{bmatrix}.
\end{align*}

\begin{lemma}
    \label{lemma:flow-variance-est}
    For a vector field $v$ fitting data from the random variable $X$ with the corresponding diffeomorphic flow $\phi$, two independent noises $\epsilon_0$ and $\epsilon$ sampled from $\gN(0, I_d)$, the first-order Taylor approximation of the flow $\phi(\epsilon_0 \mid 1)$ around $\epsilon$ is
    \begin{align*}
        \phi(\epsilon_0 \mid 1) \approx \phi(\epsilon \mid 1) + J_{\phi}(\epsilon \mid 1) (\epsilon_0 - \epsilon).
    \end{align*}
    This first-order Taylor approximation produces an estimate for the covariance $\text{Cov}(X)$:
    \begin{align*}
        \widehat{\text{Cov}}(X) = \E_{\epsilon \sim \gN(0, I_d)} \left[ J_\phi(\epsilon \mid 1) J_\phi(\epsilon \mid 1)^\top \right].
    \end{align*}
\end{lemma}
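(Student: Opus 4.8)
The plan is to prove the two assertions in order: the Taylor expansion is immediate from differentiability of the flow map, and the covariance estimate then follows by propagating the base-distribution covariance through the affine linearization and averaging over the expansion point. For the Taylor expansion I would first observe that $\phi(\cdot\mid 1)\colon\R^d\to\R^d$ is $C^1$: it is the time-$1$ map of the flow ODE (Eq.~\ref{eq:flow-ode}), whose vector field is (assumed) smooth, so by the standard theorem on smooth dependence of ODE solutions on initial conditions $\phi(\cdot\mid 1)$ is differentiable with Jacobian $J_\phi(\cdot\mid 1)$. The multivariate first-order Taylor expansion about the point $\epsilon$ then reads $\phi(\epsilon_0\mid 1) = \phi(\epsilon\mid 1) + J_\phi(\epsilon\mid 1)(\epsilon_0-\epsilon) + o(\|\epsilon_0-\epsilon\|)$, and discarding the remainder gives the stated approximation. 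Nothing further is needed for this part.

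For the covariance estimate I would proceed in three steps. (i) Since $\phi(\cdot\mid 1)$ is the flow that transports the base distribution $\gN(0,I_d)$ to $p_X$, drawing $\epsilon_0\sim\gN(0,I_d)$ makes $\phi(\epsilon_0\mid 1)$ distributed as $X$, so $\text{Cov}(X) = \text{Cov}_{\epsilon_0}\big(\phi(\epsilon_0\mid 1)\big)$. (ii) Fix the reference point $\epsilon$ and replace $\phi(\cdot\mid 1)$ by its linearization $\tilde\phi_\epsilon(\epsilon_0)\triangleq\phi(\epsilon\mid 1)+J_\phi(\epsilon\mid 1)(\epsilon_0-\epsilon)$ from the first part. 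Because $\tilde\phi_\epsilon$ is affine in $\epsilon_0$ and $\text{Cov}(\epsilon_0)=I_d$, the covariance rule for affine maps gives $\text{Cov}_{\epsilon_0}\big(\tilde\phi_\epsilon(\epsilon_0)\big) = J_\phi(\epsilon\mid 1)\,\text{Cov}(\epsilon_0)\,J_\phi(\epsilon\mid 1)^\top = J_\phi(\epsilon\mid 1)J_\phi(\epsilon\mid 1)^\top$, hence $\text{Cov}(X)\approx J_\phi(\epsilon\mid 1)J_\phi(\epsilon\mid 1)^\top$. (iii) The left side does not depend on the arbitrary reference point $\epsilon$ whereas the right side does, so to obtain a well-defined, reference-free estimator I would average over $\epsilon\sim\gN(0,I_d)$, obtaining $\widehat{\text{Cov}}(X)=\E_{\epsilon\sim\gN(0,I_d)}\big[J_\phi(\epsilon\mid 1)J_\phi(\epsilon\mid 1)^\top\big]$. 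Setting $d=1$ recovers the variance estimate in Eq.~\ref{eq:ret-expectation-var-est} with $J_\phi(\epsilon\mid 1)J_\phi(\epsilon\mid 1)^\top=(\partial\phi/\partial\epsilon)^2$, which is the one-dimensional specialization claimed in Proposition~\ref{prop:variance-taylor-approximation}.

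The main obstacle is really a matter of honesty rather than difficulty: step (iii) is a first-order (delta-method style) approximation, not an identity — $\E_\epsilon[J_\phi J_\phi^\top]$ coincides with $\text{Cov}(X)$ exactly only when $\phi(\cdot\mid 1)$ is affine (for instance Gaussian $X$), and in general the discrepancy is governed by the curvature (second derivatives) of the flow together with the tails of $\gN(0,I_d)$. I would therefore present the result as an estimate (hence the hat), keep the argument at the level of the first-order expansion, and not attempt a remainder bound, which would require control on $\nabla^2\phi$. A small additional point worth spelling out is why averaging over $\epsilon$ is the natural aggregation: it removes the dependence on an arbitrary expansion point and, via the flow-derivative ODE of Proposition~\ref{prop:flow-derivative-ode} and a vector--Jacobian product, is cheap to evaluate in practice.
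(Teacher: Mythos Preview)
Your proposal is correct and follows essentially the same approach as the paper's proof: write the first-order Taylor expansion of $\phi(\cdot\mid 1)$ about $\epsilon$, apply the affine-transformation rule $\text{Cov}(A\epsilon_0+b)=A\,\text{Cov}(\epsilon_0)\,A^\top$ with $\text{Cov}(\epsilon_0)=I_d$ to obtain $J_\phi J_\phi^\top$ for fixed $\epsilon$, and then average over $\epsilon\sim\gN(0,I_d)$. Your additional remarks on smoothness of the flow map and on the delta-method nature of the approximation are sound elaborations, but the core argument matches the paper step for step.
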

\begin{proof}
    The Taylor expansion of $\phi(\epsilon_0 \mid 1)$ around $\phi(\epsilon \mid 1)$ is
    \begin{align*}
        \phi(\epsilon_0 \mid 1) = \phi(\epsilon \mid 1) + J_{\phi}(\epsilon \mid 1)(\epsilon_0 - \epsilon) + O( \norm{\epsilon_0 - \epsilon}^2),
    \end{align*}
    where $O( \norm{\epsilon_0 - \epsilon}^2)$ denotes terms with order higher than $(\epsilon_0 - \epsilon)^\top (\epsilon_0 - \epsilon)$. Thus, we can approximate samples from the random variable $X$ using the first-order Taylor expansion of $\phi(\epsilon_0 \mid 1)$ around $\phi(\epsilon \mid 1)$,
    \begin{align*}
        \hat{x} = \phi(\epsilon_0 \mid 1) \approx \phi(\epsilon \mid 1) + J_{\phi}(\epsilon \mid 1)(\epsilon_0 - \epsilon).
    \end{align*}
    By the property (affine transformation) of covariance, we have the covariance estimate for $X$ at $\epsilon$:
    \begin{align*}
        \widehat{\text{Cov}}(X \mid \epsilon) &= \text{Cov}(\phi(\epsilon_0 \mid 1)) \\
        &= J_{\phi}(\epsilon \mid 1) \text{Cov}(\epsilon_0) J_{\phi}(\epsilon \mid 1)^{\top} \\
        &= J_\phi(\epsilon \mid 1) J_\phi(\epsilon \mid 1)^\top.
    \end{align*}
    Taking expectation over $\epsilon \sim \gN(0, I_d)$ on both side, we conclude that
    \begin{align*}
        \widehat{\text{Cov}}(X) &= \E_{\epsilon \sim \gN(0, I_d)} \left[ \widehat{\text{Cov}}(X \mid \epsilon) \right] \\ 
        &= \E_{\epsilon \sim \gN(0, I_d)} \left[ J_\phi(\epsilon \mid 1) J_\phi(\epsilon \mid 1)^\top \right].
    \end{align*}
\end{proof}

We are now ready to prove~\propref{prop:variance-taylor-approximation} for the return random variable $Z^{\pi}(s, a)$.

\begin{repproposition}{prop:variance-taylor-approximation}
For a policy $\pi$, a state $s \in \gS$, a action $a \in \gA$, two independent noises $\epsilon_0$ and $\epsilon$ sampled from $\gN(0, 1)$, the learned return vector field $v$ fitting the conditional return distribution $Z^{\pi}(s, a)$, the first-order Taylor approximation of the corresponding diffeomorphic flow $\phi(\epsilon_0 \mid 1, s, a)$ around $\epsilon$ is
\begin{align*}
    \phi(\epsilon_0 \mid 1, s, a) \approx \phi(\epsilon \mid 1, s, a) + J_\phi(\epsilon \mid 1, s, a)(\epsilon_0 - \epsilon).
\end{align*}
We have the vector field at flow time $t = 0$ produces an estimate for the return expectation $\E \left[ Z^{\pi}(s, a) \right]$, while the first-order Taylor approximation of the flow produces an estimate for the return variance $\text{Var}(Z^{\pi}(s, a))$: 
\begin{align*}
    \widehat{\E} \left[Z^{\pi}(s, a) \right] = \E_{\epsilon \sim \gN(0, 1)}[v(\epsilon \mid 0, s, a)], \quad \widehat{\text{Var}}(Z^{\pi}(s, a)) = \E_{\epsilon \sim \gN(0, 1)} \left[ \left( \frac{\partial \phi}{\partial \epsilon} (\epsilon \mid 1, s, a) \right)^2 \right].
\end{align*}
\end{repproposition}
\begin{proof}
    Applying~\lemmaref{lemma:flow-expectation-est} and~\lemmaref{lemma:flow-variance-est} to the $1$-dimensional conditional return random variable $Z^{\pi}(s, a)$, we get the desired estimates.
\end{proof}

We next discuss a lemma that relates the Jacobian of the flow $J_{\phi}$ to the Jacobian of the vector field $J_v$ using a \emph{flow Jacobian ODE}.

\begin{lemma}
    \label{lemma:flow-jac-ode} For a learned vector field fitting data from the random variable $X$ with the corresponding diffeomorphic flow $\phi$, a noise $\epsilon$ sampled from $\gN(0, I_d)$, the flow Jacobian $J_\phi$ and the vector field Jacobian $J_v$ satisfy the following flow Jacobian ODE,
    \begin{align}
        \frac{d}{dt} J_{\phi}(\epsilon \mid t) = J_{v}(x^t \mid t) J_\phi(\epsilon \mid t), \quad J_{\phi}(\epsilon \mid 0) = I_d,
    \end{align}
    where $x^t = \phi(\epsilon \mid t)$ follows the flow ODE (Eq.~\ref{eq:flow-ode}).
\end{lemma}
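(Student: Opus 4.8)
The plan is to treat this as the standard \emph{variational equation} (equation of first variation) associated with the flow ODE~(Eq.~\ref{eq:flow-ode}), obtained by differentiating the flow map with respect to its initial condition $\epsilon$. Throughout I would assume the learned vector field $v(\cdot \mid t)$ is $C^1$ in its spatial argument, so that $J_v$ exists and the flow $\phi$ depends differentiably on the initial noise; this is precisely the regularity that makes $\phi(\cdot \mid t)$ a diffeomorphism, and it holds here because $v$ is a neural network with smooth activations.

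First I would start from the defining relation $\frac{d}{dt}\phi(\epsilon \mid t) = v(\phi(\epsilon \mid t) \mid t)$ with $\phi(\epsilon \mid 0) = \epsilon$, and differentiate both sides with respect to the vector $\epsilon$. By the theorem on smooth dependence of ODE solutions on initial data, $\phi$ is jointly $C^1$ in $(t, \epsilon)$ with continuous mixed partials, so the order of differentiation may be exchanged: $\frac{\partial}{\partial \epsilon}\frac{d}{dt}\phi(\epsilon \mid t) = \frac{d}{dt}\frac{\partial}{\partial \epsilon}\phi(\epsilon \mid t) = \frac{d}{dt} J_{\phi}(\epsilon \mid t)$. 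Making this interchange rigorous is the one step that requires care; everything after it is the chain rule.

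Next I would apply the chain rule to the right-hand side. Writing $x^t = \phi(\epsilon \mid t)$, we have $\frac{\partial}{\partial \epsilon}\, v(\phi(\epsilon \mid t) \mid t) = J_v(x^t \mid t)\,\frac{\partial \phi}{\partial \epsilon}(\epsilon \mid t) = J_v(x^t \mid t)\, J_{\phi}(\epsilon \mid t)$, which yields the claimed matrix ODE $\frac{d}{dt} J_{\phi}(\epsilon \mid t) = J_v(x^t \mid t)\, J_{\phi}(\epsilon \mid t)$. For the initial condition, differentiating $\phi(\epsilon \mid 0) = \epsilon$ with respect to $\epsilon$ gives $J_{\phi}(\epsilon \mid 0) = I_d$, completing the statement.

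Finally, I would remark that this is a \emph{linear} matrix ODE for $J_{\phi}$ with coefficient $J_v(x^t \mid t)$ continuous in $t$ along the trajectory $x^t$, so by existence–uniqueness for linear ODEs the solution is unique and defined on all of $[0,1]$; by Liouville's formula $\det J_{\phi}(\epsilon \mid t) = \exp\!\big(\int_0^t \Tr J_v(x^s \mid s)\, ds\big) \neq 0$, consistent with $\phi(\cdot \mid t)$ being a diffeomorphism. Specializing to $d = 1$ for the conditional return $Z^{\pi}(s,a)$, this collapses to $\frac{d}{dt}\frac{\partial \phi}{\partial \epsilon}(\epsilon \mid t) = \frac{\partial v}{\partial z}(z^t \mid t)\,\frac{\partial \phi}{\partial \epsilon}(\epsilon \mid t)$ with $\frac{\partial \phi}{\partial \epsilon}(\epsilon \mid 0) = 1$, which is exactly the flow derivative ODE used in Proposition~\ref{prop:flow-derivative-ode} and Section~\ref{sec:harnessing-uncertainty}.
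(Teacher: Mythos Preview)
Your proposal is correct and follows essentially the same approach as the paper: differentiate the flow ODE $\frac{d}{dt}\phi(\epsilon\mid t)=v(\phi(\epsilon\mid t)\mid t)$ with respect to the initial condition $\epsilon$ and apply the chain rule. You are in fact more careful than the paper about justifying the interchange of $\partial/\partial\epsilon$ and $d/dt$ and about deriving the initial condition from $\phi(\epsilon\mid 0)=\epsilon$ (the paper's appeal to ``the covariance of the noise $\epsilon$ is the identity matrix'' is unnecessary for this); the Liouville remark and the $d=1$ specialization are correct extras.
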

\begin{proof}
    By definition, the vector field $v$ and the diffeomorphic flow $\phi$ satisfy the flow ODE (Eq.~\ref{eq:flow-ode}) for any flow time $t \in [0, 1]$,
    \begin{align*}
        \frac{d}{d t} \phi(\epsilon \mid t) = v(\phi(\epsilon \mid t) \mid t), \quad x^t = \phi(\epsilon \mid t).
    \end{align*}
    Taking Jacobians with respect to the $d$-dimensional noise $\epsilon$ on both sides gives us
    \begin{align*}
        \frac{d}{dt} J_{\phi}(\epsilon \mid t) = J_{v}(x^t \mid t) J_\phi(\epsilon \mid t).
    \end{align*}
    Since the covariance of the noise $\epsilon$ is the identity matrix $I_d$, we set $J_{\phi}(\epsilon \mid 0) = I_d$ to conclude the proof.
\end{proof}

Similarly, the learned return vector field $v$ with its diffeomorphic flow $\phi$ satisfies the following \emph{flow derivative ODE} for the conditional return random variable $Z^{\pi}(s, a)$.

\begin{repproposition}{prop:flow-derivative-ode}
For a state $s \in \gS$, an action $a \in \gA$, a noise $\epsilon$ sampled from $\gN(0, 1)$, and a learned return vector field $v$ with the corresponding diffeomorphic flow $\phi$, the flow derivative $\partial \phi / \partial \epsilon$ and the vector field derivative $\partial v / \partial z$ satisfy the following flow derivative ODE,
\begin{align*}
    \frac{d}{dt} \frac{\partial \phi}{\partial \epsilon}(\epsilon \mid t, s, a) = \frac{\partial v}{\partial z} (z^t \mid t, s, a) \cdot \frac{\partial \phi}{\partial \epsilon}(\epsilon \mid t, s, a),
    \quad \frac{\partial \phi}{\partial \epsilon}(\epsilon \mid 0, s, a) = 1,
\end{align*}
where $z^t = \phi(\epsilon \mid t, s, a)$ follows the flow ODE (Eq.~\ref{eq:flow-ode}).
\end{repproposition}
\begin{proof}
    Applying~\lemmaref{lemma:flow-jac-ode} to the $1$-dimensional conditional return random variable $Z^{\pi}(s, a)$, we get the desired flow derivative ODE.
\end{proof}

\section{Components of the practical algorithm}

\subsection{Practical flow matching losses}
\label{appendix:practical-flow-matching-loss}

Our practical loss for fitting the return distribution involves the DCFM loss and a bootstrapped regularization based on TD learning. 

We will use a target vector field $\bar{v}$~\citep{farebrother2025temporaldifferenceflows}, which aggregates all the historical vector fields $\{ v_k \}$, to replace the single historical vector field in $\gL_{\text{DCFM}}$ (Eq.~\ref{eq:dist-cfm-loss-cfv}). The reasons for introducing the target vector field $\hat{v}$ are twofold. First, using a target network is a widely used technique in RL. This technique has been proven to smooth learning and enable standard TD learning~\citep{mnih2015human, farebrother2025temporal}. Second, a vector field minimizing the preliminary loss $\gL_{\text{DCFM}}$ might collapse to a constant, e.g., $v = 0$, because the vector field is fitting the output of itself. However, the same problem has already existed in a family of self-supervised representation learning methods, e.g., BYOL~\citep{grill2020bootstrap}, where they learn representations to predict output from the model itself. Instead of using the model from the previous iteration, they use an exponential moving average of the historical models (target model) to predict target outputs. In the experiments of~\citet{grill2020bootstrap}, this target network helps prevent model collapse, and this observation motivates us to use the target vector field in \ours{}. Additionally,~\citet{tian2021understanding} provides a theoretical characterization of how BYOL avoids this failure mode.
Using $z^t \sim \bar{p}(z^t \mid t, s', a')$ to denote the sampling procedure of \emph{(1)} first sampling a noise $\epsilon \sim \gN(0, 1)$ and \emph{(2)} then invoking the Euler method (Alg.~\ref{alg:euler-method}), we have
\begin{align}
    \gL_{\text{DCFM}}(v) = \E_{\substack{(s, a, r, s') \sim D, t \sim \textsc{Unif}([0, 1]) \\ a' \sim \pi(a' \mid s'), z^t \sim \bar{p}( z^t \mid t, s', a') }} \left[ \left( v(r + \gamma z^t \mid t, s, a) - \bar{v} \left( z^t \mid t, s', a' \right) \right)^2 \right].
    \label{eq:bcfm-reg}
\end{align}

In our initial experiments, naively optimizing $\gL_{\text{DCFM}}(v)$ produced a trivial performance on some tasks. To stabilize learning, we use the return predictions at the next state-action pair $(s', a')$ and the flow time $t = 1$ to construct a bootstrapped target return and invoke the standard conditional flow matching loss (Eq.~\ref{eq:cfm-loss}). The resulting loss function, called \emph{bootstrapped conditional flow matching} (BCFM) loss, resembles the standard Bellman error. Specifically, using $z^1_{\text{TD}} = r(s, a) + \gamma z^1$ to denote the target return, and using $z^t_{\text{TD}} = t z^1_{\text{TD}} + (1 - t) \epsilon$ to denote a linear interpolation between $z^1$ and $\epsilon$,\footnote{The same noise $\epsilon$ is used to sample $z^1$ and construct $z^t_{\text{TD}}$ and $z^1_{\text{TD}} - \epsilon$.} the BCFM loss can be written as
\begin{align*}
    \gL_{\text{BCFM}}(v) = \E_{\substack{(s, a, r, s') \sim D, t \sim \textsc{Unif}([0, 1]) \\ a' \sim \pi(a' \mid s'), z^1 \sim \bar{p}(z^1 \mid 1, s', a')}} \left[ \left( v(z^t_{\text{TD}} \mid t, s, a) - (z^1_{\text{TD}} - \epsilon) \right)^2 \right].
\end{align*}
Therefore, using $\lambda$ to denote a balancing coefficient, the regularized flow matching loss function is $\gL_{\text{DCFM}}(v) + \lambda \gL_{\text{BCFM}}(v)$. Our complete loss function also incorporates the confidence weight (Sec.~\ref{sec:harnessing-uncertainty}) into the DCFM loss and the BCFM regularization:
\begin{align*}
    \gL_{\text{{\color{orange} w}DCFM}}(v) &= \E_{\substack{(s, a, r, s') \sim D, t \sim \textsc{Unif}([0, 1]) \\ a' \sim \pi(a' \mid s'), z^t \sim \bar{p}( z^t \mid t, s', a') }} \left[ {\color{orange} w(s, a, \epsilon)} \cdot \left( v(r + \gamma z^t \mid t, s, a) - \bar{v} \left( z^t \mid t, s', a' \right) \right)^2 \right]. \\
    \gL_{\text{{\color{orange} w}BCFM}}(v) &= \E_{\substack{(s, a, r, s') \sim D, t \sim \textsc{Unif}([0, 1]) \\ a' \sim \pi(a' \mid s'), z^1 \sim \bar{p}(z^1 \mid 1, s', a')}} \left[ {\color{orange} w(s, a, \epsilon)} \cdot \left( v(z^t_{\text{TD}} \mid t, s, a) - (z^1_{\text{TD}} - \epsilon) \right)^2 \right].
\end{align*}
We use $\gL_{\text{Value Flow}}(v) = \gL_{\text{wDCFM}}(v) + \lambda \gL_{\text{wBCFM}}(v)$ to denote the practical loss for fitting the return distribution.

\subsection{Policy extraction strategies}
\label{appendix:policy-extraction-strategies}

We consider two different behavioral-regularized policy extraction strategies for offline RL and offline-to-online RL. First, for offline RL, following prior work~\citep{li2025reinforcement, chen2022offline}, we use rejection sampling to maximize Q estimates while implicitly imposing a KL constraint~\citep{hilton_kl_maxofn_2023} toward a fixed behavioral cloning (BC) policy. Practically, for a state $s$, we learn a stochastic BC flow policy $\pi^{\beta}: \gS \times \R^d \to \gA$ that transforms a $d$-dimensional noise $\epsilon_d \sim \gN(0, I_d)$ into an action $\pi^{\beta}(s, \epsilon_d) \in \gA$ using the standard conditional flow matching loss $\gL_{\text{BC Flow}}(\pi^{\beta})$~\citep{park2025horizon}. Rejection sampling first uses the learned BC flow policy to sample a set of actions $\{a_1, \cdots, a_N \}$, and then selects the best action that maximizes the Q estimates (Eq.~\ref{eq:ret-expectation-var-est}):
\begin{align*}
    \hat{Q}(s, a) = \E_{\epsilon \sim \gN(0, 1)}[v(\epsilon \mid 0, s, a)],  \quad a^{\star} = \argmax_{\{a_1, \cdots, a_N: \, a_i \sim \pi^{\beta} \} } \hat{Q}(s, a_i).
\end{align*}

Second, for online fine-tuning in offline-to-online RL, following prior work~\citep{park2025flow}, we learn a flow one-step policy $\pi: \gS \times \R^d \to \gA$ to minimize a DDPG-style loss with behavioral regularization~\citep{fujimoto2021minimalistapproach}. This loss function guides the policy to select actions that maximize the Q estimates while distilling it toward the fixed BC flow policy:
\begin{align}
    \gL_{\text{One-step Flow}}(\pi) = \E_{s \sim D, \epsilon_d \sim \gN(0, I_d)} \left[ -\hat{Q}(s, \pi(s, \epsilon_d)) + \alpha \norm{  \pi(s, \epsilon_d) - \pi^\beta(s, \epsilon_d) }^2 \right],
    \label{eq:one-step-flow-policy-obj}
\end{align}
where $\alpha$ controls the distillation strength. The benefit of learning a parametric policy is introducing flexibility for adjusting the degree of behavioral regularization during online interactions, mitigating the issue of over-pessimism~\citep{lee2022offline, zhou2024efficient, nakamoto2023cal}.

\section{Additional experiments}
\label{appendix:full-experiment}

\begin{table}[t]
\caption{\footnotesize \textbf{Full offline evaluation on OGBench and D4RL benchmarks.} We present the full evaluation results on $49$ OGBench tasks and $12$ D4RL tasks. Following prior work~\citep{park2025flow}, we use an asterisk (*) to indicate the task for hyperparameter tuning in each domain. We aggregate the results over 8 seeds (4 seeds for image-based tasks), and bold values within $95\%$ of the best performance for each task.
}
\label{tab:offline-eval}
\begin{center}
\setlength{\tabcolsep}{6pt}
\scalebox{0.48}
{
\begin{tabular}{lcccccccccc}
\toprule
& \multicolumn{3}{c}{Gaussian Policies} & \multicolumn{7}{c}{Flow Policies} \\
\cmidrule(lr){2-4} \cmidrule(lr){5-11}
& BC & IQL & ReBRAC & FBRAC & IFQL & FQL & C51 & IQN & CODAC & \ours{} \\
\midrule
\texttt{cube-double-play-singletask-task1-v0}  & $8 \pm 3$ & $27 \pm 5$ & $45 \pm 6$ & $47 \pm 11$ & $35 \pm 9$ & $61 \pm 9$ & $9 \pm 0$ & $70 \pm 14$ & $80 \pm 11$ & $\mathbf{97} \pm 1$\\
\texttt{cube-double-play-singletask-task2-v0 (*)}  & $0 \pm 0$ & $1 \pm 1$ & $7 \pm 3$ & $22 \pm 12$ & $9 \pm 5$ & $36 \pm 6$ & $0 \pm 0$ & $24 \pm 9$ & $63 \pm 4$ & $\mathbf{76} \pm 7$\\
\texttt{cube-double-play-singletask-task3-v0}  & $0 \pm 0$ & $0 \pm 0$ & $4 \pm 1$ & $4 \pm 2$ & $8 \pm 5$ & $22 \pm 5$ & $0 \pm 0$ & $25 \pm 6$ & $66 \pm 9$ & $\mathbf{73} \pm 4$\\
\texttt{cube-double-play-singletask-task4-v0}  & $0 \pm 0$ & $0 \pm 0$ & $1 \pm 1$ & $0 \pm 1$ & $1 \pm 1$ & $5 \pm 2$ & $0 \pm 0$ & $10 \pm 1$ & $13 \pm 2$ & $\mathbf{30} \pm 5$\\
\texttt{cube-double-play-singletask-task5-v0}  & $0 \pm 0$ & $4 \pm 3$ & $4 \pm 2$ & $2 \pm 2$ & $17 \pm 6$ & $19 \pm 10$ & $0 \pm 0$ & $\mathbf{81} \pm 8$ & $\mathbf{82} \pm 4$ & $69 \pm 5$\\
\midrule
\texttt{scene-play-singletask-task1-v0}  & $19 \pm 6$ & $94 \pm 3$ & $\mathbf{95} \pm 2$ & $\mathbf{96} \pm 8$ & $\mathbf{98} \pm 3$ & $\mathbf{100} \pm 0$ & $17 \pm 3$ & $\mathbf{100} \pm 0$ & $\mathbf{99} \pm 0$ & $\mathbf{99} \pm 0$\\
\texttt{scene-play-singletask-task2-v0 (*)}  & $1 \pm 1$ & $12 \pm 3$ & $50 \pm 13$ & $46 \pm 10$ & $0 \pm 0$ & $76 \pm 9$ & $2 \pm 1$ & $1 \pm 0$ & $85 \pm 4$ & $\mathbf{97} \pm 1$\\
\texttt{scene-play-singletask-task3-v0}  & $1 \pm 1$ & $32 \pm 7$ & $55 \pm 16$ & $78 \pm 4$ & $54 \pm 19$ & $\mathbf{98} \pm 1$ & $0 \pm 1$ & $\mathbf{94} \pm 2$ & $90 \pm 3$ & $\mathbf{94} \pm 2$\\
\texttt{scene-play-singletask-task4-v0}  & $2 \pm 2$ & $0 \pm 1$ & $3 \pm 3$ & $4 \pm 4$ & $0 \pm 0$ & $5 \pm 1$ & $2 \pm 1$ & $3 \pm 1$ & $0 \pm 0$ & $\mathbf{7} \pm 17$\\
\texttt{scene-play-singletask-task5-v0}  & $\mathbf{0} \pm 0$ & $\mathbf{0} \pm 0$ & $\mathbf{0} \pm 0$ & $\mathbf{0} \pm 0$ & $\mathbf{0} \pm 0$ & $\mathbf{0} \pm 0$ & $\mathbf{0} \pm 0$ & $\mathbf{0} \pm 0$ & $\mathbf{0} \pm 0$ & $\mathbf{0} \pm 0$\\
\midrule
\texttt{puzzle-3x3-play-singletask-task1-v0}  & $5 \pm 2$ & $33 \pm 6$ & $\mathbf{97} \pm 4$ & $63 \pm 19$ & $94 \pm 3$ & $90 \pm 4$ & $5 \pm 0$ & $71 \pm 3$ & $78 \pm 8$ & $\mathbf{99} \pm 0$\\
\texttt{puzzle-3x3-play-singletask-task2-v0}  & $1 \pm 1$ & $4 \pm 3$ & $1 \pm 1$ & $2 \pm 2$ & $1 \pm 2$ & $16 \pm 5$ & $0 \pm 0$ & $2 \pm 2$ & $5 \pm 2$ & $\mathbf{98} \pm 2$\\
\texttt{puzzle-3x3-play-singletask-task3-v0}  & $1 \pm 1$ & $3 \pm 2$ & $3 \pm 1$ & $1 \pm 1$ & $0 \pm 0$ & $10 \pm 3$ & $0 \pm 0$ & $0 \pm 0$ & $4 \pm 3$ & $\mathbf{97} \pm 1$\\
\texttt{puzzle-3x3-play-singletask-task4-v0 (*)}  & $1 \pm 1$ & $2 \pm 1$ & $2 \pm 1$ & $2 \pm 2$ & $0 \pm 0$ & $16 \pm 5$ & $0 \pm 0$ & $0 \pm 0$ & $5 \pm 5$ & $\mathbf{84} \pm 24$\\
\texttt{puzzle-3x3-play-singletask-task5-v0}  & $1 \pm 0$ & $3 \pm 2$ & $5 \pm 3$ & $2 \pm 2$ & $0 \pm 0$ & $16 \pm 3$ & $0 \pm 0$ & $0 \pm 0$ & $6 \pm 5$ & $\mathbf{58} \pm 39$\\
\midrule
\texttt{puzzle-4x4-play-singletask-task1-v0}  & $1 \pm 1$ & $12 \pm 2$ & $26 \pm 4$ & $32 \pm 9$ & $\mathbf{49} \pm 9$ & $34 \pm 8$ & $0 \pm 0$ & $41 \pm 2$ & $37 \pm 32$ & $36 \pm 3$\\
\texttt{puzzle-4x4-play-singletask-task2-v0}  & $0 \pm 0$ & $7 \pm 4$ & $12 \pm 4$ & $5 \pm 3$ & $4 \pm 4$ & $16 \pm 5$ & $0 \pm 0$ & $12 \pm 4$ & $10 \pm 10$ & $\mathbf{27} \pm 5$\\
\texttt{puzzle-4x4-play-singletask-task3-v0}  & $0 \pm 0$ & $9 \pm 3$ & $15 \pm 3$ & $20 \pm 10$ & $\mathbf{50} \pm 14$ & $18 \pm 5$ & $0 \pm 0$ & $45 \pm 7$ & $33 \pm 29$ & $30 \pm 4$\\
\texttt{puzzle-4x4-play-singletask-task4-v0 (*)}  & $0 \pm 0$ & $5 \pm 2$ & $10 \pm 3$ & $5 \pm 1$ & $21 \pm 11$ & $11 \pm 3$ & $0 \pm 0$ & $23 \pm 2$ & $12 \pm 10$ & $\mathbf{28} \pm 5$\\
\texttt{puzzle-4x4-play-singletask-task5-v0}  & $0 \pm 0$ & $4 \pm 1$ & $7 \pm 3$ & $2 \pm 2$ & $2 \pm 2$ & $7 \pm 3$ & $0 \pm 0$ & $\mathbf{16} \pm 6$ & $10 \pm 8$ & $13 \pm 2$\\
\midrule
\texttt{cube-triple-play-singletask-task1-v0 (*)}  & $1 \pm 1$ & $4 \pm 4$ & $1 \pm 2$ & $0 \pm 0$ & $2 \pm 2$ & $20 \pm 6$ & $1 \pm 0$ & $29 \pm 2$ & $9 \pm 5$ & $\mathbf{59} \pm 12$\\
\texttt{cube-triple-play-singletask-task2-v0}  & $0 \pm 0$ & $0 \pm 0$ & $0 \pm 0$ & $0 \pm 0$ & $0 \pm 0$ & $\mathbf{1} \pm 2$ & $0 \pm 0$ & $0 \pm 0$ & $\mathbf{1} \pm 0$ & $0 \pm 0$\\
\texttt{cube-triple-play-singletask-task3-v0}  & $0 \pm 0$ & $0 \pm 0$ & $0 \pm 0$ & $0 \pm 0$ & $0 \pm 0$ & $0 \pm 0$ & $0 \pm 0$ & $1 \pm 0$ & $0 \pm 0$ & $\mathbf{7} \pm 3$\\
\texttt{cube-triple-play-singletask-task4-v0}  & $\mathbf{0} \pm 0$ & $\mathbf{0} \pm 0$ & $\mathbf{0} \pm 0$ & $\mathbf{0} \pm 0$ & $\mathbf{0} \pm 0$ & $\mathbf{0} \pm 0$ & $\mathbf{0} \pm 0$ & $\mathbf{0} \pm 0$ & $\mathbf{0} \pm 0$ & $\mathbf{0} \pm 0$\\
\texttt{cube-triple-play-singletask-task5-v0}  & $0 \pm 0$ & $1 \pm 1$ & $0 \pm 0$ & $0 \pm 0$ & $0 \pm 0$ & $0 \pm 0$ & $0 \pm 0$ & $0 \pm 0$ & $0 \pm 0$ & $\mathbf{2} \pm 1$\\
\midrule
\texttt{pen-human-v1}    & $71$ & $78$ & $\mathbf{103}$ & $77 \pm 7$ & $71 \pm 12$ & $53 \pm 6$ & $69 \pm 8$ & $69 \pm 3$ & $67 \pm 0$ & $67 \pm 9$\\
\texttt{pen-cloned-v1}   & $52$ & $83$ & $\mathbf{103}$ & $67 \pm 9$ & $80 \pm 11$ & $74 \pm 11$ & $67 \pm 9$ & $80 \pm 11$ & $76 \pm 2$ & $73 \pm 5$\\
\texttt{pen-expert-v1}   & $110$ & $128$ & $\mathbf{152}$ & $119 \pm 7$ & $139 \pm 5$ & $142 \pm 6$ & $110 \pm 3$ & $118 \pm 19$ & $136 \pm 2$ & $117 \pm 3$\\
\texttt{door-human-v1}   & $2$ & $3$ & $0$ & $4 \pm 2$ & $\mathbf{7} \pm 2$ & $0 \pm 0$ & $0 \pm 0$ & $0 \pm 0$ & $3 \pm 1$ & $\mathbf{7} \pm 2$\\
\texttt{door-cloned-v1}  & $0$ & $\mathbf{3}$ & $0$ & $2 \pm 1$ & $2 \pm 1$ & $2 \pm 1$ & $0 \pm 0$ & $0 \pm 0$ & $0 \pm 0$ & $0 \pm 0$\\
\texttt{door-expert-v1}  & $\mathbf{105}$ & $\mathbf{107}$ & $\mathbf{106}$ & $\mathbf{104} \pm 1$ & $\mathbf{104} \pm 2$ & $\mathbf{104} \pm 1$ & $\mathbf{104} \pm 1$ & $\mathbf{105} \pm 0$ & $\mathbf{104} \pm 0$ & $\mathbf{104} \pm 1$\\
\texttt{hammer-human-v1}   & $\mathbf{3}$ & $2$ & $0$ & $2 \pm 1$ & $\mathbf{3} \pm 1$ & $1 \pm 1$ & $\mathbf{3} \pm 1$ & $2 \pm 1$ & $\mathbf{3} \pm 1$ & $1 \pm 0$\\
\texttt{hammer-cloned-v1}  & $1$ & $2$ & $5$ & $2 \pm 1$ & $2 \pm 1$ & $\mathbf{11} \pm 9$ & $0 \pm 0$ & $0 \pm 0$ & $6 \pm 0$ & $1 \pm 0$\\
\texttt{hammer-expert-v1}  & $127$ & $\mathbf{129}$ & $\mathbf{134}$ & $119 \pm 9$ & $117 \pm 9$ & $125 \pm 3$ & $122 \pm 1$ & $121 \pm 7$ & $126 \pm 1$ & $125 \pm 5$\\
\texttt{relocate-human-v1}    & $\mathbf{0}$ & $\mathbf{0}$ & $\mathbf{0}$ & $\mathbf{0} \pm 0$ & $\mathbf{0} \pm 0$ & $\mathbf{0} \pm 0$ & $\mathbf{0} \pm 0$ & $\mathbf{0} \pm 0$ & $\mathbf{0} \pm 0$ & $\mathbf{0} \pm 0$\\
\texttt{relocate-cloned-v1}   & $0$ & $0$ & $\mathbf{2}$ & $1 \pm 1$ & $0 \pm 0$ & $0 \pm 0$ & $0 \pm 0$ & $0 \pm 0$ & $0 \pm 0$ & $0 \pm 0$\\
\texttt{relocate-expert-v1}   & $\mathbf{108}$ & $\mathbf{106}$ & $\mathbf{108}$ & $\mathbf{105} \pm 2$ & $\mathbf{104} \pm 3$ & $\mathbf{107} \pm 1$ & $\mathbf{103} \pm 0$ & $\mathbf{103} \pm 0$ & $\mathbf{103} \pm 2$ & $\mathbf{105} \pm 3$\\
\midrule
\texttt{visual-antmaze-medium-navigate-singletask-task1-v0 (*)}  & - & $\mathbf{78} \pm 9$ & $54 \pm 15$ & $27 \pm 3$ & $\mathbf{81} \pm 3$ & $32 \pm 3$ & - & $62 \pm 7$ & - & $\mathbf{77} \pm 4$\\
\texttt{visual-antmaze-medium-navigate-singletask-task2-v0}  & - & $90 \pm 3$ & $\mathbf{96} \pm 1$ & $42 \pm 4$ & $87 \pm 1$ & $60 \pm 2$ & - & $88 \pm 2$ & - & $75 \pm 5$\\
\texttt{visual-antmaze-medium-navigate-singletask-task3-v0}  & - & $80 \pm 6$ & $\mathbf{97} \pm 1$ & $32 \pm 4$ & $92 \pm 1$ & $35 \pm 8$ & - & $64 \pm 5$ & - & $81 \pm 7$\\
\texttt{visual-antmaze-medium-navigate-singletask-task4-v0}  & - & $\mathbf{89} \pm 4$ & $\mathbf{93} \pm 2$ & $23 \pm 2$ & $84 \pm 3$ & $35 \pm 2$ & - & $71 \pm 6$ & - & $71 \pm 6$\\
\texttt{visual-antmaze-medium-navigate-singletask-task5-v0}  & - & $84 \pm 2$ & $\mathbf{97} \pm 1$ & $25 \pm 4$ & $89 \pm 2$ & $29 \pm 7$ & - & $84 \pm 1$ & - & $70 \pm 26$\\
\midrule
\texttt{visual-antmaze-teleport-navigate-singletask-task1-v0 (*)}  & - & $5 \pm 2$ & $2 \pm 0$ & $1 \pm 1$ & $7 \pm 4$ & $2 \pm 1$ & - & $2 \pm 1$ & - & $\mathbf{10} \pm 4$\\
\texttt{visual-antmaze-teleport-navigate-singletask-task2-v0}  & - & $10 \pm 2$ & $10 \pm 3$ & $6 \pm 5$ & $13 \pm 3$ & $6 \pm 1$ & - & $7 \pm 3$ & - & $\mathbf{17} \pm 5$\\
\texttt{visual-antmaze-teleport-navigate-singletask-task3-v0}  & - & $7 \pm 7$ & $4 \pm 1$ & $10 \pm 4$ & $8 \pm 9$ & $9 \pm 4$ & - & $6 \pm 4$ & - & $\mathbf{16} \pm 3$\\
\texttt{visual-antmaze-teleport-navigate-singletask-task4-v0}  & - & $4 \pm 6$ & $4 \pm 0$ & $10 \pm 2$ & $\mathbf{18} \pm 2$ & $9 \pm 1$ & - & $4 \pm 2$ & - & $16 \pm 5$\\
\texttt{visual-antmaze-teleport-navigate-singletask-task5-v0}  & - & $2 \pm 1$ & $2 \pm 1$ & $2 \pm 1$ & $4 \pm 2$ & $1 \pm 1$ & - & $2 \pm 1$ & - & $\mathbf{8} \pm 2$\\
\midrule
\texttt{visual-cube-double-play-singletask-task1-v0 (*)}  & - & $\mathbf{34} \pm 23$ & $4 \pm 4$ & $6 \pm 2$ & $8 \pm 6$ & $23 \pm 4$ & - & $4 \pm 1$ & - & $\mathbf{35} \pm 2$\\
\texttt{visual-cube-double-play-singletask-task2-v0}  & - & $3 \pm 1$ & $0 \pm 0$ & $2 \pm 2$ & $0 \pm 0$ & $0 \pm 0$ & - & $0 \pm 0$ & - & $\mathbf{4} \pm 2$\\
\texttt{visual-cube-double-play-singletask-task3-v0}  & - & $7 \pm 4$ & $2 \pm 2$ & $2 \pm 1$ & $1 \pm 1$ & $4 \pm 2$ & - & $0 \pm 0$ & - & $\mathbf{11} \pm 2$\\
\texttt{visual-cube-double-play-singletask-task4-v0}  & - & $\mathbf{2} \pm 1$ & $0 \pm 0$ & $0 \pm 0$ & $0 \pm 0$ & $0 \pm 0$ & - & $0 \pm 0$ & - & $\mathbf{2} \pm 1$\\
\texttt{visual-cube-double-play-singletask-task5-v0}  & - & $11 \pm 2$ & $0 \pm 0$ & $0 \pm 0$ & $2 \pm 1$ & $4 \pm 1$ & - & $1 \pm 1$ & - & $\mathbf{13} \pm 3$\\
\midrule
\texttt{visual-scene-play-singletask-task1-v0 (*)}  & - & $\mathbf{97} \pm 2$ & $\mathbf{98} \pm 4$ & $46 \pm 4$ & $86 \pm 10$ & $\mathbf{98} \pm 3$ & - & $\mathbf{95} \pm 2$ & - & $\mathbf{99} \pm 0$\\
\texttt{visual-scene-play-singletask-task2-v0}  & - & $21 \pm 16$ & $30 \pm 15$ & $0 \pm 0$ & $0 \pm 0$ & $\mathbf{86} \pm 8$ & - & $79 \pm 15$ & - & $40 \pm 27$\\
\texttt{visual-scene-play-singletask-task3-v0}  & - & $12 \pm 9$ & $10 \pm 7$ & $10 \pm 3$ & $19 \pm 2$ & $22 \pm 6$ & - & $31 \pm 14$ & - & $\mathbf{66} \pm 1$\\
\texttt{visual-scene-play-singletask-task4-v0}  & - & $1 \pm 0$ & $0 \pm 0$ & $0 \pm 0$ & $0 \pm 0$ & $1 \pm 1$ & - & $0 \pm 0$ & - & $\mathbf{10} \pm 6$\\
\texttt{visual-scene-play-singletask-task5-v0}  & - & $\mathbf{0} \pm 0$ & $\mathbf{0} \pm 0$ & $\mathbf{0} \pm 0$ & $\mathbf{0} \pm 0$ & $\mathbf{0} \pm 0$ & - & $\mathbf{0} \pm 0$ & - & $\mathbf{0} \pm 0$\\
\midrule
\texttt{visual-puzzle-3x3-play-singletask-task1-v0 (*)}  & - & $7 \pm 15$ & $88 \pm 4$ & $7 \pm 2$ & $\mathbf{100} \pm 0$ & $94 \pm 1$ & - & $84 \pm 1$ & - & $93 \pm 5$\\
\texttt{visual-puzzle-3x3-play-singletask-task2-v0}  & - & $0 \pm 0$ & $\mathbf{12} \pm 1$ & $0 \pm 0$ & $0 \pm 0$ & $0 \pm 0$ & - & $6 \pm 2$ & - & $\mathbf{12} \pm 1$\\
\texttt{visual-puzzle-3x3-play-singletask-task3-v0}  & - & $0 \pm 0$ & $1 \pm 1$ & $0 \pm 0$ & $2 \pm 1$ & $0 \pm 0$ & - & $1 \pm 0$ & - & $\mathbf{3} \pm 1$\\
\texttt{visual-puzzle-3x3-play-singletask-task4-v0}  & - & $1 \pm 1$ & $0 \pm 1$ & $0 \pm 0$ & $1 \pm 0$ & $5 \pm 4$ & - & $3 \pm 0$ & - & $\mathbf{6} \pm 2$\\
\texttt{visual-puzzle-3x3-play-singletask-task5-v0}  & - & $0 \pm 0$ & $0 \pm 0$ & $0 \pm 0$ & $0 \pm 0$ & $1 \pm 2$ & - & $1 \pm 0$ & - & $\mathbf{2} \pm 0$\\
\bottomrule
\end{tabular}
}
\end{center}
\end{table}

\begin{table}[t]
\caption{\footnotesize \textbf{Offline-to-online evaluations on OGBench tasks.} We present full results on 6 OGBench tasks in the offline-to-online setting. \ours{} achieves strong fine-tuning performance compared to baselines. The results are averaged over 8 seeds, and we bold values within 95\% of the best performance for each task. }
\label{tab:offline2online-eval}
\begin{center}
\setlength{\tabcolsep}{3pt}
\scalebox{0.57}
{
\begin{tabular}{lcccccc}
\toprule
  & IQN & IFQL & FQL & RLPD & IQL & \ours{} \\
\midrule
\texttt{antmaze-large-navigate} & $55 \pm 5 \rightarrow 91 \pm 1$ & $11 \pm 3 \rightarrow 3 \pm 2$ & $\boldsymbol{87 \pm 7 \rightarrow 99 \pm 12}$ & $0 \pm 0 \rightarrow 3 \pm 2$ & $41 \pm 5 \rightarrow 45 \pm 5$ & $\boldsymbol{72 \pm 4 \rightarrow 96 \pm 1}$  \\
\texttt{humanoidmaze-medium-navigate} & $23 \pm 3 \rightarrow 14 \pm 2$ & $\boldsymbol{56 \pm 12 \rightarrow 82 \pm 7}$ & $12 \pm 7 \rightarrow 22 \pm 12$ & $0 \pm 0 \rightarrow 8 \pm 3$ & $21 \pm 5 \rightarrow 16 \pm 3$ & $\boldsymbol{29 \pm 6 \rightarrow 86 \pm 3}$ \\
\texttt{cube-double-play} & $29 \pm 4 \rightarrow 42 \pm 7$ & $12 \pm 3 \rightarrow 41 \pm 2$ & $\boldsymbol{40 \pm 11 \rightarrow 92 \pm 3}$ & $0 \pm 0 \rightarrow 0 \pm 0$ & $0 \pm 0 \rightarrow 0 \pm 0$ & $65 \pm 7 \rightarrow 79 \pm 6$ \\
\texttt{cube-triple-play} & $25 \pm 5 \rightarrow 14 \pm 6$ & $2 \pm 1 \rightarrow 7 \pm 5$ & $\boldsymbol{4 \pm 1 \rightarrow 83 \pm 12}$ & $0 \pm 0 \rightarrow 0 \pm 0$ & $3 \pm 1 \rightarrow 0 \pm 0$ & $29 \pm 8 \rightarrow 70 \pm 7$ \\
\texttt{puzzle-4x4-play} & $22 \pm 2 \rightarrow 6 \pm 1$  & $23 \pm 2 \rightarrow 19 \pm 12$ & $8 \pm 3 \rightarrow 38 \pm 52$ & $\boldsymbol{0 \pm 0 \rightarrow 100 \pm 0}$ & $5 \pm 1 \rightarrow 0 \pm 0$ & $14 \pm 3 \rightarrow 51 \pm 12$ \\
\texttt{scene-play} & $0 \pm 0 \rightarrow 0 \pm 0$ & $0 \pm 0 \rightarrow 60 \pm 14$ & $\boldsymbol{82 \pm 11 \rightarrow 100 \pm 1}$ & $\boldsymbol{0 \pm 0 \rightarrow 100 \pm 0}$ & $15 \pm 4 \rightarrow 10 \pm 3$ &  $\boldsymbol{92 \pm 23 \rightarrow 100 \pm 0}$  \\
\bottomrule
\end{tabular}
}
\end{center}
\end{table}

\begin{figure}[t]
    \includegraphics[width=\linewidth]{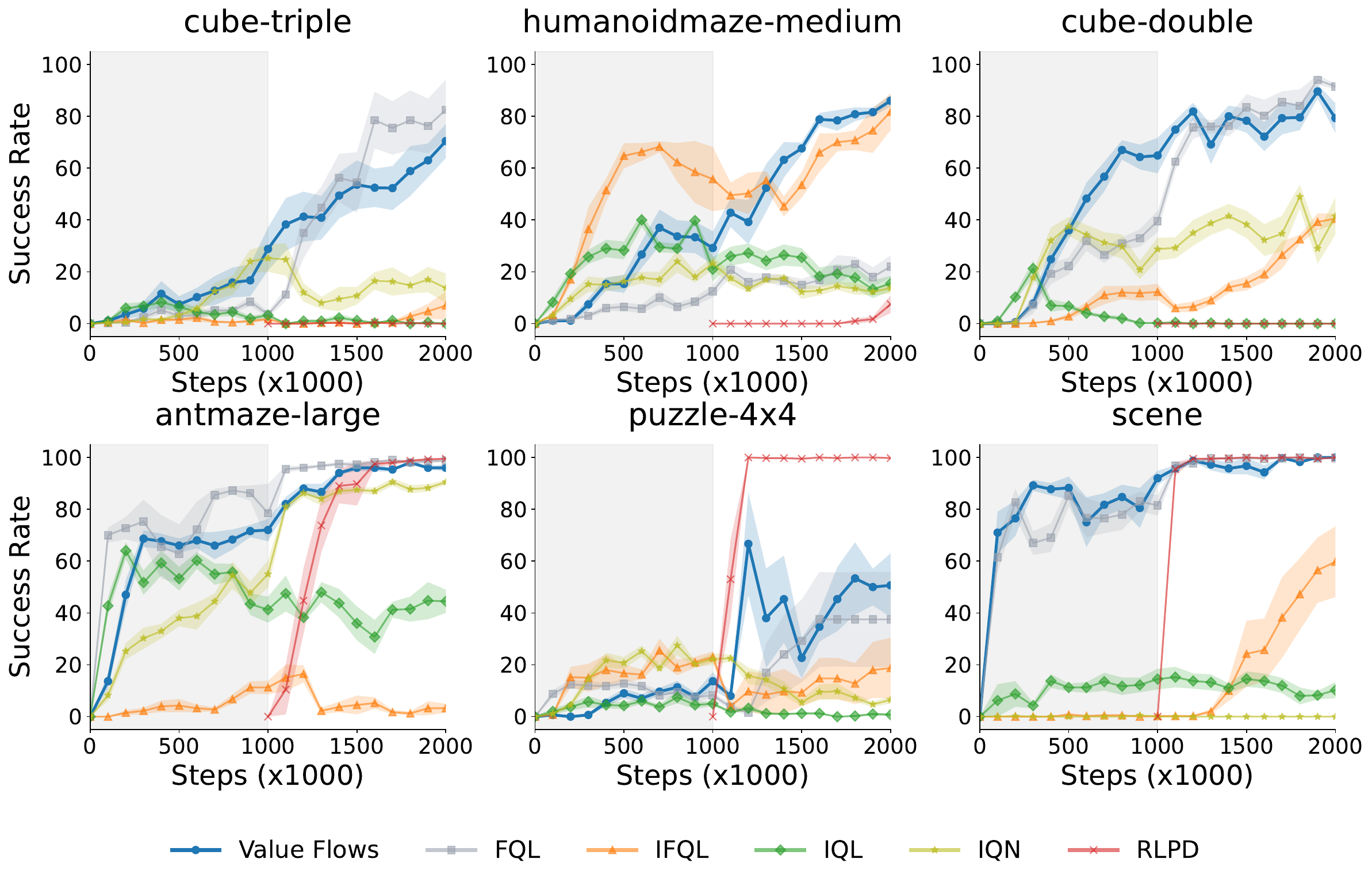}
    \caption{\footnotesize\textbf{\ours{} continues outperforming prior methods with online interactions. } \ours{} can be used directly for online fine-tuning and achieve strong performance without modifications to the distributional flow-matching objective. We aggregate results over 8 seeds.}
    \label{fig:offline-to-online}
\end{figure}

\subsection{Analysis on ablations}
\label{appendix:additional-ablations}

\begin{figure}[t]
    \centering
    \begin{minipage}[t]{0.49\textwidth}
        \centering
        \includegraphics[width=\textwidth]{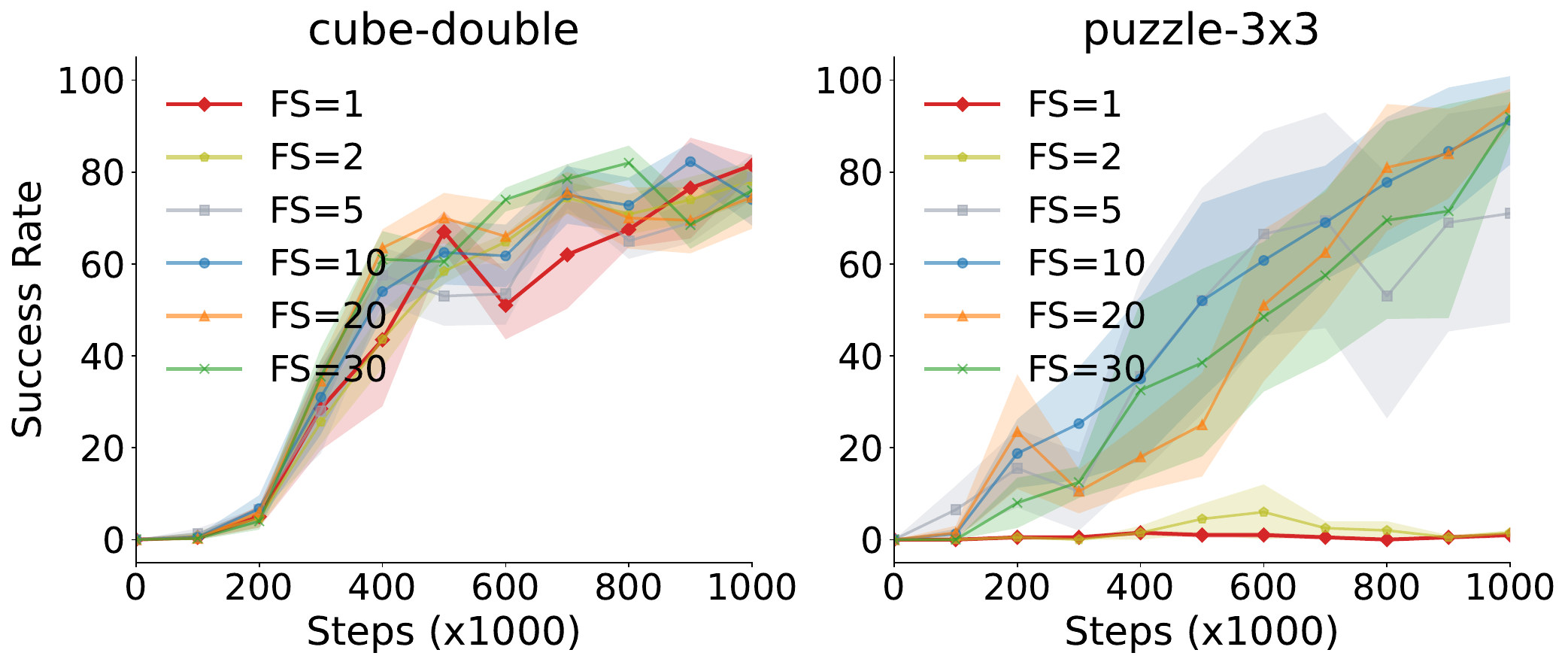}
        \caption{\footnotesize \ours{} is robust against the number of flow steps in the Euler method for the return vector field after a certain threshold ($10$). }
        \label{fig:step}
    \end{minipage}
    \hfill
    \begin{minipage}[t]{0.49\textwidth}
        \centering
        \includegraphics[width=\textwidth]{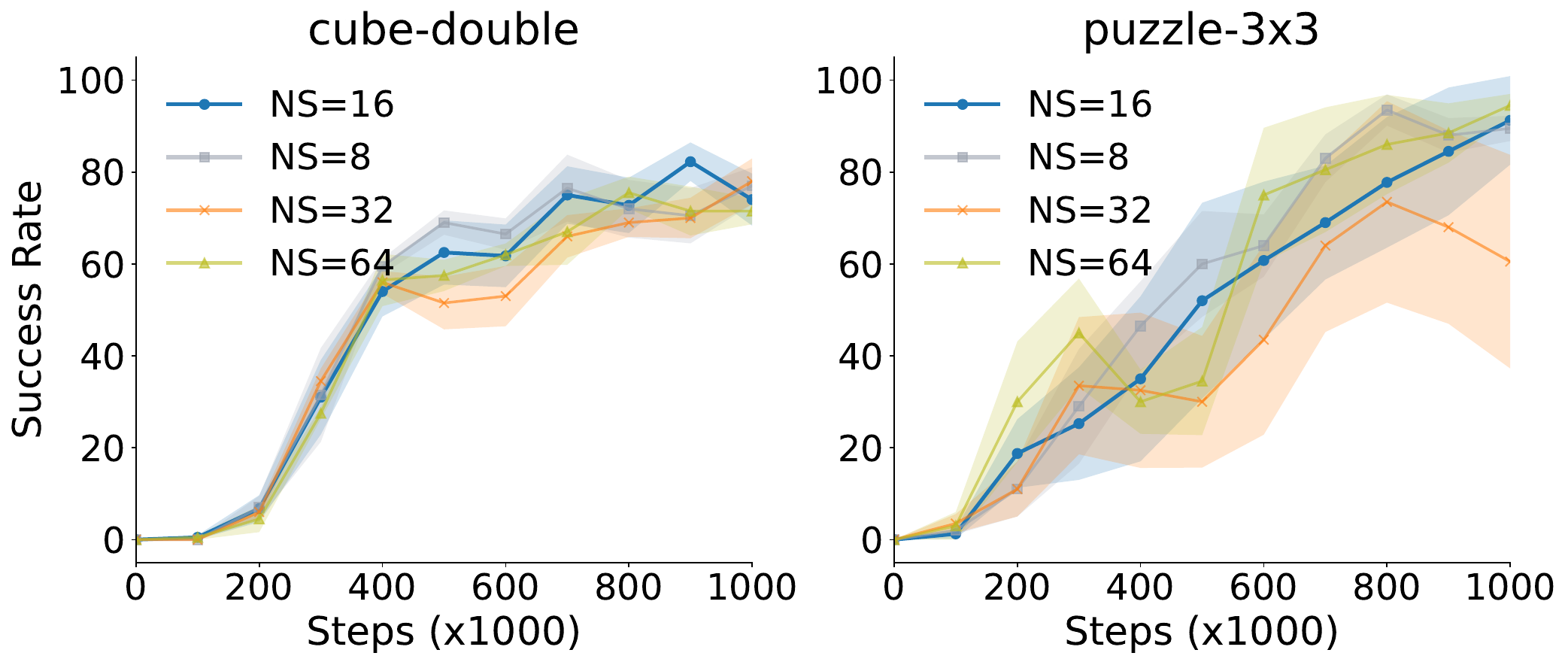}
        \caption{\footnotesize The number of candidates used in rejection sampling has minor effects on the success rates of \ours{}.}
        \label{fig:rejection}
    \end{minipage}

\end{figure}

In this section, we conduct full ablation experiments to study the effects of \emph{(1)} the regularization loss coefficient, \emph{(2)} the confidence weight, \emph{(3)} the number of flow steps in the Euler method, and \emph{(4)} the number of candidates in rejection sampling. Again, we compute means and standard deviations over $8$ random seeds. 

To better understand the role of the BCFM regularization loss, we ablate over different values of the regularization coefficient $\lambda$ and compare the performance of \ours{}. We present the results in Fig.~\ref{fig:lambda}. We observe that removing the BCFM regularization loss ($\lambda = 0$) results in poor performance on both \texttt{cube-double-play-singletask-task2-v0} and \texttt{puzzle-3x3-play-singletask-task4-v0}, while \ours{}'s performance starts to increase after adding some regularizations. Choosing the correct regularization coefficient $\lambda$ boosts the mean performance of Value Flows by $2.6\times$ and increases the sample efficiency of our algorithm by $2\times$. This suggests the BCFM regularization is an important component of \ours{}. 

A key design choice of \ours{} is the confidence weight. We next discuss the effect of using different temperatures $\tau$ in the confidence weight. As shown in Fig.~\ref{fig:tau}, we observe that, on the \texttt{puzzle-3x3-play-singletask-task4-v0} task, using a constant confidence weight ($\tau = 0$) can result in slower convergence of the success rate. On the \texttt{cube-double-play-singletask-task2-v0} task, using a constant confidence weight results in slightly worse performance. We see that an appropriate
temperature $\tau$ boosts the success rate by $60\%$ on average. Comparing against different magnitudes of the confidence weighting, we find that our choice in Table~\ref{tab:hyperparameters} is optimal.

Our final experiments study the effects of the number of flow steps in the Euler method and the number of candidates in rejection sampling. We ablate the number of flow steps within $\{1, 2, 5, 10, 20, 30\}$ and ablate the number of rejection sampling candidates within $\{ 8, 16, 32, 64 \}$, presenting the results in Fig.~\ref{fig:step}. Note that we keep the number of flow steps for the flow policy fixed, and only vary the number of flow steps when sampling from the learned return model. We observe that, on the \texttt{cube-double-play-singletask-task2-v0} task, even using a very small number of flow steps ($1$ or $2$) does not significantly decrease the performance of \ours{}. In contrast, using a small number of flow steps on the \texttt{puzzle-3x3-play-singletask-task4-v0} task can drastically reduce the success rate. Specifically, using $5$ flow steps resulted in a $20\%$ decrease in success rate compared to the results with $10$ flow steps, while using $1$ or $2$ flow steps will result in near-zero success rates. We also observe that the performance saturates after increasing the number of flow steps to $10$. We conjecture that the number of flow steps affects the expressiveness and accuracy of the return distribution. Empirically, we found that $10$ flow steps worked well and kept it fixed throughout our experiments. 

We also study the effect of the number of candidates in rejection sampling for policy extraction. Results in Fig.~\ref{fig:rejection} show that \ours{} is robust to the number of candidates in rejection samples, except that using $32$ candicates results in a $30\%$ decrease in success rate on the \texttt{puzzle-3x3-play-singletask-task4-v0} task. In our experiments, we found that $16$ candidates are sufficient across a variety of tasks. 

\subsection{Visualizing the confidence weight}
\label{appendix:confidence-weight-vis}

We visualize the confidence weight (log scale) with different temperature $\tau$ in Fig.~\ref{fig:vis-confidence-weight}, varying the input $x$ within $[0, 1]$. These results indicate that the range of our confidence weight is $[0.5, 1]$ and the confidence weight is a monotonic increasing function with respect to the input $x$. We observe that a smaller temperature $\tau$ results in a drastically increasing value in confidence weight. Given the same input $x$, a larger temperature $\tau$ results in a lower confidence weight.

\subsection{Computational time and resource consumption}
\label{appendix:wall-time-and-gpu-mem}

\begin{figure}[t]
    \centering
    \begin{minipage}[t]{0.49\textwidth}
        \centering
        \includegraphics[width=0.625\linewidth]{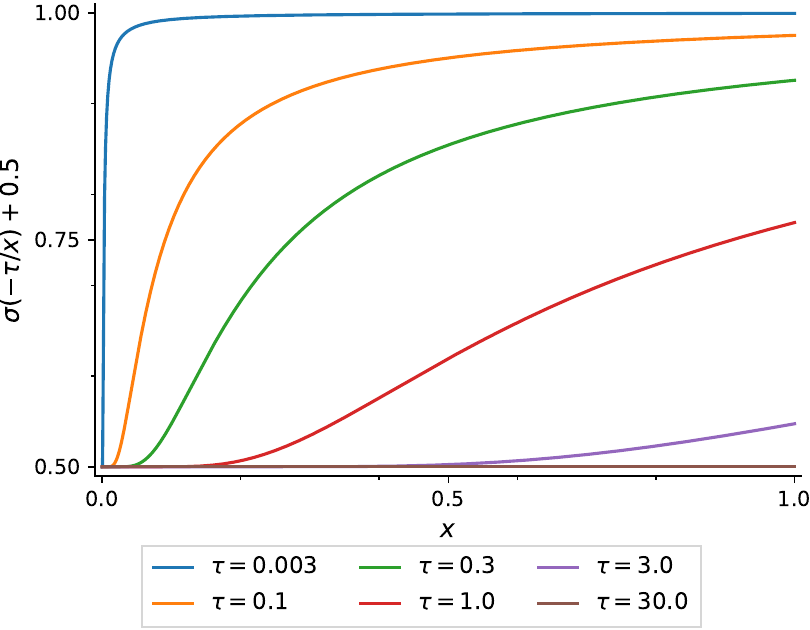}
        \caption{\footnotesize \footnotesize\textbf{The confidence weights with different temperatures. } A smaller temperature results in drastically increasing weight confidences for a larger variance in returns.
        }
        \label{fig:vis-confidence-weight}
    \end{minipage}
    \hfill
    \begin{minipage}[t]{0.49\textwidth}
        \centering
        \includegraphics[width=\linewidth]{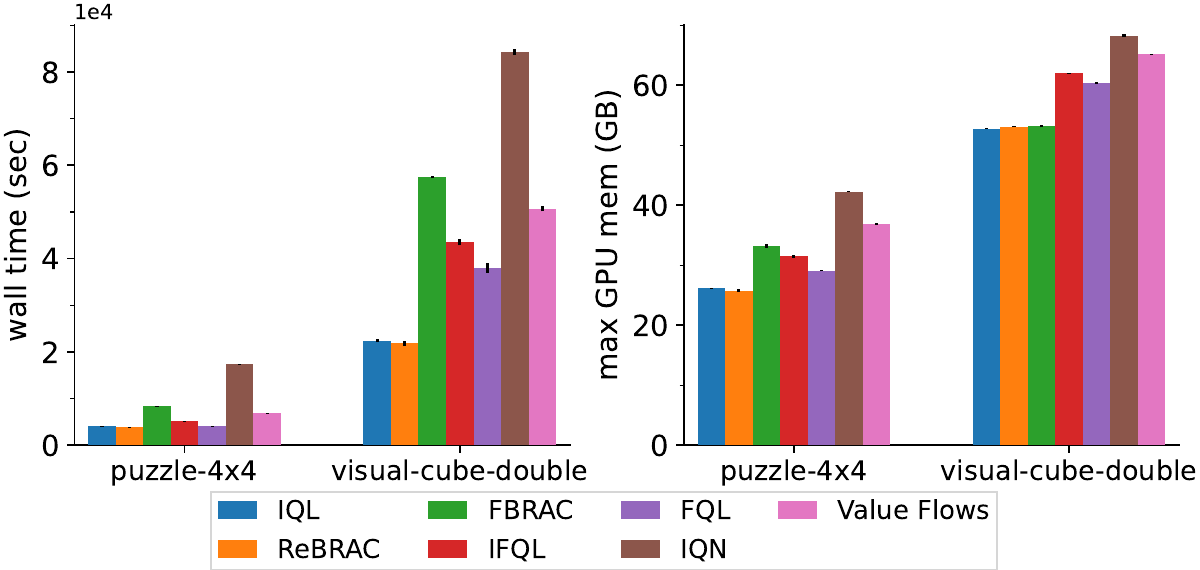}
        \caption{\footnotesize \textbf{Computational time and resource usage of different algorithms.} Training Value Flows requires intermediate wall time and GPU memory compared with prior methods.
        }
        \label{fig:time-and-gpu-mem-ablation}
    \end{minipage}
\end{figure}

We analyzed our experiment logs to investigate further the computational time and GPU memory consumption of different methods. Specifically, we analyze experiment logs on one state-based OGBench task (\texttt{puzzle-4x4-play-singletask-task4-v0}) and one image-based OGBench task (\texttt{visual-cube-double-play-singletask-task1-v0}), comparing Value Flows to selection baselines: IQL, ReBRAC, FBRAC, IFQL, FQL, and IQN. We report wall-time and maximum GPU memory usage aggregated across 8 seeds (4 seeds for the image-based task) for different methods. Results in Fig.~\ref{fig:time-and-gpu-mem-ablation} suggest that training Value Flows requires a computational time similar to training FQL, and the mean wall time of Value Flows is $50\%$ of the mean wall time of IQN. To maximize GPU usage, Value Flows used substantially less GPU memory than FBRAC and IQN because they directly use the Euler method in the actor loss (FBRAC) or employ additional neural networks to encode quantiles (IQN). Taken together, training Value Flows requires intermediate wall time and GPU memory compared with prior methods.

\subsection{Solving a risk-sensitive MDP}
\label{appendix:risk-sensitive-mdp}

\begin{figure}[t]
    \centering
    \begin{subfigure}[t]{0.48\linewidth}
        \begin{tikzpicture}[node distance=2.3cm,thick]
          \node[state] (s0) at (0,0) {$s_0$};
          \node[state] (s1) at (2.2,0) {$s_1$};
          \node[state] (s9) at (6.4,0) {$s_{9}$};
          \node[state, double, double distance=1pt] (s10) at (3.1,3) {$s_{10}$};
        
          \draw[->,mygreen] (s0) -- node[below=6pt] {\footnotesize $\gN(0, 10^{-4})$} (s1);
          \draw[->,dashed,mygreen] (s1) -- node[below=6pt] {\footnotesize $\gN(0, 10^{-4})$} (s9);
        
          \draw[->,myblue] (s0) edge [bend left=30] node[below=4pt,pos=0.225, xshift=2em] {\footnotesize $\gN(-130, 20)$} (s10);

          \draw[->,myblue] (s1) edge [bend left=30] node[below=4pt,pos=0.5, xshift=2.5em] {\footnotesize $\gN(-130, 20)$} (s10);
        
          \draw[->,myblue] (s9) to[bend left=20] node[below=4pt,pos=0.5,xshift=-1.5em] {\footnotesize $\gN(-130, 20)$} (s10);

          \draw[->,mygreen] (s9) to[bend right=30] node[above=10pt,pos=0.5,xshift=1.5em] {\footnotesize $\gN(-100, 800)$} (s10);
    
          \draw[->,myblue,thick] (0,3.7) -- (1.5,3.7)
              node[midway,above] {\footnotesize replace};
          \draw[->,mygreen,thick] (2.5,3.7) -- (4,3.7)
          node[midway,above] {\footnotesize not replace};
          \node[state,minimum size=10pt,inner sep=0pt,double, double distance=1pt] (legterm) at (5.75,3.7) {};
\node[anchor=west] at (4.6,4.1) {\footnotesize terminal state};
        \end{tikzpicture}
    \end{subfigure}
    \hfill
    \begin{subfigure}[t]{0.48\linewidth}
        \includegraphics[width=\linewidth]{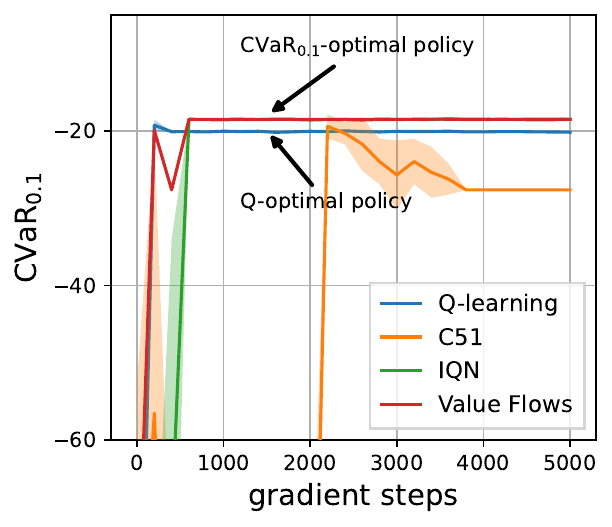}
    \end{subfigure}
    \caption{\textbf{The machine replacement MDP.} \figleft \, The machine replacement MDP consists of two actions \emph{replace} and \emph{not replace} with the reward distributed as Gaussians. This MDP is challenging because it requires exploration and risk-sensitive control. \figright \,
    \ours{} converges to the optimal $\text{CVaR}_{0.1}$ with a number of gradient steps similar to IQN, while C51 failed to converge within $5000$ gradient steps. The Q-learning immediately converges to the optimal expected return and is not able to learn a risk-sensitive policy.
    }
    \label{fig:machine-replacement}
\end{figure}

We now use a domain that requires exploration and risk-sensitive control to demonstrate the capability of Value Flows. Following prior work~\citep{keramati2020being}, we choose a variant of the Machine Replacement MDP (with only $11$ states) to conduct our experiments (Fig.~\ref{fig:machine-replacement} \figleft), measuring the $\text{CVaR}_{0.1}$ performance instead of the expected return. This MDP is especially challenging because of the chain structure, and it also requires risk-sensitive control due to the high variance in the reward distributions at the end of the chain. Therefore, it serves as a good evaluation task for distributional RL algorithms. Note that the optimal policy for expected return (Q-optimal policy) always chooses \emph{not replace}, taking the risk of high variance reward at $s_{9}$. In contrast, the optimal policy for maximizing $\text{CVaR}_{0.1}$ ($\text{CVaR}_{0.1}$-optimal policy) will choose to \emph{replace} at state $s_{9}$ to avoid the high variance reward.

Since the machine replacement MDP has a discrete action space, we remove the policy extraction procedure in \ours{} and use the learned vector fields to choose greedy actions directly, resembling the original C51~\citep{bellemare2017distributional} and IQN~\citep{dabney2018implicit} implementations. We use a random policy and uniformly initialize the start state to collect a dataset with $10^5$ transitions for offline training. We compare Value Flows against C51 and IQN and use $\text{CVaR}_{0.1}$ instead of the Q-value ($\text{CVaR}_{0.5}$) to optimize all methods. We compute the means and standard deviations of $\text{CVaR}_{0.1}$ over 4 random seeds. Results in Fig.~\ref{fig:machine-replacement} \figright~show that \ours{} converges to the optimal $\text{CVaR}_{0.1}$ with the number of gradient steps similar to IQN ($600$), while C51 failed to converge within $5000$ gradient steps. We also observe that Q-learning immediately converges to the optimal expected return and is not able to learn a risk-sensitive policy.

\subsection{Using Gaussian policies on D4RL adroit tasks}
\label{appendix:d4rl-adroit-gaussian-policy}

\begin{figure}[t]
    \centering
    \begin{minipage}[t]{0.49\textwidth}
        \centering
        \includegraphics[width=\linewidth]{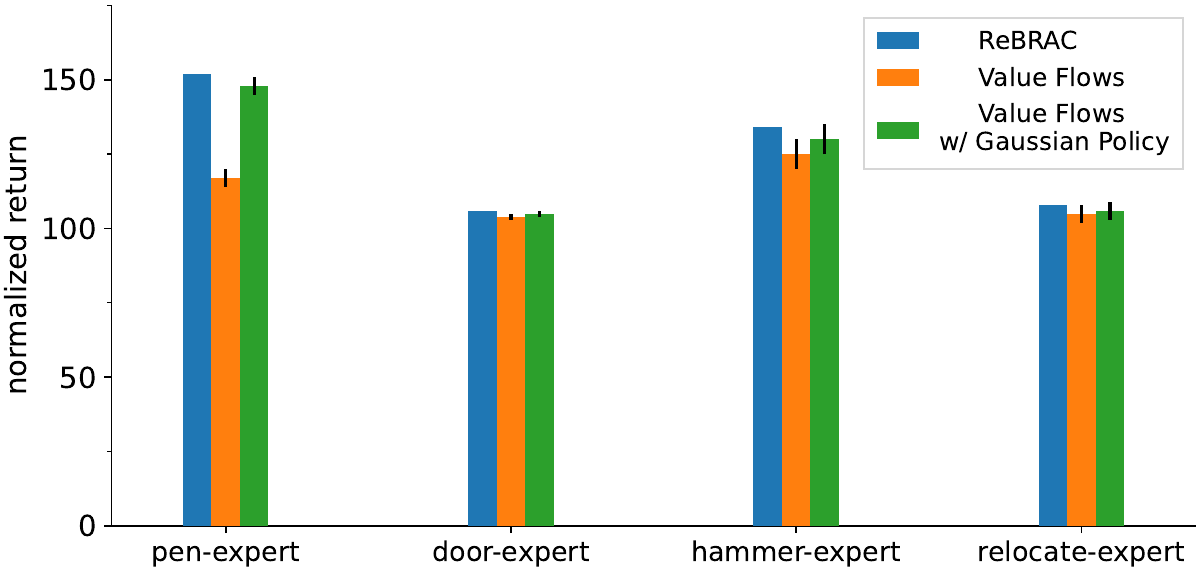}
        \caption{\footnotesize Using a Gaussian BC policy instead of a flow BC policy for rejection sampling improves the performance of \ours{} on D4RL adroit tasks by $10\%$, matching the state-of-the-art performance achieved by ReBRAC.
        }
        \label{fig:d4rl-adroit-gaussian-policy-ablation}
    \end{minipage}
    \hfill
    \begin{minipage}[t]{0.49\textwidth}
        \centering
        \includegraphics[width=\linewidth]{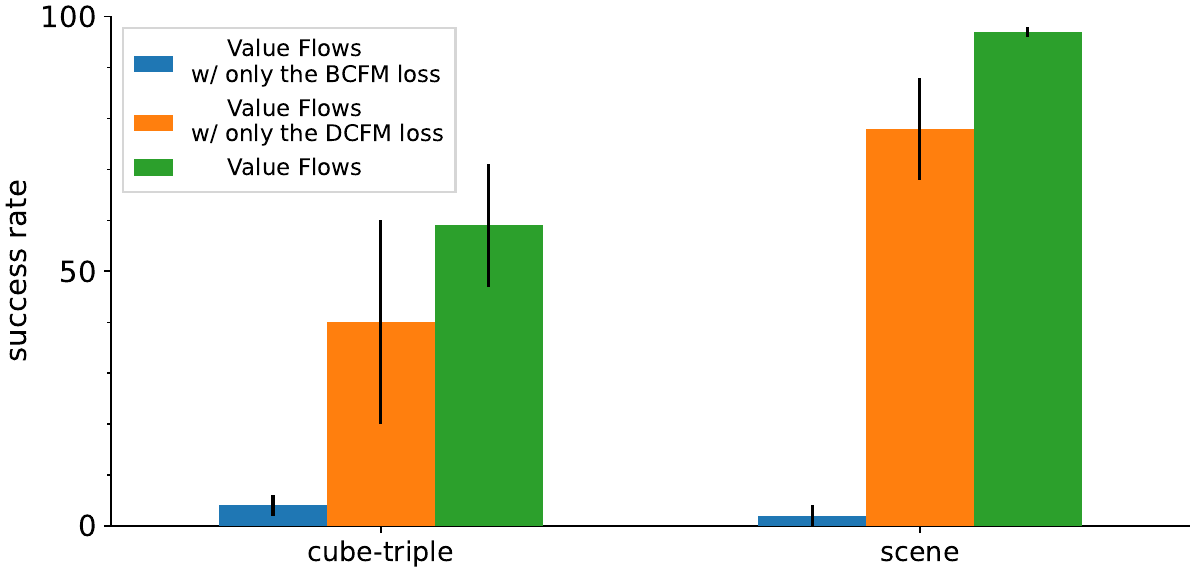}
        \caption{\footnotesize Training Value Flows only using the BCFM loss yields near-zero success rates. In contrast, minimizing the DCFM loss provides the most learning signal for \ours{}. 
        }
        \label{fig:bcfm-loss-only-ablation}
    \end{minipage}
\end{figure}

We hypothesize that flow-based policies generally performed worse than Gaussian policies on D4RL adroit tasks (see Table~\ref{tab:offline-eval-agg} and Table~\ref{tab:offline-eval}). This hypothesis is consistent with the experiment results in prior work, e.g., Table 2 of~\citet{park2025flow}. The D4RL adroit tasks consist of a very narrow dataset distribution: small amounts of human demonstrations, large amounts of expert data from a fine-tuned RL policy, and a mixture of human demonstrations and behavioral cloning trajectories~\citep{fu2021d4rldataset}. We conjecture that this narrow dataset distribution might impose difficulty for the expressive flow-matching policy to learn a multi-modal action distribution, resulting in sampling out-of-distribution actions during policy extraction.

To test this hypothesis, we ran additional ablations to study the effects of using Gaussian policies in \ours{} on D4RL adroit tasks. Instead of learning a flow BC policy as in~\aref{appendix:policy-extraction-strategies}, we learn a Gaussian BC policy and then apply rejection sampling. We choose to conduct experiments on the four \texttt{expert} variants of D4RL adroit tasks: \texttt{pen-expert-v1}, \texttt{door-expert-v1}, \texttt{hammer-expert-v1}, and \texttt{relocate-expert-v1}, aggregating results over 8 seeds. Results in Fig.~\ref{fig:d4rl-adroit-gaussian-policy-ablation} suggest that using Gaussian BC policies improves the mean performance of \ours{} by $10\%$, matching the state-of-the-art performance achieved by ReBRAC.

\subsection{Role of the bootstrapped conditional flow matching loss}
\label{appendix:role-of-BCFM}

In the complete \ours{} flow matching objective, we include a BCFM regularization loss $\gL_{\text{wBCFM}}$ to stabilize learning. This section studies whether the DCFM loss $\gL_{\text{wDCFM}}$ or the BCFM loss $\gL_{\text{wBCFM}}$ provides the most learning signal for \ours{}. We hypothesize that simply using the BCFM loss to train the vector field in \ours{} does not provide a sufficient learning signal to model the return distribution.

To test this hypothesis, we conduct ablation experiments comparing a variant of \ours{} trained only using the BCFM loss against the original \ours{}. We evaluate the success rates on two OGBench tasks \texttt{cube-triple-play-singletask-task1-v0} and \texttt{scene-play-singletask-task2-v0}, computing means and standard deviations of success rates after training over $8$ seeds. Results in Fig.~\ref{fig:bcfm-loss-only-ablation} suggest that training Value Flows only using the BCFM loss yields near-zero success rates on both tasks. In contrast, minimizing the DCFM loss provides the most learning signal for \ours{}. Thus, the BCFM loss mainly serves as a regularization.

\section{Experiment details}
\label{appendix:experiment-detail}

We implement \ours{} and all baselines using JAX~\citep{bradbury2018jax}, adapting the FQL~\citep{park2025flow} codebase. 
Our open-source implementations can be found at \href{https://github.com/chongyi-zheng/value-flows}{\texttt{https://github.com/chongyi-zheng/value-flows}}. 

\subsection{Environments}
\label{appendix:env} 

We use OGBench~\citep{park2025ogbenchbenchmark} and D4RL~\citep{fu2021d4rldataset} benchmarks to perform experimental evaluations. OGBench is a benchmark for offline goal-conditioned reinforcement learning, with single-task variants for standard reward-maximizing RL algorithms. We choose the single-task variants for all domains in Table~\ref{tab:offline-eval}. Following prior work~\citep{park2024foundation}, we use the default task from each domain to tune hyperparameters. For locomotion tasks, the reward is either $-1$ or $0$, where $0$ indicates task completion. For manipulation tasks, the reward varies between $-n_{\text{task}}$ and $0$, depending on how many subtasks are complete. The datasets were collected by scripted non-Markovian policies that randomly interact with different objects in the environment. However, the successful completion of each task requires the agent to sequentially arrange those objects in a specific order (sequential reasoning), which is unseen in the dataset. Hence, the agent must be able to generalize across trajectories in the dataset and plan and sequentially combine learned manipulation skills (combinatorial generalization)~\citep{park2025ogbenchbenchmark}. We use the following datasets from OGBench for each domain.

\begin{itemize}
    \item \texttt{cube-double-play-v0}
    \item \texttt{cube-triple-play-v0}
    \item \texttt{scene-play-v0}
    \item \texttt{puzzle-3x3-play-v0}
    \item \texttt{puzzle-4x4-play-v0}
    \item \texttt{visual-antmaze-medium-navigate-v0}
    \item \texttt{visual-antmaze-teleport-navigate-v0}
    \item \texttt{visual-scene-play-v0}
    \item \texttt{visual-puzzle-3x3-play-v0}
    \item \texttt{visual-cube-double-play-v0}
\end{itemize}

In our offline-to-online experiments, we chose to conduct experiments on the following task.

\begin{itemize}
    \item \texttt{antmaze-large-navigate-singletask-task1-v0}
    \item \texttt{humanoidmaze-medium-navigate-singletask-task1-v0}
    \item \texttt{cube-double-play-singletask-task2-v0}
    \item \texttt{cube-triple-play-singletask-task1-v0}
    \item \texttt{scene-play-singletask-task2-v0}
    \item \texttt{puzzle-4x4-play-singletask-task4-v0}
\end{itemize}

D4RL is a popular benchmark for studying offline RL. We measure the performance by the normalized returns following~\citet{fu2021d4rldataset}. We use the following tasks from Adroit, which require dexterous manipulation within a 24-DoF action space.
\begin{itemize}
    \item \texttt{pen-human-v1}
    \item \texttt{pen-cloned-v1}
    \item \texttt{pen-expert-v1}
    \item \texttt{door-human-v1}
    \item \texttt{door-cloned-v1}
    \item \texttt{door-expert-v1}
    \item \texttt{hammer-human-v1}
    \item \texttt{hammer-cloned-v1}
    \item \texttt{hammer-expert-v1}
    \item \texttt{relocate-human-v1}
    \item \texttt{relocate-cloned-v1}
    \item \texttt{relocate-expert-v1}
\end{itemize}
The \texttt{cube-double} and \texttt{cube-triple} tasks involve complex pick-and-place of colored cube blocks. The \texttt{scene} tasks involve long-horizon reasoning and interaction with various objects in the scene. The \texttt{puzzle-3x3} and \texttt{puzzle-4x4} tasks require solving the "Lights Out" puzzles using the robot arm and further test combinatorial generalization. The Adroit benchmarks involve controlling a 28-DoF hand to perform dexterous skills: spinning a pen, opening a door, relocating a ball, and using a hammer to knock a button. The \texttt{antmaze} and \texttt{humanoidmaze} tasks are designed to navigate a challenging maze by controlling an 8-DoF ant and a 21-DoF humanoid, respectively. The visual variants of these navigation tasks are challenging, as there is no low-dimensional state information, and the algorithm must directly learn from $64 \times 64 \times 3$ RGB images. We include challenging state-based and image-based variants for manipulation tasks as they require tackling multimodal returns and long-horizon reasoning. We also include some image-based navigation tasks to evaluate the algorithms.

\subsection{Methods for Comparison}
\label{appendix-baseline}

We compare \ours{} against 9 baselines, measuring the performance using success rates (OGBench) and discounted cumulative returns (D4RL). Specifically, we first compare to BC, IQL~\citep{kostrikov2021}, and ReBRAC~\citep{tarasov2023revisitingminimalist} as representative methods that learn scalar Q values with a Gaussian policy. The second set of baselines is state-of-the-art methods that learn scalar Q values with a flow policy (FBRAC, IFQL, FQL)~\citep{park2025flow}. We also include comparisons against prior distributional RL methods that model the return distribution as a categorical distribution (C51~\citep{bellemare2017distributional}) or a finite number of quantiles (IQN~\citep{dabney2018implicit}, and CODAC~\citep{ma2021conservative}). We use a flow policy for all these distributional RL baselines. Following prior work~\citep{park2025flow}, we report means and standard deviations over 8 random seeds for state-based tasks (4 seeds for image-based tasks). We describe the details of each method below.

\paragraph{BC.} Behavior cloning maximizes the log likelihood of the action at a given state. We train a Gaussian policy to maximize log likelihood with a $(512, 512, 512, 512)$ MLP as the backbone. 

\paragraph{IQL~\citep{kostrikov2021}.} Implicit Q-Learning is an offline RL method that uses expectile regression to learn the best action within the support of the behavioral dataset. Following prior work~\citep{park2025flow}, we learn a Gaussian policy via AWR and perform hyperparameter search over the AWR inverse temperatures $\alpha$ in $\{0.3, 1,3,10\}$ for each environment (Table~\ref{tab:hyperparameters}).

\paragraph{ReBRAC ~\citep{tarasov2023revisitingminimalist}.} ReBRAC is an offline actor-critic method based on TD3+BC~\citep{fujimoto2021minimalistapproach}. This algorithm implements several design choices, such as layer normalization and critic decoupling for better performance. We perform sweeps over the actor and critic BC coefficients. For the actor BC coefficient $\alpha_1$, we search over values in $\{0.003, 0.01, 0.03, 0.1, 0.3, 1\}$. For the critic BC coefficient, we search over values in $\{0, 0.001, 0.01, 0.1\}$. Results for different tasks can be found in Table~\ref{tab:hyperparameters}.

\paragraph{IFQL~\citep{park2025flow}.} IFQL is a variant of Implicit Diffusion Q-Learning (IDQL;~\citep{hansen2023idql}) with flow policy. IFQL uses the flow policy to propose candidates and uses rejection sampling to select actions as in \ours{}. We select the number of candidates in rejection sampling $N$ from $\{32, 64, 128\}$ (Table~\ref{tab:hyperparameters}). 

\paragraph{FBRAC~\citep{wu2019behaviorregularized}.} Following prior work~\citep{park2025flow}, we implement FBRAC as a variant of behavior-regularized actor-critic (BRAC) with flow policies. This method requires backpropagating the gradients through the ODE solver. We select the BC coefficient $\alpha$ from $\{1, 3, 10, 30, 100, 300\}$ (Table~\ref{tab:hyperparameters}). 

\paragraph{FQL~\citep{park2025flow}.} Flow Q-Learning uses a one-step flow policy to maximize the Q estimations learned by the standard TD error. It also incorporates a behavioral regularization term towards a BC flow policy (Eq.~\ref{eq:one-step-flow-policy-obj}). We consider the BC distillation coefficient $\alpha$ in $\{3, 10, 30, 100, 300, 1000\}$ (Table~\ref{tab:hyperparameters}). 

\paragraph{C51~\citep{bellemare2017distributional}.} C51 is a distributional RL method that discretizes the return distribution into a fixed number of bins and uses cross-entropy loss to update these categorical distributions. We tune the number of atoms $N$ in $\{51, 101\}$ (Table~\ref{tab:hyperparameters}). Because the original C51 algorithm aims to solve the Atari game with a \emph{discrete} action space, it greedily selects actions that maximize the Q-value estimated by the distributional returns (no explicit policy). However, since our evaluation benchmarks (\aref{appendix:env}) have continuous action spaces, we choose to learn an explicit flow BC policy and use rejection sampling to select actions (\aref{appendix:policy-extraction-strategies}), resembling the greedy action selection in a discrete action space. Using flow-based rejection sampling for C51 not only matches the policy extraction method of~\ours{}, but also prevents sampling out-of-distribution actions~\citep{chen2023offline}.

\paragraph{IQN~\citep{dabney2018implicit}.} IQN is a distributional RL method that approximates the return distribution by predicting quantile values at randomly sampled quantile fractions. We select the temperature $\kappa$ in the quantile regression loss from $\{0.7, 0.8, 0.9, 0.95\}$ (Table~\ref{tab:hyperparameters}). Similar to the adaptation in C51, we also learn an explicit flow BC policy and use rejection sampling (\aref{appendix:policy-extraction-strategies}) to select actions that maximize the Q estimated by IQN.

\paragraph{CODAC~\citep{ma2021conservative}.} CODAC combines the IQN critic loss with conservative constraints. Similar to~\citet{park2025flow}, we use a one-step flow policy to perform DDPG-style policy extraction (\aref{appendix:policy-extraction-strategies}). We fixed the conservative penalty coefficient to $0.1$. We sweep the temperature $\kappa$ in the quantile regression loss over $\{0.7, 0.8, 0.9, 0.95\}$ and sweep the BC coefficient $\alpha_1$ over $\{100, 300, 1000, 3000, 10000, 30000\}$ (Table~\ref{tab:hyperparameters}). 

\paragraph{\ours{}.} Our method, \ours{}, uses flexible flow-based models to estimate the full distribution of future returns and to identify epistemic uncertainty across different states using the learned generative flow models. We describe the components of the practical~\ours{} algorithm in~\aref{appendix:practical-flow-matching-loss}. In our experiments, we sweep the BCFM regularization coefficient $\lambda \in \{0.3, 0.5, 1, 3, 10 \}$ and the confidence weight temperature $\tau \in \{0.01, 0.03, 0.1, 0.3, 1, 3 \}$. In addition, the flow-based distillation strength $\alpha$ is the key hyperparameter for our offline-to-online experiments. We select the distillation strength $\alpha$ from $\{10, 30, 100, 300 \}$. See Table~\ref{tab:hyperparameters} for the specific values of the hyperparameter for each domain or task.

\subsection{Hyperparameters}
\label{appendix-hyperparameters} 

\begin{table}[t]
\caption{\footnotesize Common hyperparameters for \ours{} and baselines.}
\label{tab:common-hyperparameters}
\begin{center}
\scalebox{0.85}{
\centering
\begin{tabular}{ll}
\toprule
\textbf{Hyperparameter} & \textbf{Value} \\
\midrule
   optimizer  & Adam~\citep{kingma2014adam} \\
   batch size  & $256$ \\
   learning rate & $3 \times 10^{-4}$\\
   MLP hidden layer sizes & $(512, 512, 512, 512)$  \\
   MLP activation function  & GELU~\citep{hendrycks2016gaussian} \\
   use actor layer normalization & Yes \\
   use value layer normalization & Yes \\ 
   number of flow steps in the Euler method & $10$ \\
   number of candidates in rejection sampling & 16 \\
   target network update coefficient & $5 \times 10^{-3}$ \\
   number of Q ensembles & $2$ \\
   image encoder & small IMPALA encoder~\citep{espeholt2018impala, park2025flow} \\
   image augmentation method & random cropping \\
   image frame stack & $3$ \\
\bottomrule
\end{tabular}
}
\end{center}
\end{table}

\begin{table}[t]
\caption{\footnotesize \textbf{Domain specific hyperparameters for \ours{} and baselines. } The complete descriptions of each hyperparameter can be found in~\aref{appendix-baseline}. Following prior work~\citet{park2025flow}, we tune these hyperparameters for each domain from OGBench benchmarks on the default (``$*$'') task in Table~\ref{tab:offline-eval}. ``-'' indicates the values do not exist.\protect\footnotemark }
\label{tab:hyperparameters}
\begin{center}
\setlength{\tabcolsep}{4pt}
\scalebox{0.65}
{
    \centering
    \begin{tabular}{lcccccccccccccc}
    \toprule
    & Discount & IQL & \multicolumn{2}{c}{ReBRAC} & FBRAC & IFQL & FQL & C51 & IQN & \multicolumn{2}{c}{CODAC} & \multicolumn{3}{c}{\ours{}} \\
    \cmidrule(lr){2-2} \cmidrule(lr){3-3} \cmidrule(lr){4-5} \cmidrule(lr){6-6} \cmidrule(lr){7-7} \cmidrule(lr){8-8} \cmidrule(lr){9-9} \cmidrule(lr){10-10} \cmidrule(lr){11-12} \cmidrule(lr){13-15}
    Domain or task & $\gamma$ & $\alpha$ & $\alpha_1$ & $\alpha_2$ & $\alpha$ & $N$ & $\alpha$ & $N$ & $\kappa$ & $\kappa$ & $\alpha$ & $\lambda$ & $\tau$ & $\alpha$ \\
    \midrule
    \texttt{antmaze-large-navigate} & $0.99$ & $10$ & - & - & - & $32$ & $10$ & - & $0.7$ & - & - & $10$ & $0.01$ & $30$ \\
    \texttt{humanoidmaze-medium-navigate} & $0.995$ & $10$ & - & - & - & $32$ & $30$ & - & $0.7$ & - & - & $3$ & $0.3$ & $100$ \\
    \texttt{cube-double-play} & $0.995$ & $0.3$ & $0.1$ & $0$ & $100$ & $32$ & $100$ & $101$ & $0.9$ & $0.95$ & $300$ & $1$ & $3$ & $300$ \\
    \texttt{cube-triple-play} & $0.995$ & $10$ & $0.03$ & $0$ & $100$ & $32$ & $300$ & $51$ & $0.8$ & $0.95$ & $1000$ & $3$ & $0.03$ & $300$ \\
    \texttt{puzzle-3x3-play} & $0.99$ & $10$ & $0.3$ & $0.001$ & $100$ & $32$ & $1000$ & $101$ & $0.8$ & $0.95$ & $100$ & $0.5$ & $0.3$ & - \\
    \texttt{puzzle-4x4-play} & $0.99$ & $3$ & $0.3$ & $0.01$ & $300$ & $32$ & $1000$ & $101$ & $0.95$ & $0.95$ & $1000$ & $3$ & $100$ & $300$ \\
    \texttt{scene-play} & $0.99$ & $10$ & $0.1$ & $0.001$ & $100$ & $32$ & $300$ & $51$ & $0.95$ & $0.95$ & $100$ & $1$ & $0.3$ & $300$ \\
    \midrule
    \texttt{visual-antmaze-medium-navigate} & $0.99$ & $1$ & $0.01$ & $0.003$ & $100$ & $32$ & $100$ & - & $0.9$ & - & - & $0.3$ & $0.03$ & - \\
    \texttt{visual-antmaze-teleport-navigate} & $0.99$ & $1$ & $0.01$ & $0.003$ & $100$ & $32$ & $100$ & - & 0.8 & - & - & $0.3$ & $0.03$ & - \\
    \texttt{visual-cube-double-play} & $0.995$ & $0.3$ & $0.1$ & $0$ & $100$ & $32$ & $100$ & - & $0.9$ & - & - & $1$ & $0.3$ & - \\
    \texttt{visual-puzzle-3x3-play} & $0.99$ & $10$ & $0.3$ & $0.01$ & $100$ & $32$ & $300$ & - & $0.8$ & - & - & $0.3$ & $0.3$ & - \\
    \texttt{visual-scene-play} & $0.99$ & $10$ & $0.1$ & $0.003$ & $100$ & $32$ & $100$ & - & $0.95$ & - & - & $1$ & $0.3$ & - \\
    \midrule
    \texttt{pen-human} & $0.99$ & - & - & - & $30000$ & $32$ & $10000$ & $51$ & $0.8$ & $0.8$ & $10000$ & $3$ & $1$ & - \\
    \texttt{pen-cloned} & $0.99$ & - & - & - & $10000$ & $32$ & $10000$ & $51$ & $0.8$ & $0.8$ & $10000$ & $3$ & $1$ & - \\
    \texttt{pen-expert} & $0.99$ & - & - & - & $30000$ & $32$ & $3000$ & $51$ & $0.8$ & $0.8$ & $10000$ & $3$ & $0.01$ & - \\
    \texttt{door-human} & $0.99$ & - & - & - & $30000$ & $32$ & $30000$ & $101$ & $0.9$ & $0.9$ & $10000$ & $3$ & $0.01$ & - \\
    \texttt{door-cloned} & $0.99$ & - & - & - & $10000$ & $128$ & $30000$ & $51$ & $0.9$ & $0.9$ & $30000$ & $10$ & $0.3$ & - \\
    \texttt{door-expert} & $0.99$ & - & - & - & $30000$ & $32$ & $30000$ & $51$ & $0.9$ & $0.9$ & $10000$ & $10$ & $0.3$ & - \\
    \texttt{hammer-human} & $0.99$ & - & - & - & $30000$ & $32$ & $30000$ & $51$ & $0.7$ & $0.8$ & $30000$ & $3$ & $0.3$ & - \\
    \texttt{hammer-cloned} & $0.99$ & - & - & - & $10000$ & $32$ & $10000$ & $51$ & $0.7$ & $0.8$ & $10000$ & $3$ & $0.3$ & - \\
    \texttt{hammer-expert} & $0.99$ & - & - & - & $30000$ & $32$ & $30000$ & $51$ & $0.9$ & $0.8$ & $10000$ & $10$ & $1$ & - \\
    \texttt{relocate-human} & $0.99$ & - & - & - & $30000$ & $128$ & $10000$ & $101$ & $0.9$ & $0.9$ & $30000$ & $10$ & $0.01$ & - \\
    \texttt{relocate-cloned} & $0.99$ & - & - & - & $3000$ & $32$ & $30000$ & $51$ & $0.9$ & $0.9$ & $30000$ & $3$ & $0.01$ & - \\
    \texttt{relocate-expert} & $0.99$ & - & - & - & $30000$ & $32$ & $30000$ & $101$ & $0.9$ & $0.9$ & $10000$ & $3$ & $0.1$ & - \\
    \bottomrule
    \end{tabular}
}
\end{center}
\end{table}

We include common hyperparameters for \ours{} and baselines in Table~\ref{tab:common-hyperparameters} and domain-specific hyperparameters for different methods in Table~\ref{tab:hyperparameters}.

\footnotetext{The hyperparameters for \texttt{antmaze-large-navigate} and
\texttt{humanoidmaze-medium-navigate} are used for offline-to-online experiments.}

\end{document}